\algnewcommand{\algorithmicforeach}{\textbf{for each}}
\title{DP-SGD Without Clipping:\\ The Lipschitz Neural Network Way}
\author{
\and
Louis B\'ethune\thanks{Equal contribution.}~\thanks{IRIT, Universit\'e Paul Sabatier, Toulouse.}\\
\and
Thomas Mass\'ena\footnotemark[1]~\thanks{IRT Saint Exup\'ery, Toulouse.}\\
\and
Thibaut Boissin\footnotemark[1]~\footnotemark[3]\\
\and
Aur\'elien Bellet \thanks{Inria, Universit\'e de Montpellier.}\\
\and
Franck Mamalet \footnotemark[3]\\
\and
Yannick Prudent \footnotemark[3]\\
\and
Corentin Friedrich \footnotemark[3]\\
\and
Mathieu Serrurier \footnotemark[2]\\
\and
David Vigouroux \footnotemark[3]\\
}
\begin{document}

\maketitle

\if\enablecommments
    \begin{figure}[t]
        \centering
        \includegraphics[width=\linewidth]{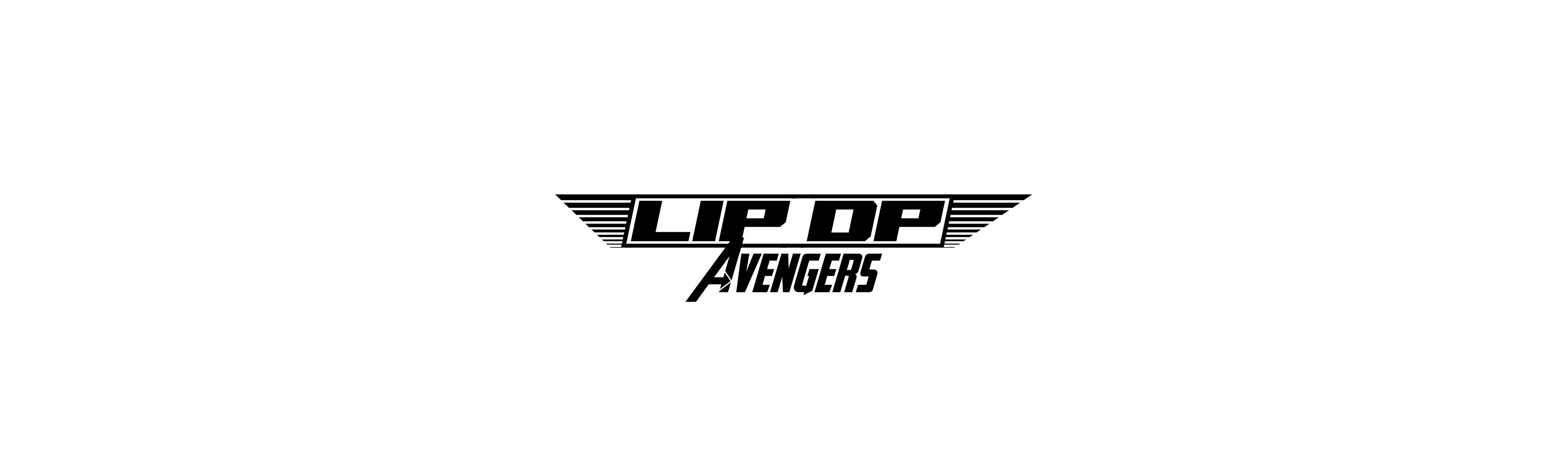}
    \end{figure}
\fi

\begin{abstract}
State-of-the-art approaches for training Differentially Private (DP) Deep Neural Networks (DNN) face difficulties to estimate tight bounds on the sensitivity of the network's layers, and instead rely on a process of per-sample gradient clipping. This clipping process not only biases the direction of gradients but also proves costly both in memory consumption and in computation. To provide sensitivity bounds and bypass the drawbacks of the clipping process, we propose to rely on Lipschitz constrained networks. Our theoretical analysis reveals an unexplored link between the Lipschitz constant with respect to their input and the one with respect to their parameters. By bounding the Lipschitz constant of each layer with respect to its parameters, we prove that we can train these networks with privacy guarantees.  Our analysis not only allows the computation of the aforementioned sensitivities at scale, but also provides guidance on how to maximize the gradient-to-noise ratio for fixed privacy guarantees. The code has been released as a Python package available at \url{https://github.com/Algue-Rythme/lip-dp}
\end{abstract}

\section{Introduction}

The field of differential privacy seeks to enable deep learning on sensitive data, while ensuring that models do not inadvertently memorize or reveal specific details about individual samples in their weights. This involves incorporating privacy-preserving mechanisms into the design of deep learning architectures and training algorithms, whose most popular example is Differentially Private Stochastic Gradient Descent (DP-SGD)~\citep{abadi2016deep}. One main drawback of classical DP-SGD methods is that they require costly per-sample backward processing and gradient clipping.  In this paper, we offer a new method that unlocks fast differentially private training through the use of Lipschitz constrained neural networks. 

\textbf{Differential privacy.}
    Informally, differential privacy (DP) is a \emph{definition} that bounds how much the change of a single sample in a dataset affects the range of a stochastic function (here, the training algorithm). This definition is largely accepted as a strong guarantee against privacy leakages under various scenarii, including data aggregation or post-processing~\citep{dwork2006calibrating}.  

    In this paper we focus on supervised learning tasks: we assume that the dataset $\Dataset$ is a finite collection of input/label pairs $\Dataset=\{(x_1,y_1),\ldots... (x_N,y_N)\}$.  
    The definition of DP relies on the notion of neighboring datasets, i.e datasets that vary by at most one example.
    We highlight below the central tools related to the field, inspired from~\cite{dwork2014algorithmic}.
    
    \begin{definition}[$(\epsilon,\delta)$-Approximate Differential Privacy]\label{def:dp}
        Two datasets $\Dataset$ and $\Dataset'$ are said to be neighboring for the ``add/remove-one'' relation if they differ by exactly one sample: $|\Dataset'\ominus\Dataset|=1$ where $\ominus$ denotes the symmetric difference between sets. Let $\epsilon$ and $\delta$ be two non-negative scalars. An algorithm $\mathcal{A}$ is \DP if for any two neighboring datasets $\Dataset$ and $\Dataset'$, and for any $S \subseteq \text{range}(\mathcal{A})$:
        \begin{equation}\Prob[\mathcal{A}(\Dataset) \in S] \leq e^{\epsilon} \times \Prob[\mathcal{A}(\Dataset') \in S] + \delta.
        \end{equation}
    \end{definition}
    
    A popular rule of thumb suggests using $\epsilon \leq 10$ and $\delta < \frac{1}{N}$ with $N$ the number of records~\citep{ponomareva2023dp} for mild guarantees. In practice, most classic algorithmic procedures (called \emph{queries} in this context) do not readily fulfill the definition for useful values of $(\epsilon,\delta)$: in particular, randomization is mandatory.  A general recipe to make a query differentially private is to compute its \emph{sensitivity} $\Delta$, and to perturb its output by adding a Gaussian noise of predefined variance~$\zeta^2=\Delta^2\sigma^2$, where the $(\epsilon,\delta)$ guarantees depend on $\sigma$, yielding what is called a \emph{Gaussian mechanism}~\citep{dwork2006calibrating}.
    \begin{definition}[$l_2$-sensitivity]\label{def:l2sensitivity}
        Let $\M$ be a query mapping from the space of the datasets to $\Reals^p$. Let $\mathcal{N}$ be the set of all possible pairs of neighboring datasets $\Dataset,\Dataset'$. The $l_2$ sensitivity of $\M$ is defined by: 
        \begin{equation}
            \sntv(\M) = \underset{\Dataset,\Dataset' \in \mathcal{N}}{\sup} \|\M(D) - \M(D')\|_2.
        \end{equation}
    \end{definition}
    \vspace{-2mm}
    This randomization comes at the expense of ``utility'', i.e the usefulness of the output for downstream tasks~\citep{alvim2012differential}. The goal is then to strike a balance between privacy and utility, ensuring that the released information remains useful and informative for the intended purpose while minimizing the risk of privacy breaches. The privacy/utility trade-off yields a Pareto front, materialized by plotting $\epsilon$ 
    against a measurement of utility, such as validation accuracy for a classification task.    

\begin{figure}
    \centering
\begin{lstlisting}[language=Python,
basicstyle=\ttfamily\footnotesize,
stringstyle=\color{red}\ttfamily,
commentstyle=\color{olive}\ttfamily,
showstringspaces=false,
breaklines=true,
frame=lines,
classoffset=1,
morekeywords={compile, fit},
keywordstyle=\color{purple}\bfseries,
classoffset=2,
morekeywords={DP_Sequential, DP_BoundedInput, DP_SpectralConv2D,
DP_GroupSort, DP_ScaledL2NormPooling2D,
DP_LayerCentering, DP_Flatten, DP_SpectralDense,
DP_ClipGradient, MulticlassHinge, Adam, DP_Accountant},
keywordstyle=\color{blue}\bfseries,
classoffset=3,
morekeywords={input_shape, upper_bound, filters, kernel_size, strides, pool_size, loss, optimizer, epochs, batch_size, validation_data, callbacks},
keywordstyle=\color{gray}\bfseries,]
model = DP_Sequential([ # step 1: use DP_Sequential to build a model
        # step 2: add Lipschitz layers of known sensitivity
        DP_BoundedInput(input_shape=(28, 28, 1), upper_bound=20.),
        DP_SpectralConv2D(filters=16, kernel_size=3, use_bias=False),
        DP_GroupSort(2),
        DP_Flatten(),
        DP_SpectralDense(1)],
    dp_parameters=dp_parameters,
    dataset_metadata=dataset_metadata,
) # step 4: compile the model, and choose any first order optimizer
model.compile(loss=DP_TauBCE(tau=20.), optimizer=Adam(1e-3)) 
model.fit( # step 5: train the model and measure the DP guarantees
    train_dataset, validation_data=val_dataset,
    epochs=num_epochs, callbacks=[DP_Accountant()]
)
\end{lstlisting}
    \caption{\textbf{An example of usage of our framework} \lstinline[basicstyle=\ttfamily, backgroundcolor=\color{gray!20}]{lip-dp}, illustrating how to create a small {Lipschitz~VGG} and how to train it under \DP guarantees while reporting $(\epsilon, \delta)$ values.}
    \label{fig:pseusocode}
\end{figure}

\textbf{Differentially Private SGD.} The SGD algorithm consists of a sequence of queries that (i) take the dataset in input, sample a minibatch from it, and return the gradient of the loss evaluated on the minibatch, before (ii) performing a descent step following the gradient direction. In ``add-remove'' neighboring relations, if the gradients are bounded by $K>0$, the sensitivity of the gradients averaged on a minibatch of size $b$ is $\sntv=K/b$. DP-SGD~\citep{abadi2016deep} makes each of these queries private by resorting to the Gaussian mechanism. Crucially, the algorithm requires a bound on gradient norms $\|\nabla_{\theta}\Loss(\yhat,y)\|_2\leq C$. This upper bound on gradient norms is generally unknown in advance, which leads practitioners to clip it to $C>0$, in order to bound the sensitivity manually. Unfortunately, this creates a number of issues: \textbf{1.}~Hyper-parameter search on the broad-range clipping value $C$ is required to train models with good privacy/utility trade-offs~\citep{papernot2022hyperparameter}, \textbf{2.}~The computation of per-sample gradients is expensive: DP-SGD is usually slower and consumes more memory than vanilla SGD, in particular
    for the large batch sizes often used in private training~\citep{lee2021scaling}, \textbf{3.}~Clipping the per-sample gradients biases their average~\citep{chen2020understanding}. This is problematic as the average direction is mainly driven by misclassified examples. 

\textbf{An unexplored approach: Lipschitz constrained networks.}
    To avoid these issues, we propose to train neural networks for which the parameter-wise gradients are provably and analytically bounded during the whole training procedure, in order to get rid of the clipping process. This allows for efficient training of models without the need for tedious hyper-parameter optimization. The main reason why this approach has not been experimented much in the past is that upper bounding the gradient of neural networks is often intractable. However, by leveraging the literature on Lipschitz constrained networks introduced by~\cite{anil2019sorting}, we show that these networks have computable bounds for their gradient's norm. This yields tight bounds on the sensitivity of SGD steps, making their transformation into Gaussian mechanisms inexpensive - hence the name \textbf{Clipless DP-SGD}. \rebuttal{Thus, our approach is qualitatively different from methods that replace clipping with re-normalization~\citep{bu2022automatic, yang2022normalized}, or from those that aim to reduce its time and memory footprint~\citep{li2021large,bu2022scalable,he2022exploring,bu2023differentially}. A comparison is made in Appendix~\ref{sec:comparisonsota}.}

    The Lipschitz constant of a function and the norm of its gradient  are two sides of the same coin. The function $f:\Reals^m\rightarrow\Reals^n$ is said $\ell$-Lipschitz for $l_2$ norm if for every $x,y\in\Reals^m$ we have $\|f(x)-f(y)\|_2 \leq \ell\|x-y\|_2$. Per Rademacher's theorem~\cite{simon1983lectures}, its gradient is bounded: $\|\nabla_x f\|\leq \ell$. Reciprocally, continuous functions with gradient bounded by $\ell$ are $\ell$-Lipschitz.  

    \rebuttal{A feedforward neural network of depth $D$, with input space $\X\subset\Reals^n$, output space $\Y\subset\Reals^K$ (e.g logits), and parameter space $\Theta\subset \Reals^p$, is a parameterized function $f:\Theta\times\X\rightarrow\Y$ defined by the sequential composition of layers $f_1\ldots f_D$:
    \begin{align}
        f(\theta, x) \defeq \left(f_D(\theta_D) \circ \ldots \circ f_2(\theta_2) \circ f_1(\theta_1)\right)(x).
    \end{align}
    The parameters of the layers are denoted by $\theta=(\theta_d)_{1\leq d\leq D}\in\Theta$. For affine layers, it corresponds to bias and weight matrix $\theta_d=(W_d,b_d)$. For activation functions, there are no parameters: $\theta_d=\varnothing$. Feedforward networks include notably Fully Connected networks, CNN, Resnets, or MLP mixers.  

    \begin{definition}[Lipschitz feed-forward neural network]\label{def:feedforwardlipnet}
    Lipschitz networks are feedforward networks, with the additional constraint that each layer~${x_d\mapsto f_d(\theta_d,x_d) \defeq y_d}$ is $\ell_d$-Lipschitz for all $\theta_d$. Consequently, the function ${x\mapsto f(\theta, x)}$ is $\ell$-Lipschitz with~${\ell=\ell_1\times\ldots\times \ell_D}$ for all $\theta\in\Theta$.
    \end{definition}}  


    The literature has predominantly focused on investigating the control of Lipschitzness with respect to the inputs (i.e bounding $\nabla_x f$), primarily motivated by concerns of robustness~\citep{szegedy2013,li2019preventing,fazlyab2019efficient}, or improved generalization~\citep{bartlett2017spectrally, bethune2022pay}. However, in this work, we will demonstrate that it is also possible to control Lipschitzness with respect to parameters (i.e bounding $\nabla_{\theta} f$), which is essential for ensuring privacy. Our first contribution will point out the tight link that exists between those two quantities. The closest work to ours is~\cite{shavit2019exploring}, where a Lipschitz network is used as a general function approximator of bounded sensitivity $\|\nabla_x f(\theta,x)\|_2$, but to this day the sensitivity of the gradient query $x\mapsto \nabla_{\theta}f(\theta,x)$ itself remains largely unexplored. The idea of using automatic differentiation to compute sensitivity bounds has also been discussed in~\cite{ziller2021sensitivity} and~\cite{usynin2021automatic}.  
    
\textbf{Contributions}.  While the properties of Lipschitz constrained networks regarding their inputs are well explored, the properties with respect to their parameters remain non-trivial. This work provides a first step to fill this gap: our analysis shows that under appropriate architectural constraints, a $l$-Lipschitz network has a tractable, finite Lipschitz constant with respect to its parameters, which allows for easy estimation of the sensitivity of the gradient computation queries. Our contributions are the following:
    \begin{compactenum} 
        \item We extend the field of applications of Lipschitz constrained neural networks. We extend the framework to \textbf{compute the Lipschitzness with respect to the parameters}. This general framework allows to track \textit{layer-wise} sensitivities that depends on the loss and the model's structure. This is exposed in Section~\ref{sec:framework}. We show that SGD training of deep neural networks can be achieved \textbf{without per-sample gradient clipping} using Lipschitz constrained layers.
        \item We establish connections  between Gradient Norm Preserving (GNP) networks and improved privacy/utility trade-offs (Section~\ref{sec:gnp}). To the best of our knowledge, \textbf{we are the first ones to produce neural networks benefiting from both Lipschitz-based robustness certificates and privacy guarantees}.
        \item Finally, a \textbf{Python package} companions the project, with pre-computed Lipschitz constants for each loss and each layer type. This is exposed in Section~\ref{sec:lipdp}. It covers widely used architectures, including VGG, ResNets or MLP Mixers. Our package enables the use of larger networks and larger batch sizes, as illustrated by our experiments in Section~\ref{sec:experiments}.
    \end{compactenum}

\section{Clipless DP-SGD with $\ell$-Lipschitz networks}\label{sec:framework}

   \begin{figure}
        \centering
        \includegraphics[width=\linewidth]{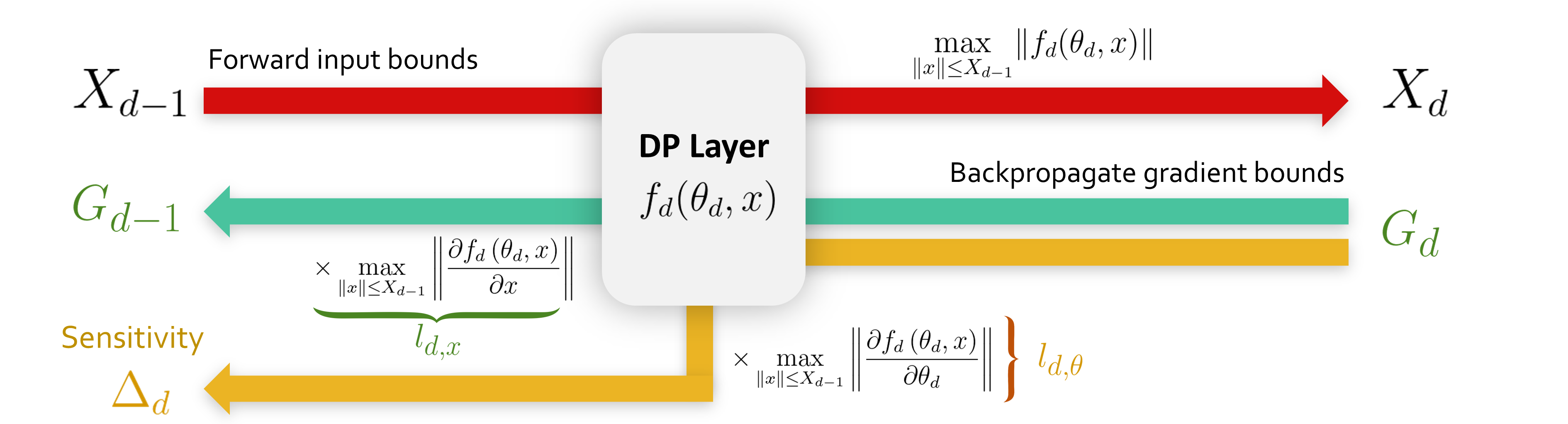}
        \caption{\textbf{Backpropagation for bounds} (Algorithm~\ref{alg:gnbc}) computes the per-layer sensitivity~$\sntv_d$.}
        \label{fig:backprop}
    \end{figure}
    
    Our framework relies on the computation of the maximum gradient norm of a network w.r.t its parameters to obtain a \textit{per-layer} sensitivity $\Delta_d$. It is based on the recursive formulation of the chain rule involved in backpropagation and requires some natural assumptions that we highlight below.
    
    \begin{requirement}[Lipschitz loss.]\label{req:boundednablal} The loss function $\yhat\mapsto\Loss(\yhat,y)$ must be $L$-Lipschitz with respect to the logits $\yhat$ for all ground truths $y\in\Y$. This is notably the case of Categorical Softmax-Crossentropy.   
    \end{requirement}
    
    The Lipschitz constants of common supervised losses has been computed and reported in the appendix.
    
    \begin{requirement}[Bounded input]\label{req:boundedx}
        There exists $X_0>0$ such that for all $x\in\X$ we have $\|x\|\leq X_0$.  
    \end{requirement}
    
    There exist numerous approaches for the parametrization of Lipschitz networks, such as differentiable re-parametrization (aka ``trivialization'') like~\cite{miyato2018spectral,anil2019sorting}, optimization over matrix manifolds~\citep{absil2008optimization}, direct parametrization~\citep{meunier2022dynamical,wang2023direct}, or projections~\citep{arjovsky2017wasserstein}. We can distinguish two strategies:
    \begin{compactenum}
        \item With a differentiable reparametrization $\Pi:\Reals^p\rightarrow\Theta$ where $\tilde\theta=\Pi(\theta)$: the weights $\tilde\theta$ are used during the forward pass, but the gradients are back-propagated to $\theta$ through $\Pi$. This turns the training into an unconstrained optimization problem on the landscape of $\Loss\circ f\circ \Pi$. \label{enum:reparam}
        \item With a suitable projection operator $\Pi:\Reals^p\rightarrow\Theta$: this is the celebrated Projected Gradient Descent (PGD) algorithm~\citep{bubeck2015convex} applied on the landscape of $\Loss\circ f$. \label{enum:pgd}  
    \end{compactenum}
    Option~\ref{enum:reparam} requires the analysis of the Lipschitz constant of $\Pi$. If $\Theta$ is convex, then $\Pi$ is 1-Lipschitz w.r.t the projection norm, otherwise the Lipschitz constant is generally unknown. For simplicity, option~\ref{enum:pgd} will be the focus of our work.  

    \begin{requirement}[Lipschitz projection]\label{req:pgd}
        The Lipschitz constraints must be enforced with a projection operator $\Pi:\Reals^p\rightarrow\Theta$. This corresponds to Tensorflow \lstinline[basicstyle=\ttfamily, backgroundcolor=\color{gray!20}]{constraints} and Pytorch \lstinline[basicstyle=\ttfamily, backgroundcolor=\color{gray!20}]{hooks}. Projection is a post-processing~\citep{dwork2006calibrating} of private data: it induces no privacy leakage. 
    \end{requirement}

To compute the per-layer sensitivities, our framework mimics the backpropagation algorithm, where \textit{Vector-Jacobian} products (VJP) are replaced by \textit{Scalar-Scalar} products of element-wise bounds. For an arbitrary layer $x_d\mapsto f_d(\theta_d,x_d) \defeq y_d$ the operation is sketched below, as a simple consequence of Cauchy-Schwartz inequality:
\begin{align}
    \underbrace{\nabla_{x_d}\Loss\defeq(\nabla_{y_d}\Loss)\Jac{f_d}{x_d}}_{\text{Vector-Jacobian product: backpropagate gradients}} \implies  \underbrace{\|\nabla_{x_d}\Loss\|_2\leq \|\nabla_{y_d}\Loss\|_2\times \textcolor{backwardcolor}{\left\|\Jac{f_d}{x_d}\right\|_2}.}_{\text{Scalar-Scalar product: backpropagate bounds}}
\end{align}
The notation $\|\cdot\|_2$ must be understood as the spectral norm for Jacobian matrices, and the Euclidean norm for gradient vectors. The scalar-scalar product is inexpensive. For Lipschitz layers, the spectral norm of the Jacobian $\|\Jac{f}{x}\|$ is kept constant during training with projection operator $\Pi$. 
The bound of the gradient with respect to the parameters takes a simple form:
\begin{equation}\label{eq:boundparams}
    \|\nabla_{\theta_d}\Loss\|_2\leq\|\nabla_{y_d}\Loss\|_2\times \textcolor{paramcolor}{\left\|\Jac{f_d}{\theta_d}\right\|_2}.
\end{equation}
This term can be analytically bounded, as exposed in the following section.

\subsection{Backpropagation for bounds}

    The pseudo-code of \textbf{Clipless DP-SGD} is sketched in Algorithm~\ref{alg:noclipDP-SGDlocal}. The algorithm avoids per-sample clipping by computing a \textit{per-layer} bound on the element-wise gradient norm. The computation of this \textit{per-layer} bound is described by Algorithm~\ref{alg:gnbc} (graphically explained in Figure~\ref{fig:backprop}). Crucially, it requires to compute the spectral norm of the Jacobian of each layer with respect to input and parameters. 

    \textbf{Input bound propagation (line~\ref{alg:gnbc:anb}).} We compute $X_d=\textcolor{forwardcolor}{\max_{\|x\|\leq X_{d-1}}\|f_d(x)\|_2}$. For activation functions it depends on their range. For linear layers, it depends on the spectral norm of the operator itself. This quantity can be computed through the SVD or Power Iteration~\citep{trefethen2022numerical,miyato2018spectral}, and constrained during training using projection operator $\Pi$. In particular, it covers the case of convolutions, for which tight bounds are known~\citep{singla2021fantastic}. For affine layers, it additionally depends on the magnitude of the bias~$\|b_d\|$.

    \begin{remark}[Tighter bounds in literature]\label{remark:forwardbounds}
        Exact estimation of the Lipschitz bound is notoriously an NP-hard problem, as proven by~\citep{virmaux2018lipschitz}. Algorithm~\ref{alg:noclipDP-SGDlocal} can be seen as an extension of their \textit{AutoLip} algorithm to the case of parameters, while their work focused on Lipschitness with respect to the input. Although libraries such as Decomon~\citep{airbus} or \mbox{auto-LiRPA}~\citep{xu2020automatic} provide tighter bounds for $X_d$ via linear relaxations~\citep{singh2019abstract,zhang2018efficient}, our approach is capable of delivering practically tighter bounds than worst-case scenarios thanks to the projection operator $\Pi$, while also being significantly less computationally expensive.  Moreover, hybridizing our method with scalable certification methods can be a path for future extensions.
    \end{remark} 
    
      
    \textbf{Computing maximum gradient norm (line~\ref{alg:gnbc:gnb}).} We now present how to bound the Jacobian $\Jac{f_d(\theta_d,x)}{\theta_d}$. In neural networks, the parameterized layers $f(\theta,x)$ (fully connected, convolutions) are bilinear operators. Hence, we typically obtain bounds of the form:
    \begin{equation} \label{eq:max_grad_norm}
    \textcolor{paramcolor}{\left\|\Jac{f_d(\theta_d,x)}{\theta_d}\right\|_2}\leq K(f_d,\theta_d)\|x\|_2\leq K(f_d,\theta_d)\textcolor{forwardcolor}{X_{d-1}},
    \end{equation}
    where $K(f_d,\theta_d)$ is a constant that depends on the nature of the operator. $X_{d-1}$ is obtained in line~\ref{alg:gnbc:anb} with input bound propagation. Values of $K(f_d,\theta_d)$ for popular layers are reported in the appendix.
    \paragraph{Backpropagate cotangent vector bounds (line~\ref{alg:gnbc:lip}).} Finally, we bound the Jacobian \textcolor{backwardcolor}{$\Jac{f_d(\theta_d,x)}{x}$}. For activation functions, this value can be hard-coded, while for affine layers it is the spectral norm of the linear operator. Like before, this value is enforced by the projection operator~$\Pi$.
    
    \begin{algorithm}
    \caption{\textbf{Backpropagation for Bounds}$(f, X)$}
    \label{alg:gnbc}
    \textbf{Input}: Feed-forward architecture $f(\theta,\cdot)=f_{D}(\theta_{D},\cdot)\circ \ldots \circ f_1(\theta_1,\cdot)$\\
    \textbf{Input}: Weights $\theta=(\theta_1,\theta_2,\ldots \theta_{D})$, input bound $X_0$
    \begin{algorithmic}[1] 
    \For{\textbf{all} layers $1\leq d\leq D$}
        \State $X_d \gets \textcolor{forwardcolor}{\underset{\|x\|\leq X_{d-1}}{\max}\|f_d(\theta_d,x)\|_2}.$ \Comment{Input bounds propagation} \label{alg:gnbc:anb}
    \EndFor
    \State $G\gets L/b.$\Comment{Lipschitz constant of the (averaged) loss for batchsize b} 
    \For{\textbf{all} layers $D\geq d\geq 1$}
        \State $\Delta_d \gets G \textcolor{paramcolor}{\underset{\|x\|\leq X_{d-1}}{\max}\|\Jac{f_d(\theta_d,x)}{\theta_d}\|_2}$. \Comment{Compute sensitivity from gradient norm} \label{alg:gnbc:gnb}
        \State $G\gets G \textcolor{backwardcolor}{\underset{\|x\|\leq X_{d-1}}{\max}\|\Jac{f_d(\theta_d,x)}{x}\|_2} = G \textcolor{backwardcolor}{l_d}$. \Comment{Backpropagate cotangent vector bounds} \label{alg:gnbc:lip}
    \EndFor
    \State \Return sensitivities $\Delta_1,\Delta_2\ldots ,\Delta_{D}$
    \end{algorithmic}
    \end{algorithm}

    \begin{algorithm}
    \caption{\textbf{Clipless DP-SGD} with \textbf{per-layer} sensitivity accounting}
    \label{alg:noclipDP-SGDlocal}
    \textbf{Input}: Feed-forward architecture $f(\theta,\cdot)=f_{D}(\theta_{D},\cdot)\circ \ldots \circ f_1(\theta_1,\cdot)$\\
    \textbf{Input}: Initial weights $\theta=(\theta_1,\theta_1,\ldots \theta_D)$, learning rate $\eta$, noise multiplier $\sigma$.
    \begin{algorithmic}[1] 
    \Repeat
        \State $\Delta_1,\Delta_2\ldots \Delta_{D}\gets $ \textbf{Backpropagation for Bounds}$(f, X)$.
        \State Sample a batch $\mathcal{B}=\{(x_1, y_1), (x_2, y_2), \ldots, (x_b, y_b)\}$.
        \State Compute the averaged gradient for each layer $d$: 
            $g_{d}\defeq \frac{1}{b} \sum_{i=1}^b \nabla_{\theta_{d}} \Loss (f(\theta, x_i),y_i)).$
        \State Sample per-layer noise: $\zeta_d \sim \mathcal{N}(\bm 0,\sigma \Delta_d)$. 
        \State Perform perturbed gradient step:
            $\theta_{d}\gets \theta_{d}-\eta (g_{d} + \zeta_d)$.
        \State Enforce Lipschitz constraint with projection:
            $\theta_{d}\gets \Pi(\theta_{d})$.
        \State Compute new $(\epsilon,\delta)$-DP guarantees with privacy accountant.
    \Until{privacy budget $(\epsilon,\delta)$ has been reached.}
    \end{algorithmic}
    \end{algorithm}

\textbf{Privacy accounting for Clipless DP-SGD.} We keep track of \DP values with a \textit{privacy accountant}~\citep{abadi2016deep}, by composing different mechanisms. For a dataset with $N$ records and a batch size $b$, it relies on two parameters: the sampling ratio~${p=\frac{b}{N}}$ and the ``noise multiplier''~$\sigma$ defined as the ratio between effective noise strength~$\zeta$ and sensitivity~$\Delta$. We propose two strategies to keep track of $(\epsilon,\delta)$ values as the training progresses, based on either the ``per-layer'' sensitivities $\Delta_d$ (composition of Gaussian mechanisms), or by aggregating them into a ``global'' sensitivity $\Delta=\sqrt{\sum_d\Delta_d^2}$ (single isotropic Gaussian mechanism). We rely on the \lstinline[basicstyle=\ttfamily, backgroundcolor=\color{gray!20}]{autodp}\footnote{\url{https://github.com/yuxiangw/autodp} distributed under Apache License 2.0.} library as it uses the R\'enyi Differential Privacy (RDP) adaptive composition theorem~\citep{mironov2017renyi}, that ensures tighter bounds than naive DP composition. Following standard practices of the community~\citep{ponomareva2023dp}, we used \textit{sampling without replacement} at each epoch (by shuffling examples), but we reported $\epsilon$ assuming \textit{Poisson sampling} to benefit from privacy amplification~\citep{balle2018privacy}.

\section{Signal-to-noise ratio analysis}

     We discuss how the tightness of the bound provided by Algorithm~\ref{alg:gnbc} can be controlled.  

\subsection{Theoretical analysis of Clipless DP-SGD}
\label{sec:gnp}
    In some cases we can manually derive the bounds across diverse configurations.
    \begin{restatable}{inftheorem}{nablalossgrad}{\normalfont\textbf{Gradient Norm of Lipschitz Networks.}}\label{infthm:nablalossgrad}
        Assume that every layer $f_d$ is $K$-Lipschitz, i.e $l_1=\dots=l_D=K$. Assume that every bias is bounded by $B$. We further assume that each activation is centered at zero (i.e $f_d(\bm 0)=\bm 0$, like ReLU, GroupSort...).
         Then the global upper bound of Algorithm~\ref{alg:noclipDP-SGDlocal} can be computed analytically, as follows:
        
        \textbf{1. If $K<1$ we have:}
        $
        \|\nabla_{\theta}\Loss(f(\theta,x),y)\|_2=\BigO\left(L\left(K^D(X_0+B)+1\right)\right).
        $\\
        Due to the $K^D\ll 1$ term this corresponds to a vanishing gradient phenomenon~\citep{pascanu2013difficulty}. The output of the network is essentially independent of its input, and training is nearly impossible. 
    
        \textbf{2. If $K>1$ we have:}
        $
            \|\nabla_{\theta}\Loss(f(\theta,x),y)\|_2=\BigO\left(LK^D \left(X_0+B+1\right)\right).
        $\\
        Due to the $K^{D}\gg 1$ term this corresponds to an exploding gradient phenomenon~\citep{bengio1994learning}. The upper bound becomes vacuous for deep networks: the added noise $\zeta$ will be too high.  
    
        \textbf{3. If $K=1$ we have:}
        $
            \|\nabla_{\theta}\Loss(f(\theta,x),y)\|_2=\BigO\left(L\left(\sqrt{D}+X_0\sqrt{D}+\sqrt{BX_0}D+BD^{3/2}\right)\right),
        $\\
        which for linear layers without biases further simplify to $\BigO(L\sqrt{D}(1+X_0))$.  
    \end{restatable}
    The formal statement can be found in appendix. We see that most favorable bounds are achieved by 1-Lipschitz neural networks with 1-Lipschitz layers. In classification tasks, they are as expressive as conventional networks~\citep{bethune2022pay}. Hence, this choice of architecture is not at the expense of utility. Moreover an accuracy/robustness trade-off exists, determined by the choice of loss function~\citep{bethune2022pay}. However, setting $K=1$ merely ensures that $\|\nabla_x f\| \leq 1$, and in the worst-case scenario we could have~$\|\nabla_x f\| \ll 1$ almost everywhere. This results in a situation where the bound of case 3 in Theorem~\ref{infthm:nablalossgrad} is not tight, leading to an underfitting regime as in the case $K<1$. With Gradient Norm Preserving (GNP) networks, we expect to mitigate this issue.  
\paragraph{Controlling \textcolor{paramcolor}{$K$} with Gradient Norm Preserving (GNP) networks.}
    GNP~\citep{li2019preventing} networks are 1-Lipschitz neural networks with the additional constraint that the Jacobian of layers consists of orthogonal matrices. They fulfill the Eikonal equation $\left\|\Jac{f_d (\theta_d, x_d) }{x_d}\right\|_2=1$ for any intermediate activation $f_d(\theta_d,x_d)$. As a consequence, the gradient of the loss with respect to the parameters is bounded by
    \begin{equation}
        \|\nabla_{\theta_d}\Loss\|\leq\textcolor{backwardcolor}{\|\nabla_{y_D} \Loss\|}\times \left\|\prod_{d< i\leq D}\Jac{f_i (\theta_i, x_i) }{x_i}\right\|\times \left\|\Jac{f_{d}(\theta_{d},x_d)}{\theta_{d}}\right\|=\textcolor{backwardcolor}{\|\nabla_{y_D} \Loss\|}\times \left\|\Jac{f_{d}(\theta_{d},x_d)}{\theta_{d}}\right\|,
    \end{equation}
    which for weight matrices $W_d$ further simplifies to $\|\nabla_{W_d}\Loss\|\leq \|\nabla_{y_D} \Loss\|\times \textcolor{forwardcolor}{\left\|f_{d-1}(\theta_{d-1},x_{d-1})\right\|}$. We see that this upper bound crucially depends on two terms than can be analyzed separately. On the one hand, $\textcolor{forwardcolor}{\left\|f_{d-1}(\theta_{d-1},x_{d-1})\right\|}$ depends on the scale of the input. On the other, $\|\nabla_{y_D} \Loss\|$ depends on the loss, the predictions and the training stage. We show below how to intervene on these two quantities. 
    
\textbf{Controlling $\textcolor{forwardcolor}{X_0}$ with input pre-processing.}
    The weight gradient norm $\|\nabla_{W_d}\Loss\|$ indirectly depends on the norm of the inputs. Multiple strategies are available to keep this norm under control: projection onto the ball (``norm clipping''), or projection onto the sphere (``normalization''). In the domain of natural images, this result sheds light on the importance of color space: RGB, HSV, etc. Empirically, a narrower distribution of input norms would make up for tighter gradient norm bounds. 

\textbf{Controlling $\textcolor{backwardcolor}{L}$ with the hybrid approach: loss gradient clipping}\label{sec:lossgradclip} 
    As training progresses, the magnitude of $\textcolor{backwardcolor}{\|\nabla_f \Loss\|}$ tends to diminish when approaching local minima, falling below the upper bound and diminishing the signal to noise ratio. Fortunately, for Lipschitz constrained networks, the norm of the elementwise-gradient remains lower bounded throughout~\citep{bethune2022pay}. Since the noise amplitude only depends on the architecture and the loss, and remains fixed during training, the loss with the best signal-to-noise ratio would be a loss whose gradient norm w.r.t the logits remains constant during training. For the binary classification case, with labels $y\in\{-1,+1\}$, this yields the loss $\Loss_{KR}(\hat y, y)=-y\hat y$, that arises in Kantorovich-Rubinstein duality~\citep{villani2008optimal}:
    \begin{equation}
        \mathcal{W}_1(P,Q)\defeq \frac{1}{\ell}\underset{f\in\text{$\ell$-Lip}(\Dataset,\Reals)}{\inf}\Expect_{(x,y)\sim\Dataset}[\Loss_{KR}(f(x), y)],
    \end{equation}
    where $\mathcal{W}_1(P,Q)$ is the Wasserstein-1 distance between the two classes $P$ and $Q$.

    Another way to ensure high signal-to-noise ratio is to diminish the noise and clip the gradients of the loss w.r.t the logits. We emphasize that \textit{this is different from the clipping of the ``parameter gradient'' $\nabla_{\theta}\Loss$} done in DP-SGD. Here, \textit{any intermediate gradient $\nabla_{f_d}\Loss$} can be clipped during backpropagation. This can be achieved with a special ``\textit{clipping layer}'' that behaves like the identity function at the forward pass, and clips the gradient during the backward pass. In DP-SGD the clipping is applied on the element-wise gradient $\nabla_{W_d}\Loss$ of size $b\times h^2$ for matrix weight~${W_d\in\Reals^{h\times h}}$ and batch size $b$, and clipping it can cause memory issues or slowdowns~\citep{lee2021scaling}. In our case,~$\nabla_{y_D}\Loss$ is of size $b\times h$: this is far smaller, especially for the last layer. Moreover, this clipping is compatible with the adaptive clipping introduced by~\citep{andrew2021differentially}: the quantiles of $\|\nabla_{y_D}\Loss\|$ can be privately estimated for a small privacy budget. This allows to effectively reduce the noise while ensuring that most of the gradients remain unbiased. Furthermore, this bias of this clipping can be characterized.  
    
    \begin{restatable}[Bias of loss gradient clipping in binary classification tasks]{infproposition}{bceiswassvulga}\label{prop:bceiswassvulga}
    Let $\Loss_{BCE}$ be the binary cross-entropy loss, with sigmoid activation. Assume that the loss gradient (w.r.t the logits) $\nabla_{\hat y}\Expect_{\Dataset}[\Loss_{BCE}(\hat y,y)]$ is clipped to norm at most $C>0$.
    Then there exists $C'>0$ such that for all $C\leq C'$ a gradient descent step with the clipped gradient is identical in direction to the  
    gradient descent step obtained from the loss $\Loss_{KR}(\hat y, y)=-\hat yy$. 
    \end{restatable}

    As observed in~\cite{serrurier2021achieving} and~\cite{bethune2022pay} this descent direction yields classifiers with high certifiable robustness, but lower clean accuracy. Therefore, in practice, we set the adaptive clipping threshold at not less than the 90\%-th quantile to mitigate the bias and avoid utility drop.    

\subsection{Lip-dp library}
\label{sec:lipdp}
To foster and spread accessibility, we provide an open source TensorFlow library for Clipless DP-SGD training\footnote{\url{https://anonymous.4open.science/r/lipdp-5EA1} distributed under MIT License.} , named \lstinline[basicstyle=\ttfamily, backgroundcolor=\color{gray!20}]{lip-dp}, with Keras API. The code is available at \url{https://github.com/Algue-Rythme/lip-dp}. Its usage is illustrated in Figure~\ref{fig:pseusocode}. We rely on the \lstinline[basicstyle=\ttfamily, backgroundcolor=\color{gray!20}]{deel-lip}\footnote{\url{https://github.com/deel-ai/deel-lip} distributed under MIT License.} library~\cite{serrurier2021achieving} to enforce Lipschitz constraints in practice, with Reshaped Kernel Orthogonalization (RKO) for fast and near-orthogonal convolutions~\citep{li2019orthogonal}.  The Lipschitz constraint is enforced by using activations with known Lipschitz constant $l_d$ (most activations fulfill this condition), and by applying a projection $\Pi:\Reals^p\rightarrow \Theta$ on the weights of affine layers. These constraints correspond to spectrally normalized  matrices~\cite{yoshida2017spectral,bartlett2017spectrally}, since for affine layers we have~${\ell_d=\|W_d\|_2\defeq\underset{\|x\|\leq 1}{\max}\|W_dx\|_2}$ and therefore $\Theta=\{\|W_d\|_2\leq \ell_q\}$. We rely on Power Iteration for fast computation of the spectral norm. The leading eigenvector is cached from a step to another to speed-up computations. The seminal work of~\cite{anil2019sorting} proved that universal approximation in the set of $\ell$-Lipschitz functions was achievable by this family of architectures. In practice, GNP networks are parametrized with GroupSort activation~\cite{anil2019sorting,tanielian2021approximating}, Householder activation~\cite{mhammedi2017efficient}, and orthogonal weight matrices~\cite{li2019preventing,li2019orthogonal}.

\begin{remark}[GNP networks limitations]\label{remark:gnp}
        Strict orthogonality is challenging to enforce, especially for convolutions for which it is still an active research area (see~\cite{trockman2021orthogonalizing,singla2021skew,achour2022existence,singla2022improved,xu2022lot} and references therein). Our line of work traces an additional motivation for the development of GNP and the bounds will strengthen as the field progresses. Concurrent approaches based on regularization~\citep{gulrajani2017improved,cisse2017parseval,gouk2021regularisation}) fail to produce formal guarantees.  
\end{remark}

\section{Experimental results}\label{sec:experiments}

    We validate our implementation with a speed benchmark against competing approaches, and we present the privacy/utility Pareto fronts that can be obtained with GNP networks. 

\subsection{Evaluation of privacy, accuracy and robustness}

For the comparisons, we leverage the DP-SGD implementation from Opacus. We perform a search over a broad range of hyper-parameter values: the configuration is given in Appendix~\ref{app:experiments}. We ignore the privacy loss that may be induced by this hyper-parameter search, which is a limitation per recent studies like~\cite{papernot2022hyperparameter} or~\rebuttal{\cite{ding2022revisiting}}. 

\textbf{Accuracy and Privacy.} We validate the performance of our approach on tabular data from Adbench suite~\citep{han2022adbench} using a MLP, and report the result in Table~\ref{tab:tabularutility}. For MNIST (Fig.~\ref{fig:mnistmain} we use a Lipschitz LeNet-like architecture.
 


\begin{figure}
\centering
\resizebox{0.95\textwidth}{!}{%
  \begin{subfigure}{0.55\textwidth}
        \small
        \begin{tabular}{|p{1cm}>{\raggedleft\arraybackslash}p{0.9cm}>{\raggedleft\arraybackslash}p{0.5cm}>{\raggedright\arraybackslash}p{0.6cm}|cc|}
            \hline
             \multirow{2}{*}{Dataset} & \multirow{2}{*}{Size} & \multirow{2}{*}{Dim.} & \multirow{2}{*}{$\delta$} & \multicolumn{2}{c|}{Validation AUROC $\uparrow$} \\
                     &            &             &          & \makecell{DP-SGD} & \makecell{Clipless\\ DP-SGD} \\
            \hline
            \hline
    ALOI & 39,627 & 27 & $10^{-5}$ & $\textbf{56.5}$ & $56.2$\\
    campaign & 32,950 & 62 & $10^{-5}$ & $\textbf{90.0}$ & $82.2$\\
    celeba & 162,079 & 39 & $10^{-6}$ & $\textbf{96.6}$ & $96.5$\\
    census & 239,428 & 500 & $10^{-6}$ & $\textbf{93.3}$ & $92.5$\\
    donors & 495,460 & 10 & $10^{-6}$ & $\textbf{100.0}$ & $\textbf{100.0}$\\
    magic & 15,216 & 10 & $10^{-5}$ & $\textbf{90.7}$ & $89.7$\\
    shuttle & 39,277 & 9 & $10^{-5}$ & $98.3$ & $\textbf{99.4}$\\
    skin & 196,045 & 3 & $10^{-6}$ & $\textbf{100.0}$ & $99.8$\\
    yeast & 1,187 & 8 & $10^{-4}$ & $66.8$ & $\textbf{75.1}$\\
            \hline
    \end{tabular}
    \caption{\textbf{Best AUROC values (in \%) for models trained under \DP privacy with $\epsilon=1$}, on binary classification tasks of tabular data from Adbench datasets~\cite{han2022adbench}. We use a random stratified split into train (80\%) / validation (20\%).}
    \label{tab:tabularutility}
    \end{subfigure}
    \hspace{0.05\textwidth}
    \begin{subfigure}{0.45\textwidth}
        \centering
        \includegraphics[width=\linewidth]{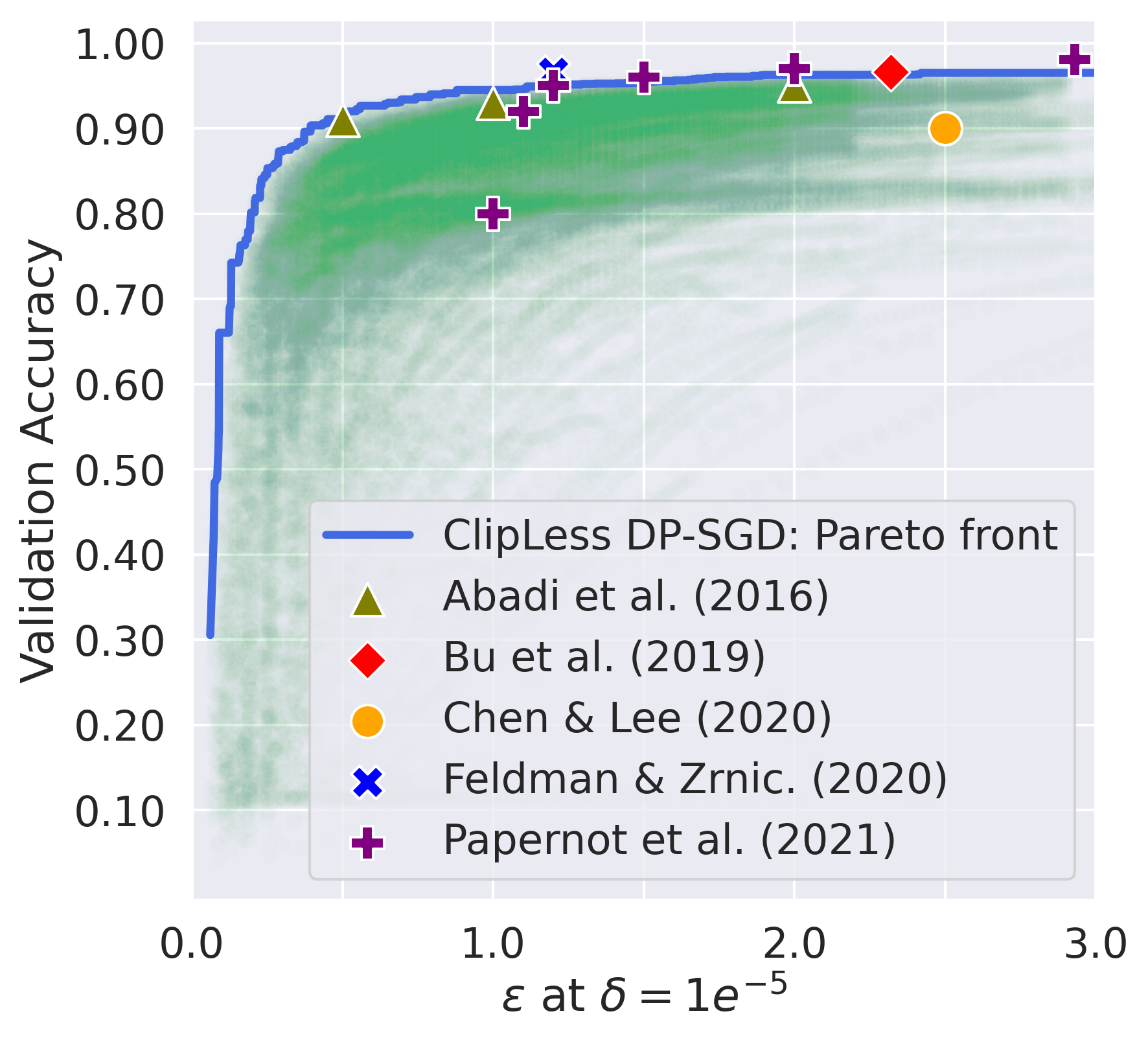}
    \caption{\textbf{Pareto front on Mnist.} Each green dot corresponds to a (\textbf{val. acc}, $\epsilon$) pair from an epoch.}
    \label{fig:mnistmain}
  \end{subfigure}
}
\caption{\textbf{Comparison of the utility of DP-SGD and Clipless DP-SGD} on tabular and image data.}
\end{figure}

\textbf{Robustness and Privacy.} One of the most prominent advantage of Lipchitz networks is their ability to provide robustness certificates against adversarial attacks, and was the primary motivation for their development~\citep{szegedy2013,li2019preventing,fazlyab2019efficient}. For a $\ell$-Lipschitz classifier $f$, with predictions $\bar k\defeq\argmax_{k} f_k(x)$ the decision is invariant under perturbation of norms smaller than $\frac{1}{\ell\sqrt{2}}(f_{\bar k}(x)-\argmax_{i\neq \bar k}f_i(x))$. Therefore, for each perturbation radius $r$ we can measure the accuracy of the perturbed network, and we report these results in Fig.~\ref{fig:paretofrontrobust}. 
The computation of the certificates is straightforward, while methods applicable to conventional networks (like randomized smoothing~\citep{pmlr-v97-cohen19c} or interval propagation, see remark~\ref{remark:forwardbounds}), are expensive.  
This suggests that robust decisions and privacy are not necessarily antipodal objectives, contrary to what was observed in~\cite{song2019privacy}. The work of~\cite{wu2023augment} also study the link between certified robustness and privacy, albeit through the lens of adversarial training~\citep{zhang2019theoretically}. 

\begin{figure}
    \centering
    \includegraphics[width=0.9\textwidth]{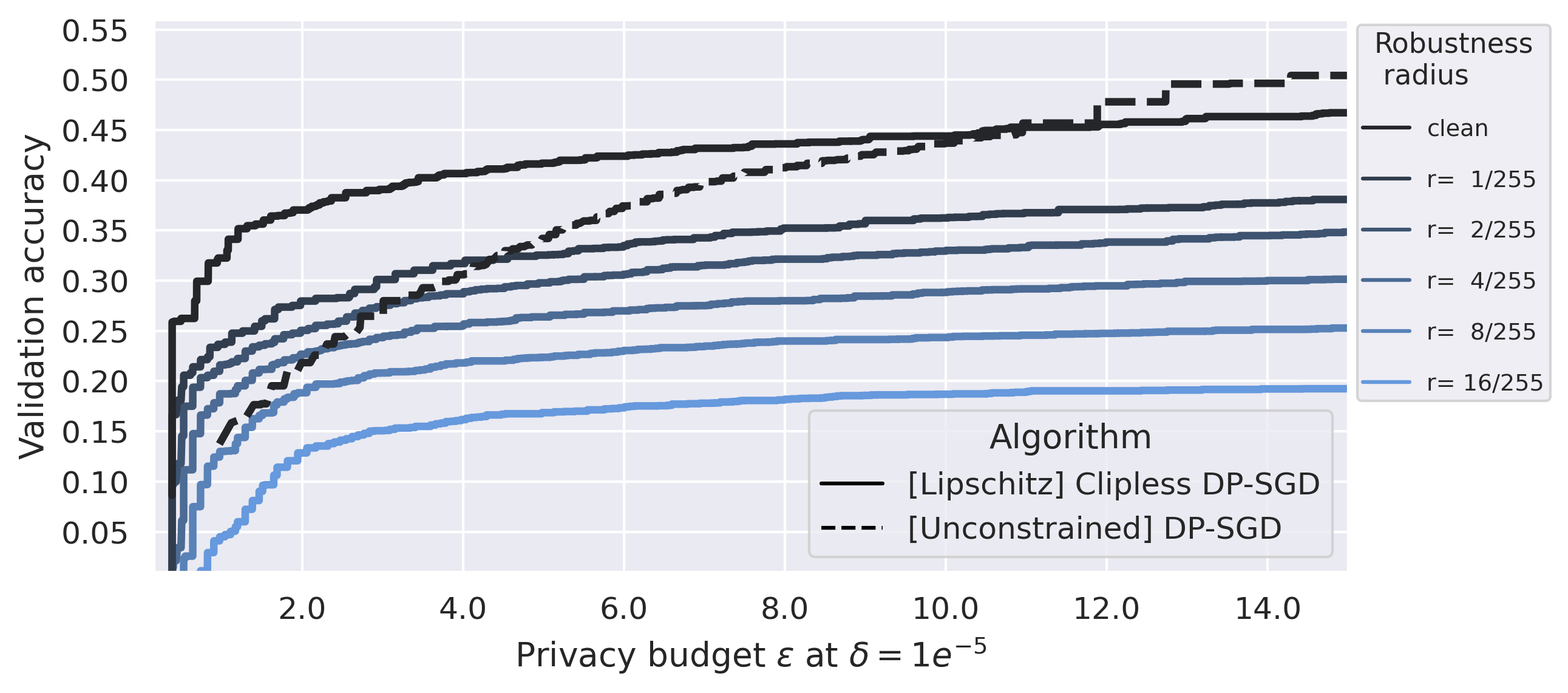}
    \caption{\textbf{Privacy/accuracy/robustness trade-off on Cifar-10:} We report the pareto front of robustness certificates at different radii $r$ for Lipschitz constrained networks, while unconstrained networks cannot produce robustness certificates. Models are trained in an "out of the box setting": no pre-training, no data augmentation and no handcrafted features. \rebuttal{To ensure a fair comparison between algorithms, we perform 30 repetitions with a Bayesian optimizer to select the best hyper-parameters.}}\label{fig:paretofrontrobust}
\end{figure}

\subsection{Speed and memory consumption}\label{sec:batchspeed}

 We benchmarked the median runtime per batch of vanilla DP-SGD against the one of Clipless DP-SGD, on a CNN architecture and its Lipschitz equivalent respectively. The experiment was run on a GPU with 48GB video memory. We compare against the implementation of \lstinline[basicstyle=\ttfamily, backgroundcolor=\color{gray!20}]{tf_privacy}, \lstinline[basicstyle=\ttfamily, backgroundcolor=\color{gray!20}]{opacus} and \lstinline[basicstyle=\ttfamily, backgroundcolor=\color{gray!20}]{optax}. In order to allow a fair comparison, when evaluating Opacus, we reported the runtime with respect to the logical batch size, while capping the physical batch size to avoid Out Of Memory error (OOM). 
 \begin{wrapfigure}{r}{0.5\textwidth}
    \vspace{-5mm}
    \center
    \includegraphics[width=0.5\textwidth]{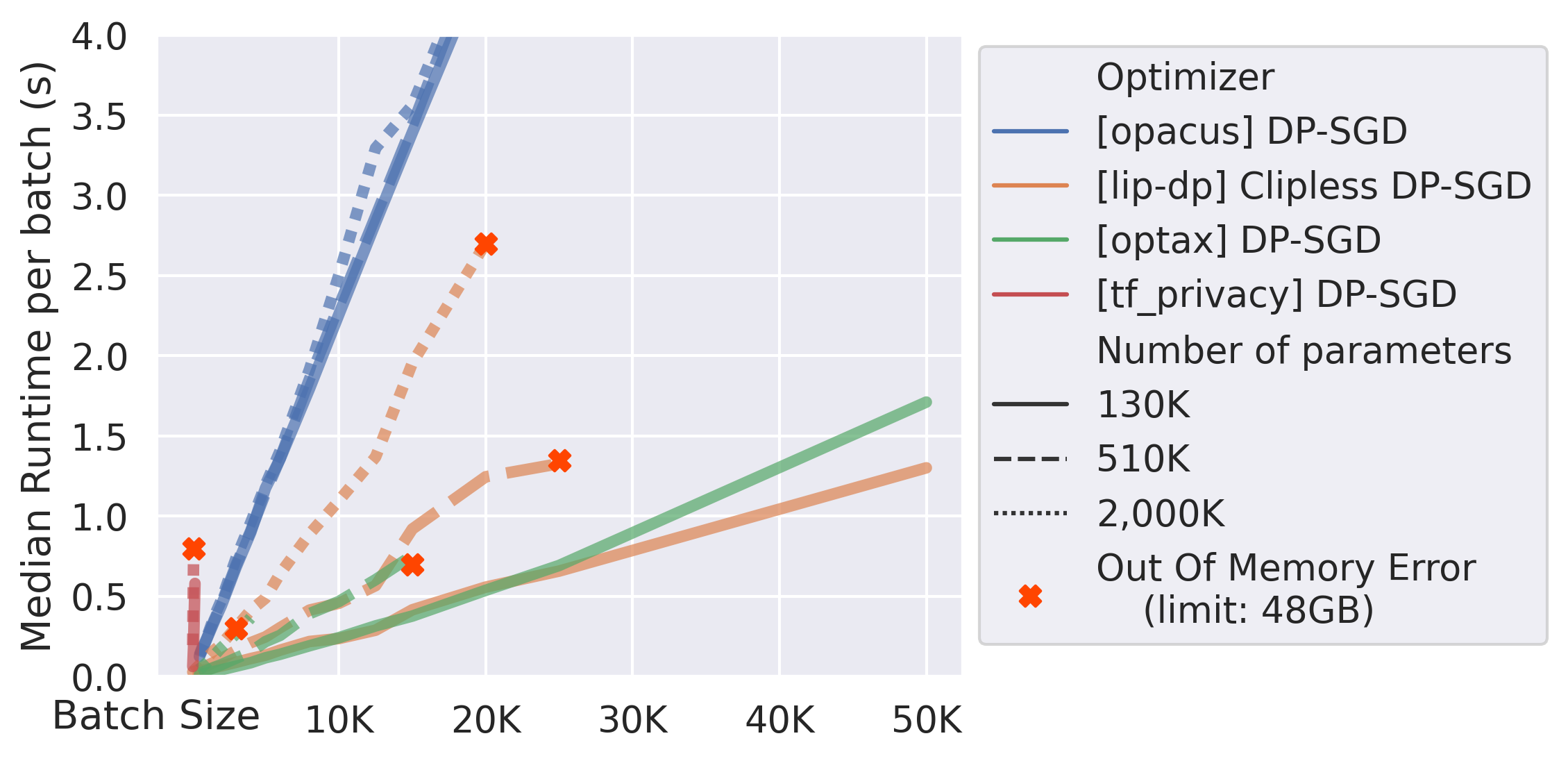}
    \caption{\textbf{Our approach outperforms concurrent frameworks in terms of runtime and memory:} we trained CNNs (ranging from 130K to 2M parameters) on CIFAR-10, and report the median batch processing time (including noise, and constraints application~$\Pi$ or gradient clipping).}
    \label{fig:runtime}
    \vspace{-5mm}
\end{wrapfigure}

An advantage of the projection $\Pi$ over per-sample gradient clipping is that its cost is independent of the batch size. Fig~\ref{fig:runtime} validates that our method scales much better than vanilla DP-SGD, and is compatible with large batch sizes. It offers several advantages: firstly, a larger batch size contributes to a decrease of the sensitivity $\Delta\propto 1/b$, which diminishes the ratio between noise and gradient norm. Secondly, as the batch size~$b$ increases, the variance decreases at the parametric rate $\BigO(\sqrt{b})$ (as demonstrated in appendix~\ref{app:vargrad}), aligning with expectations. This observation does not apply to DP-SGD: clipping biases the direction of the average gradient, as noticed by~\cite{chen2020understanding}.



\vspace{-0.2cm}
\section{Limitations, future work and broader impact}
\vspace{-0.2cm}

Our framework offers a novel approach to address differentially private training, but also introduces new challenges. \rebuttal{Clipless DP-SGD replaces a bias of optimizer with a bias in function space (as discussed in Appendix~\ref{sec:discussionbias}). We primary rely on GNP networks, where high performing architectures are quite different from the usual CNN architectures. This serves as \textbf{(1) a motivation to further develop Gradient Norm Preserving architectures}.} As emphasized in Remark~\ref{remark:gnp}, we anticipate that progress in these areas will greatly enhance the effectiveness of our approach.  
Additionally, to meet requirement~\ref{req:pgd}, we rely on projections, necessitating additional efforts to incorporate ``trivializations''~\cite{trockman2021orthogonalizing,singla2021skew}. 
Finally, as mentioned in Remark~\ref{remark:forwardbounds}, our propagation bound method can be refined. \rebuttal{Comparison of Clipless DP-SGD with recent methods based on back-propagation, in particular~\cite{bu2023differentially}, is discussed in appendix~\ref{sec:comparisonsota}}. 
Furthermore, the development of networks with known Lipschitz constant with respect to parameters is a question of independent interest, making \lstinline[basicstyle=\ttfamily, backgroundcolor=\color{gray!20}]{lip-dp} \textbf{(2) a useful tool for the study of the optimization dynamics} in Lipschitz constrained neural networks.

\newcommand*{\PAPER}{}%
\newcommand*{\APPENDIX}{}%
\newcommand*{\THANKS}{}%

\ifdefined\THANKS
\subsubsection*{Contributions}

The design and the theoretical analysis has been conducted by Louis Béthune and Thomas Masséna. The lip-dp package has been created by Louis Bethune, Thomas Masséna, and Thibaut Boissin. The experiments have been run by Louis Béthune, Thomas Masséna, Thibaut Boissin, Yannick Prudent, and Corentin Friedrich. Proofreading, writing, and feedback are a joint contribution of all authors.  

\subsubsection*{Acknowledgments}
Louis thanks Fabian Pedregosa for its detailed feedback. The authors thank Agustin Picard for its proofreading.
  
This work has benefited from the AI Interdisciplinary Institute ANITI, which is funded by the French ``Investing for the Future – PIA3'' program under the Grant agreement ANR-19-P3IA-0004. The authors gratefully acknowledge the support of the DEEL project.\footnote{\url{https://www.deel.ai/}} 
  
This work was supported by grant ANR-20-CE23-0015 (Project PRIDE) and the ANR 22-PECY-0002 IPOP (Interdisciplinary Project on Privacy) project of the Cybersecurity PEPR.  
\fi

\ifdefined\PAPER
\newpage
{
    \bibliography{biblio}
    \bibliographystyle{iclr2024_conference}
}
\fi

\ifdefined\APPENDIX
\newpage
\appendix

\tableofcontents
\newpage

\section{Definitions and methods}
    \subsection{Additionnal background}

The purpose of this appendix is to provide additionnal definitions and properties regarding Lipschitz Neural Networks, their possible GNP properties and Differential Privacy.

    \subsubsection{Lipschitz neural networks background}\label{sec:lipdefs}
    
    For simplicity of the exposure, we will focus on feedforward neural networks with densely connected layers: the affine transformation takes the form of a matrix-vector product $h\mapsto Wh$. In Section~\ref{sec:convnet} we tackle the case of convolutions $h\mapsto\Psi*h$ with kernel $\Psi$.  

    \begin{definition}[Feedforward neural network]\label{def:feedforward} 
        A feedforward neural network of depth $T$, with input space $\X\subset\Reals^n$, and with parameter space $\Theta\subset \Reals^p$, is a parameterized function $f:\Theta\times\X\rightarrow\Y$ defined by the following recursion:
        \begin{align}
            h_0(x) &\defeq x,\quad &
            z_t(x) &\defeq W_th_{t-1}(x)+b_t,\nonumber \\
            h_t(x) &\defeq \sigma(z_t(x)),\quad &
            f(\theta,x) &\defeq z_{T+1}(x).
        \end{align}
        The set of parameters is denoted as $\theta=(W_t,b_t)_{1\leq t\leq T+1}$, the output space as $\Y\subset\Reals^K$ (e.g logits), and the \textit{layer-wise} activation as $\sigma:\Reals^n\rightarrow\Reals^n$. 
    \end{definition}

    \begin{definition}[Lipschitz constant]\label{def:lipconstant}
        The function $f:\Reals^m\rightarrow\Reals^n$ is said $l$-Lipschitz for $l_2$ norm if for every $x,y\in\Reals^m$ we have:
            \begin{equation}
                \|f(x)-f(y)\|_2 \leq l\|x-y\|_2.
            \end{equation}
        Per Rademacher's theorem~\cite{simon1983lectures}, its gradient is bounded: $\|\nabla f\|\leq l$. Reciprocally, continuous functions gradient bounded by $l$ are $l$-Lipschitz.  
    \end{definition}

    \begin{definition}[Lipschitz neural network]\label{def:lipnet}
        A Lipschitz neural network is a feedforward neural network with the additional constraints:
        \begin{itemize}
            \item the activation function $\sigma$ is $S$-Lipschitz. This is a standard assumption, frequently fulfilled in practice.
            \item the affine functions $x\mapsto Wx+b$ are $U$-Lipschitz, i.e $\|W\|_2\leq U$. This is achieved in practice with spectrally normalized matrices~\cite{yoshida2017spectral}~\cite{bartlett2017spectrally}. The feasible set is the ball $\{\|W\|_2\leq U\}$ of radius $U$ (which is convex), or a subset of thereof (not necessarily convex).  
        \end{itemize}
        As a result, the function $x\mapsto f(\theta,x)$ is $U(US)^T$-Lipschitz for all $\theta\in\Theta$. 
    \end{definition}
    
    Two strategies are available to enforce Lipschitzness:
    \begin{enumerate}
        \item With a differentiable reparametrization $\Pi:\Reals^p\rightarrow\Theta$ where $\tilde\theta=\Pi(\theta)$: the weights $\tilde\theta$ are used during the forward pass, but the gradients are back-propagated to $\theta$ through $\Pi$. This turns the training into an unconstrained optimization problem on the landscape of $\Loss\circ f\circ \Pi$.\label{enum:appreparam}
        \item With a suitable projection operator $\Pi:\Reals^p\rightarrow\Theta$: this is the celebrated Projected Gradient Descent (PGD) algorithm~\cite{bubeck2015convex} applied on the landscape of $\Loss\circ f$.\label{enum:apppgd}  
    \end{enumerate}
    For arbitrary re-parametrizations, option~\ref{enum:appreparam} can cause some difficulties: the Lipschiz constant of $\Pi$ is generally unknown. However, if $\Theta$ is convex then $\Pi$ is 1-Lipschitz (with respect to the norm chosen for the projection). To the contrary, option~\ref{enum:apppgd} elicits a broader set of feasible sets $\Theta$. For simplicity, option~\ref{enum:apppgd} will be the focus of our work.  

    \subsubsection{Gradient Norm Preserving networks}

    \begin{definition}[Gradient Norm Preserving Networks]\label{def:gnp}
        GNP networks are 1-Lipschitz neural networks with the additional constraint that the Jacobian of layers consists of orthogonal matrices:
            \begin{equation}
                \left(\Jac{f_d}{x_d}\right)^T\left(\Jac{f_d}{x_d}\right)=I.
            \end{equation}
        This is achieved with GroupSort activation~\cite{anil2019sorting,tanielian2021approximating}, Householder activation~\cite{mhammedi2017efficient}, and orthogonal weight matrices~\cite{li2019preventing,li2019orthogonal} or orthogonal convolutions (see~\cite{achour2022existence,singla2022improved,xu2022lot} and references therein). Without biases these networks are also norm preserving: $\|f(\theta,x)\|=\|x\|.$  
    \end{definition}

    The set of orthogonal matrices (and its generalization the Stiefel manifold~\cite{absil2008optimization}) is not convex, and not even connected. Hence projected gradient approaches are mandatory: for re-parametrization methods the Jacobian $\Jac{\Pi}{\theta}$ may be unbounded which could have uncontrollable consequences on sensitivity.

    \subsubsection{Background on Differential Privacy}\label{app:dpdefs}

    \begin{definition}[Neighboring datasets]\label{def:neighbouring}
        A labelled dataset $\Dataset$ is a finite collection of input/label pairs $\Dataset=\{(x_1,y_1),(x_2,y_2),\ldots... (x_N,y_N)\}$. Two datasets $\Dataset$ and $\Dataset'$ are said to be neighbouring if they differ by at most one sample: $\Dataset'=\Dataset\cup \{(x',y')\}-\{(x_i,y_i)\}$.
    \end{definition}

    The sensitivity is also referred as algorithmic stability~\cite{bousquet2002stability}, or~bounded differences property in other fields~\cite{mcdiarmid1989method}. We detail below the building of a Gaussian mechanism from an arbitrary query of known sensitivity.

    \begin{definition}[Gaussian Mechanism]\label{def:gaussianmechanism}
        Let $f:\Dataset\rightarrow\Reals^p$ be a query accessing the dataset of known $l_2$-sensitivity $\sntv(f)$, a Gaussian mechanism adds noise sampled from $\Gaussian(0,\sigma.\sntv(f))$ to the query $f$.
    \end{definition}
    
    \begin{property}[DP of Gaussian Mechanisms]\label{prop:gaussianmechanismdp}
        Let $\mathcal{G}(f).$ be a Gaussian mechanism of $l_2$-sensitivity $S_2(f)$ adding the noise $\Gaussian(0,\sigma.S_2(f))$ to the query $f$. The DP guarantees of the mechanism are given by the following continuum: 
        $\sigma = \sqrt{2.\log(1.25/\delta)} / \epsilon.$
    \end{property}

    SGD is a composition of queries. Each of those query consists of sampling a minibatch from the dataset, and computing the gradient of the loss on the minibatch. The sensitivity of the query is proportional to the maximum gradient norm $l$, and inversely proportional to the batch size $b$. By pertubing the gradient with a Gaussian noise of variance $\sigma^2\frac{l^2}{b^2}$ the query is transformed into a Gaussian mechanism. By composing the Gaussian mechanisms we obtain the DPSGD variant, that enjoy \DP guarantees.   

    \begin{proposition}[DP guarantees for SGD, adapted from~\cite{abadi2016deep}.]\label{thm:dplipschitz}
    Assume that the loss fulfills $\|\nabla_{\theta}\Loss(\yhat,y)\|_2\leq l$, and assume that the network is trained on a dataset of size $N$ with SGD algorithm for $T$ steps with noise scale $\mathcal{N}(\bf 0, \sigma^2)$ such that:
    \begin{equation}
        \sigma\geq \frac{16K\sqrt{T\log{(2/\delta)}\log{(1.25T/\delta N)}}}{N\epsilon}.
    \end{equation}
    Then the SGD training of the network is $(\epsilon,\delta)$-DP.
    \end{proposition}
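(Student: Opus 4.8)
The plan is to read the $T$-step SGD training as an adaptive composition of $T$ subsampled Gaussian mechanisms, and then to chain together, in this order, the single-step Gaussian-mechanism guarantee (Property~\ref{prop:gaussianmechanismdp}), privacy amplification by subsampling, and an advanced composition theorem, before inverting the resulting inequality to solve for $\sigma$.

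First I would pin down the per-step sensitivity: since $\|\nabla_{\theta}\Loss(\yhat,y)\|_2\leq K$ by hypothesis, replacing a single sample changes the minibatch gradient sum by at most $2K$ in $l_2$ norm, so each SGD query has $l_2$-sensitivity $\sntv\leq 2K$. Perturbing that query with $\mathcal{N}(\mathbf{0},\sigma^2 I)$ makes it a Gaussian mechanism with effective noise multiplier $\sigma/\sntv$, and by Property~\ref{prop:gaussianmechanismdp} one step is $(\epsilon_0,\delta_0)$-DP whenever $\sigma/\sntv\geq \sqrt{2\log(1.25/\delta_0)}/\epsilon_0$, i.e. $\epsilon_0\asymp 2K\sqrt{2\log(1.25/\delta_0)}/\sigma$. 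Next, because each step only accesses a minibatch drawn with sampling ratio $q\leq 1/N$ (the stated bound corresponds to $b=1$; larger batches only decrease $q$ and help), privacy amplification by subsampling~\cite{balle2018privacy} improves the per-step cost to $\epsilon_0'=\BigO(q\epsilon_0)$ and $\delta_0'=q\delta_0$ in the small-$\epsilon_0$ regime. Finally, advanced composition over the $T$ steps gives $\epsilon=\BigO\!\left(\epsilon_0'\sqrt{T\log(1/\delta')}\right)$ with overall failure probability $\delta'+T\delta_0'$ for any composition slack $\delta'$. Chaining the estimates yields $\epsilon=\BigO\!\left(\tfrac{K}{N\sigma}\sqrt{T\log(1/\delta')\log(1.25/\delta_0)}\right)$; choosing $\delta'\asymp\delta$ and $\delta_0\asymp\delta N/T$ so that $\delta'+T\delta_0'\leq\delta$, and solving for $\sigma$, reproduces $\sigma\geq\frac{16K\sqrt{T\log(2/\delta)\log(1.25T/\delta N)}}{N\epsilon}$, with the constant $16$ absorbing the sensitivity factor $2K$ together with the amplification and composition constants.

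The main obstacle is exactly this constant-chasing: recovering the literal factor $16$ and the precise logarithm arguments ($2/\delta$ and $1.25T/\delta N$) requires careful bookkeeping of how the target $\delta$ is partitioned between the composition slack $\delta'$ and the per-step failure probabilities $\delta_0$ (and their subsampled versions $\delta_0'$), and of staying in the regime where both the linear term of advanced composition and the tight subsampling amplification bound are valid --- which is precisely the regime where $\epsilon$ is small relative to $q^2 T$. A cleaner route, and the one actually taken by~\cite{abadi2016deep}, is to argue through Rényi DP: bound the RDP of the subsampled Gaussian mechanism at each order $\alpha$, use that RDP composes additively over the $T$ steps, optimize over $\alpha$, and convert back to an $(\epsilon,\delta)$ statement. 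This is precisely the moments-accountant argument and produces a bound of the same shape; I would present the proof along whichever of these two lines keeps the constants most transparent.
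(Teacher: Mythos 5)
The paper never actually proves this proposition: it is imported verbatim (``adapted from'') Abadi et al., and the appendix states it without argument, so there is no internal proof to compare against. Judged on its own, your sketch is the right argument and matches the provenance of the result: per-step Gaussian mechanism on a query of replace-one sensitivity proportional to the gradient bound, amplification by subsampling, then advanced composition, with the target $\delta$ split between the composition slack (giving the $\log(2/\delta)$ factor) and the $T$ per-step, subsampled failure probabilities (giving the $\log(1.25\,T/\delta N)$ factor); the generous constant $16$ absorbs the factor $2$ from replace-one sensitivity and the constants in the amplification and composition lemmas. Two small corrections. First, the restriction to $b=1$ is unnecessary: with minibatch averaging the sensitivity is $2K/b$ while the sampling ratio is $q=b/N$, so the product $q\cdot\Delta=2K/N$ is independent of $b$, which is exactly why the stated bound features $N$ and not the batch size. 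Second, your closing remark that the R\'enyi/moments-accountant route ``produces a bound of the same shape'' is not quite right: the moments accountant of Abadi et al.\ yields the strictly better requirement $\sigma=\Omega\bigl(q\sqrt{T\log(1/\delta)}/\epsilon\bigr)$, i.e.\ it removes one of the two logarithmic factors, whereas the double-logarithm form $\sqrt{T\log(2/\delta)\log(1.25T/\delta N)}$ in the proposition is precisely the signature of the strong-composition-plus-amplification analysis you carried out first. So the first route is the one that proves the statement as written; the second proves something stronger but different-looking. (Incidentally, the statement itself has a typo the paper inherited: the hypothesis bounds the gradient by $l$ while the noise condition uses $K$; they are meant to be the same constant.)
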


    \begin{figure}[t]
        \centering
        \includegraphics[width=\linewidth]{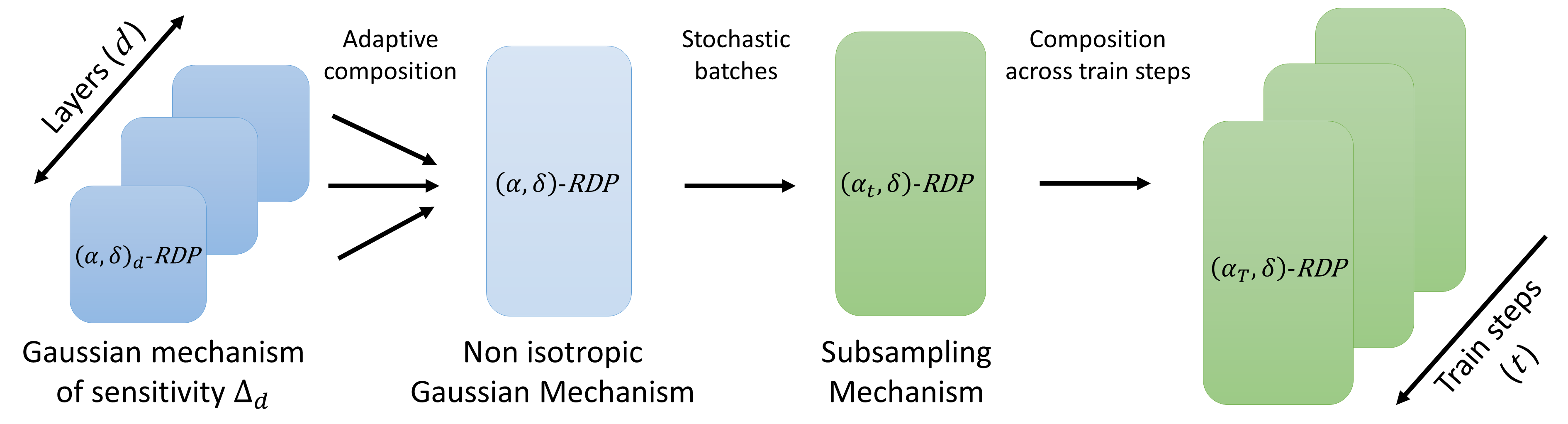}
        \caption{\textbf{Accountant for locally enforced differential privacy.} \textbf{(i)} The gradient query for each layer is made private using the Gaussian mechanism~\cite{dwork2014algorithmic}; \textbf{(ii)} their composition across the layers of the whole network can be seen as a non isotropic Gaussian mechanism, \textbf{(iii)} that benefits from amplification via sub-sampling~\cite{balle2018privacy}; \textbf{(iv)} the train steps are composed over the course of training.}
        \label{fig:local_accounting}
    \end{figure}    

\subsection{Privacy accounting of Clipless DP-SGD} \label{sec:cliplessSGD}

    \textbf{The ``global'' strategy.}
    This strategy simply aggregates the individual sensitivities $\sntv_d$ of each layer to obtain the global sensitivity of the whole gradient vector $\Delta=\sqrt{\sum_{d} \Delta_d^2}$. The origin of the clipping-based version of this strategy can be traced back to~\cite{mcmahanlearning}. With noise variance $\sigma^2\sntv^2$ we recover the accountant that comes with DP-SGD. It tends to overestimate the true sensitivity (in particular for deep networks), but its implementation is straightforward with existing tools. We detail in Algorithm~\ref{alg:noclipdpsgd} the global variant of our approach. 

    \paragraph{The ``per-layer'' strategy.} Recall that we are able to characterize the sensitivity $\Delta_d$ of every layer of the network. Hence, we can apply a different noise to each of the gradients. We dissect the whole training procedure in Figure~\ref{fig:local_accounting}.   
    \begin{enumerate}
        \item On each layer, we apply a Gaussian mechanism with noise variance $\sigma^2\Delta_d^2$. 
        \item Their composition yields an other Gaussian mechanism with non isotropic noise.
        \item The Gaussian mechanism benefits from privacy amplification via subsampling~\citep{balle2018privacy} thanks to the stochasticity in the selection of batches of size $b=pN$.
        \item Finally an epoch is defined as the composition of $T=\frac{1}{p}$ sub-sampled mechanisms.
    \end{enumerate}
     At same noise multiplier $\sigma$, ``per-layer'' strategy tends to produce a higher value of $\epsilon$ per epoch than the ``global'' strategy, but has the advantage over the latter to add smaller effective noise $\zeta$ to each weight.  Different layers exhibit different maximum gradient bounds - and in turn this implies different sensitivities. This also suggests that different noise multipliers $\sigma_d$ can be used for each layer. This open extensions for future work.  

    For practical computations, we rely on the \lstinline[basicstyle=\ttfamily, backgroundcolor=\color{gray!20}]{autodp} library~\citep{wang2019subsampled,zhu2019poission,zhu2020improving} as it uses the R\'enyi Differential Privacy (RDP) adaptive composition theorem~\citep{mironov2017renyi,mironov2019r}, that ensures tighter bounds than naive DP composition.

    \begin{algorithm}
    \caption{\textbf{Clipless DP-SGD} with \textbf{global} sensitivity accounting}
    \label{alg:noclipdpsgd}
    \textbf{Input}: Feed-forward architecture $f(\cdot,\cdot)=f_{T+1}(\theta_{T+1},\cdot)\circ f_T(\theta_T,\cdot)\circ \ldots \circ f_0(\theta_0,\cdot)$\\
    \textbf{Input}: Initial weights $\theta_0$, learning rate scheduling $\eta_t$, noise multiplier $\sigma$.
    \begin{algorithmic}[1] 
    \Repeat
        \State $\Delta_0\ldots \Delta_{T+1}\gets $ compute\_gradient\_bounds$(f, X).$
        \State Sample a batch $\mathcal{B}_t=\{(x_1, y_1), (x_2, y_2), \ldots, (x_b, y_b)\}$.
        \State Compute the mean gradient of the batch for each layer $t$: 
            $$\tilde g_{t}\defeq \frac{1}{b} \sum_{i=1}^b \nabla_{\theta_{t}} \Loss (\yhat_i,y_i)).$$
        \State For each layer $t$ of the model, get the theoretical bound of the gradient: $$\forall 1 \leq i \leq b,\quad \|\nabla_{\theta_t} \Loss (\yhat_i,y_i)\|_2 \leq \Delta_t.$$
        \State Update Moment accountant state with \textbf{global} sensitivity $\Delta=\frac{2}{b}\sqrt{\sum_{t=1}^{T+1} \Delta_t^2}$.  
        \State Add global noise $\zeta \sim \mathcal{N}(0,2\sigma \Delta/b)$ to each weights and perform projected gradient step:
            $$\theta_{t}\gets \Pi(\theta_{t}-\eta (\tilde g_{t} + \zeta)).$$
        \State Report new $(\epsilon,\delta)$-DP guarantees with accountant.
    \Until{privacy budget $(\epsilon,\delta)$ has been reached.}
    \end{algorithmic}
    \end{algorithm}

\subsection{Robustness certificates of Lipschitz constrained networks}  

    The temperature parameter $\tau$ of softmax has a strong influence on the certificates. The models with the most robust decisions are not the ones with the best clean accuracy, which is compliant with the observations of~\cite{bethune2022pay}.  
    
    \begin{table*}
    \centering
    \small
    \begin{tabular}{|c|cccccc|cc|}
    \hline
      \multirow{3}{4.5em}{\centering Softmax Temperature} & \multicolumn{6}{c|}{\multirow{2}{*}{Certifiable accuracy at radius $r/255$}} &  \multicolumn{2}{c|}{\multirow{2}{7em}{\centering Membership Inference Attacks}}\\
        &       &       &       &       &       &               &       &     \\
        & $r=0$ & $r=1$ & $r=2$ & $r=4$ & $r=8$ & $r=16$        & AUROC & Adv.\\ 
    \hline
    \hline
    $\tau=0.01$ & $\bm{47.4}$ & $5.9$ & $0.1$ & $0.0$ & $0.0$ & $0.0$ & $59.8$ & $23.9$\\
    $\tau=0.22$ & $44.4$ & $\bm{39.1}$ & $34.2$ & $24.9$ & $11.5$ & $0.8$ & $50.1$ & $15.0$\\
    $\tau=0.40$ & $41.6$ & $38.4$ & $\bm{35.6}$ & $29.6$ & $19.1$ & $5.3$ & $50.0$ & $11.3$\\
    $\tau=0.74$ & $38.4$ & $36.4$ & $34.7$ & $\bm{30.9}$ & $24.1$ & $13.0$ & 51.9 & 20.5\\
    $\tau=2.77$ & $33.3$ & $32.2$ & $31.2$ & $29.3$ & $\bm{25.9}$ & $18.8$ & 52.5 & 21.3\\
    $\tau=5.40$ & $32.5$ & $31.4$ & $30.4$ & $28.8$ & $25.5$ & $\bm{19.7}$ & 59.7 & 23.6\\
    \hline
    \end{tabular}
    \caption{\textbf{Certifiable accuracy scores under ($20.0$, $10^{-5}$)-DP training on Cifar-10 dataset}. For each targeted robustness radius $r/255$, we optimize over softmax-crossentropy temperature $\tau$, and report the best value. We also monitor the best AUROC and the best attacker Advantage (Adv.)~\cite{yeom2018privacy} achieved by two popular membership inference attacks: threshold attacks~\cite{song2021systematic} and a logistic regression shadow model~\cite{shokri2017membership}.}
    \label{tab:certificates}
    \end{table*}

\rebuttal{    
\subsection{Bias of Clipless DP-SGD}\label{sec:discussionbias}

    Clipless DP-SGD \textit{also} exhibits a bias, but this bias takes a different form than the bias induced by clipping in DP-SGD.

    \paragraph{The bias of the optimizer.} In DP-SGD (with clipping) the average gradient is biased by the elementwise clipping. Therefore, the clipping may slow down convergence or lead to sub-optimal solutions.  

    \paragraph{The bias of the model in the space in Lipschitz networks.} This is a bias of the model, not of the optimizer. It has been shown that any classification task could be solved with a 1-Lipschitz classifier~\citep{bethune2022pay}, and in this sense, the bias induced by the space of 1-Lipschitz functions is not too severe. Better, this bias is precisely what allows to produce robustness certificates, see for example~\cite{yang2020closer}.  
    
    \paragraph{Finally, there is the implicit bias.} It is induced by a given architecture on the optimizer, which can have strong effects on effective generalization. For neural networks (Lipschitz or not), this implicit bias is not fully understood yet. But even on large Lipschitz models, it seems that the Lipschitz constraint biases the network toward better robustness radii but worse clean accuracy, as frequently observed in the relevant literature.
    
    Therefore, these biases influence the learning process differently, and they constitute two distinct (not necessarily exclusive) approaches. We illustrate the difference between these paradigms in Figure~\ref{fig:allthebiases}.  

\subsection{Efficiency of gradient clipping}\label{sec:comparisonsota}

    Beyond the conventional implementation of gradient clipping that can be found in Opacus or Tf-privacy, recent developments proposed more efficient forms of clipping, or to get rid of the clipping introduced by~\cite{abadi2016deep} and to use renormalization instead.  
      
    In this regard, we can mention the works of~\cite{bu2022automatic} or~\cite{yang2022normalized} that study alternative implementations of clipping based on renormalization, which eliminates the need to tune the clipping value, with convergence guarantees.  

    Other works study the alternative implementation of elementwise clipping to reduce the computational cost, like~\cite{bu2022scalable}, \cite{li2021large} \cite{he2022exploring}, and \cite{bu2023differentially}. Taking inspiration from~\cite{bu2023differentially} we summarize each one in Tables~\ref{tab:comparisonsota} and ~\ref{tab:comparisoncomplexity}.  

    \begin{table}[]
    \small
    \hspace{-1.4cm}
    \begin{tabular}{|c|ccccc|}
            \hline
        & \makecell{Instantiating\\ per-sample\\ gradient} & \makecell{Storing every\\ layer's gradient} & \makecell{Instantiating\\ non-DP\\ gradient} & \makecell{Number of\\ back-propagations} & \makecell{Overhead\\ independent of\\ batch size}\\
        \hline
        \hline
        non-DP & \cross & \cross & \checkmark & 1 & \checkmark \\
        TF-Privacy, like~\cite{abadi2016deep} & \checkmark & \cross & \checkmark & B & \cross \\
        Opacus~\citep{opacus} & \checkmark & \checkmark & \checkmark & 1 & \cross \\
        FastGradClip~\citep{lee2021scaling} & \checkmark & \cross & \cross & 2 & \cross \\
        GhostClip~\citep{li2021large,bu2022scalable} & \cross & \cross & \checkmark & 1 & \cross \\
        Book-Keeping~\citep{bu2023differentially} & \cross & \cross & \cross & 1 & \cross \\
        Clipless w/ Lipschitz (ours) & \cross & \cross & \checkmark & 1 & \checkmark\\
        Clipless w/ GNP (ours) & \cross & \cross & \checkmark & 1 & \checkmark\\
        \hline
        \end{tabular}
    \caption{\rebuttal{\textbf{Comparison of DP-SGD against existing techniques of literature}. Clipless DP-SGD is the only technique whose time/memory overhead depends exclusively on weight matrices but not the batch size.}}
    \label{tab:comparisonsota}
\end{table}

    \begin{table}
        \centering
        \begin{tabular}{|c|cc|}
        \hline
        & Time Complexity Overhead & Memory Overhead \\
        \hline
        \hline
        non-DP & 0 & 0 \\
        TF-Privacy, like~\cite{abadi2016deep} & $\BigO(BTpd)$ & 0 \\
        Opacus~\citep{opacus} & $\BigO(BTpd)$ & $\BigO(Bpd)$ \\
        FastGradClip~\citep{lee2021scaling} & $\BigO(BTpd)$ & $\BigO(Bpd)$ \\
        GhostClip~\citep{li2021large,bu2022scalable} & $\BigO(BTpd+BT^2)$ & $\BigO(BT^2)$ \\
        Book-Keeping~\citep{bu2023differentially} & $\BigO(BTpd)$ & $\BigO(B\min(pd, T^2))$ \\
        Clipless w/ Lipschitz (ours) & $\BigO(Upd)$ & 0\\
        Clipless w/ GNP (ours) & $\BigO(Upd+Vpd\min(p,d))$ & 0\\
        \hline
        \end{tabular}
        \caption{\rebuttal{\textbf{Time and memory costs for each method} for feedforward networks. We assume weight matrices of shape $p\times d$, and a \textit{physical} batch size $B$. For images, $T$ is height$\times$width. $U$ is the number of iterations in Power Iteration algorithm, and $V$ the number of iterations in Bj\"orck projection. Typically $U,V<15$ in practice. The overhead is taken relatively to non-DP training \textit{without} clipping. This table is largely inspired from~\cite{bu2023differentially}.}}
        \label{tab:comparisoncomplexity}
    \end{table}
}

\begin{figure}
    \centering
    \includegraphics[width=1\linewidth]{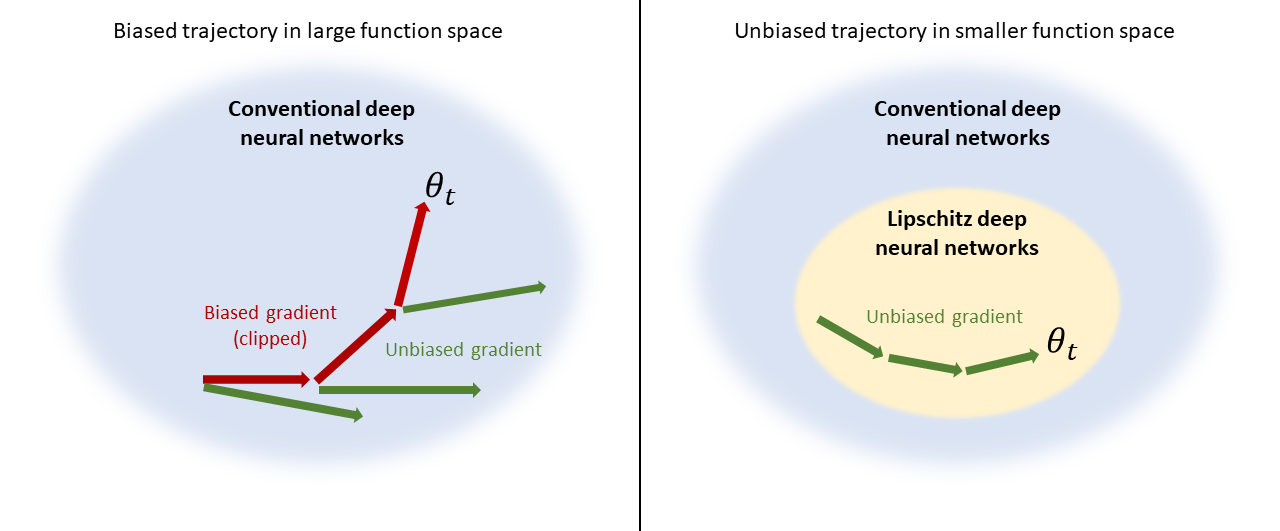}
    \caption{\rebuttal{Comparison between the bias of DP-SGD \textit{with} clipping, and Clipless DP-SGD. One is an instance of \textit{optimizer bias}, and the other of \textit{function space bias}.}}
    \label{fig:allthebiases}
\end{figure}
\label{app:method}

\section{Lipdp tutorial}\label{app:proofs}

This section gives advice on how to start your DP training processes using the framework. Moreover, it provides insights into how input pre-processing, building networks with Residual connections and Loss-logits gradient clipping could help offer better utility for the same privacy budget. 

\subsection{Prerequisite: building a $\ell$-Lipschitz network}

    As per def~\ref{def:feedforwardlipnet} a $l$-Lipschitz neural network of depth $d$ can be built by composing $\sqrt[\leftroot{3} \uproot{2} d]{l}$ layers. In the rest of this section we will focus on 1-Lipschitz networks (rather than controlling $l$ we control the loss to obtain the same effects \cite{bethune2022pay}). In order to do so the strategy consists in choosing only 1-Lipschitz activations, and to constrain the weights of parameterized layers such that it can only express 1-Lipschitz functions. For instance normalizing the weight matrix of a dense layer by its spectral norm yield a 1-Lipschitz layer (this, however cannot be applied trivially on convolution's kernel). In practice we used the layers available in the open-source library \textit{deel-lip}. In practice, when building a Lipschitz network, the following block can be used:
    
    \begin{itemize}
        \item Dense layers: are available as \textit{SpectralDense} or \textit{QuickSpectralDense} which apply spectral normalization and Bj\"orck orthogonalization, making them GNP layers.
        \item \textit{Relu} activations are replaced by \textit{Groupsort2} activations, and such activations are GNP, preventing vanishing gradient.
        \item pooling layers are replaced with \textit{ScaledL2NormPooling}, which is GNP since it performs the operation $x\mapsto (\|x_m\|_2)_m$ where $m$ is the multi-index of a patch of contiguous pixels.
        \item normalization layers like \textit{BatchNorm} are not $K$-Lipschitz. We did not accounted these layers since they can induce a privacy leak (as they keep a rolling mean and variance). A 1-Lipschitz drop-in replacement is studied in~\ref{ap:layernorm}. The relevant literature also propose drop-in replacement for this layer with proper sensitivity accounting.
    \end{itemize}

    Originally, this library relied on differentiable re-parametrizations, since it yields higher accuracy outside DP training regime (clean training without noise). However, our framework does not account for the Lipschitz constant of the re-parametrization operator. This is why we provide \textit{QuickSpectralDense} and \textit{QuickSpectralConv2D} layers to enforce Lipschitz constraints, where the projection is enforced with a tensorflow constraint. Note that \textit{SpectralDense} and \textit{SpectralConv2D} can still be used with a \textit{CondenseCallack} to enforce the projection, and bypass the back-propagation through the differentiable re-parametrization. However this last solution, while being closer to the original spirit of deel-lip, is also less efficient in speed benchmarks.  

    The Lipschitz constant of each layer is bounded by $1$, since each weight matrix is divided by the largest singular value $\sigma_{\text{max}}$. This singular value is computed with Power Iteration algorithm. Power Iteration computes the largest singular value by repeatedly computing a Rayleigh quotient asssociated to a vector $u$, and this vector eventually converges to the eigenvector $u_{\max}$ associated to the largest eigenvalue. These iterations can be expensive. However, since gradient steps are smalls, the weight matrices $W_t$ and $W_{t+1}$ remain close to each other after a gradient step. Hence their largest eigenvectors tend to be similar. Therefore, the eigenvector $u_{\max}$ can be memorized at the end of each train step, and re-used as high quality initialization at the next train step, in a ``lazy'' fashion. This speed-up makes the overall projection algorithm very efficient.   

\subsection{Getting started}

The framework we propose is built to allow DP training of neural networks in a fast and controlled approach. The tools we provide are the following :

\begin{enumerate}
    \item \textbf{A pipeline} to efficiently load and pre-process the data of commonly used datasets like MNIST, FashionMNIST and CIFAR10. 
    \item \textbf{Configuration objects} to correctly account DP events we provide config objects to fill in that will 
    \item \textbf{Model objects} on the principle of Keras' model classes we offer both a \lstinline[basicstyle=\ttfamily, backgroundcolor=\color{gray!20}]{DP_Model} and a \lstinline[basicstyle=\ttfamily, backgroundcolor=\color{gray!20}]{DP_Sequential} class to streamline the training process. 
    \item \textbf{Layer objects} where we offer a readily available form of the principal layers used for DNNs. These layers are already Lipschitz constrained and possess class specific methods to access their Lipschitz constant. 
    \item \textbf{Loss functions}, identically, we offer DP loss functions that automatically compute their Lipschitz constant for correct DP enforcing.
\end{enumerate}

We highlight below an example of a full training loop on Mnist with \textbf{lip-dp} library. Refer to the ``examples'' folder in the library for more detailed explanations in a jupyter notebook.  

\begin{center}
\begin{lstlisting}[language=Python,
basicstyle=\ttfamily\footnotesize,
stringstyle=\color{red}\ttfamily,
commentstyle=\color{olive}\ttfamily,
showstringspaces=false,
breaklines=true,
frame=lines,
classoffset=1,
morekeywords={compile, fit},
keywordstyle=\color{purple}\bfseries,
classoffset=2,
morekeywords={DP_Sequential, DP_BoundedInput, DP_QuickSpectralConv2D,
DP_GroupSort, DP_ScaledL2NormPooling2D,
DP_LayerCentering, DP_Flatten, DP_QuickSpectralDense,
DP_ClipGradient, MulticlassHinge, Adam, DP_Accountant, DPParameters},
keywordstyle=\color{blue}\bfseries,
classoffset=3,
morekeywords={input_shape, upper_bound, filters, kernel_size, strides, pool_size, loss, optimizer, epochs, batch_size, validation_data, callbacks, kernel_initializer, use_bias, drop_remainder, bound_fct, delta, noise_multiplier, noisify_strategy, metrics, dp_parameters, dataset_metadata},
keywordstyle=\color{gray}\bfseries,]
dp_parameters = DPParameters(
    noisify_strategy="global",
    noise_multiplier=2.0,
    delta=1e-5,
)

epsilon_max = 3.0

input_upper_bound = 20.0
ds_train, ds_test, dataset_metadata = load_and_prepare_data(
    "mnist",
    batch_size=1000,
    drop_remainder=True,
    bound_fct=bound_clip_value(
        input_upper_bound
    ),
)

# construct DP_Sequential
model = DP_Sequential(
    layers=[
        layers.DP_BoundedInput(
            input_shape=dataset_metadata.input_shape, upper_bound=input_upper_bound
        ),
        layers.DP_QuickSpectralConv2D(
            filters=32,
            kernel_size=3,
            kernel_initializer="orthogonal",
            strides=1,
            use_bias=False,
        ),
        layers.DP_GroupSort(2),
        layers.DP_ScaledL2NormPooling2D(pool_size=2, strides=2),
        layers.DP_QuickSpectralConv2D(
            filters=64,
            kernel_size=3,
            kernel_initializer="orthogonal",
            strides=1,
            use_bias=False,
        ),
        layers.DP_GroupSort(2),
        layers.DP_ScaledL2NormPooling2D(pool_size=2, strides=2),
        
        layers.DP_Flatten(),
        
        layers.DP_QuickSpectralDense(512),
        layers.DP_GroupSort(2),
        layers.DP_QuickSpectralDense(dataset_metadata.nb_classes),
    ],
    dp_parameters=dp_parameters,
    dataset_metadata=dataset_metadata,
)

model.compile(
    loss=losses.DP_TauCategoricalCrossentropy(18.0),
    optimizer=tf.keras.optimizers.SGD(learning_rate=2e-4, momentum=0.9),
    metrics=["accuracy"],
)
model.summary()

num_epochs = get_max_epochs(epsilon_max, model)

hist = model.fit(
    ds_train,
    epochs=num_epochs,
    validation_data=ds_test,
    callbacks=[
        # accounting is done thanks to a callback
        DP_Accountant(log_fn="logging"),
    ],
)
\end{lstlisting}
\end{center}

\subsection{Image spaces and input clipping}

Input preprocessing can be done in a completely dataset agnostic way (i.e without privacy leakages) and may yield positive results on the models utility. We explore here the choice of the color space, and the norm clipping of the input.

\subsubsection{Color space representations}

The color space of images can yield very different gradient norms during the training process. Therefore, we can take advantage of this to train our DP models more efficiently. Empirically, Figures~\ref{fig:histcolorspaceinput} and~\ref{fig:histcolorspacegrad} show that some color spaces yield narrower image norm distributions that happen to be more advantageous to maximise the mean gradient norm to noise ratio across all samples during the DP training process of GNP networks.

\subsubsection{Input clipping} 

A clever way to narrow down the distribution of the dataset's norms would be to clip the norms of the input of the model. This may result in improved utility since a narrower distribution of input norms might maximise the mean gradient norm to noise ratio for misclassified examples. Also, we advocate for the use of GNP networks as their gradients usually turn out to be closer to the upper bound we are able to compute for the gradient. See Figure~\ref{fig:histgradsallnet} and~\ref{fig:histgradsgnp}

    \label{app:input_processing}
    \begin{figure}
    \begin{subfigure}{.45\columnwidth}
        \centering
        \includegraphics[width=\linewidth]{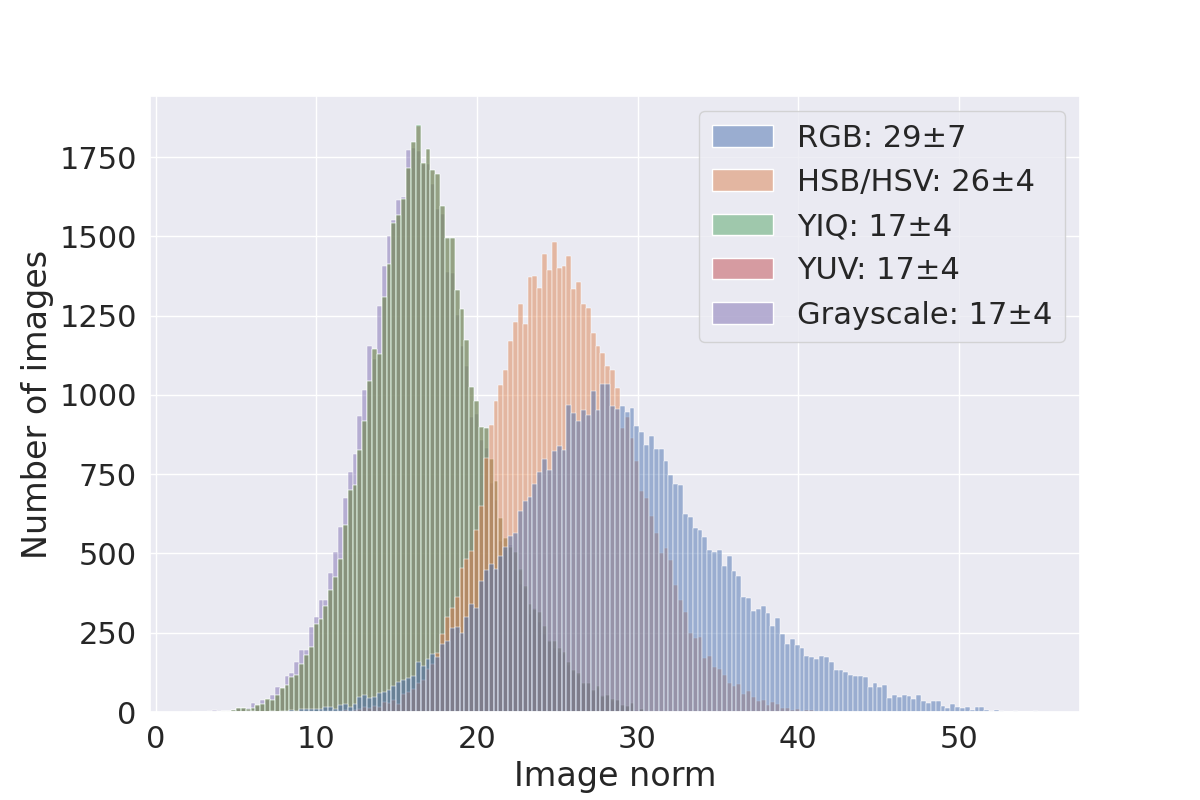}
        \caption{Input norms.}
    \label{fig:histcolorspaceinput}
    \end{subfigure}
    \hfil
    \begin{subfigure}{.45\columnwidth}
        \centering
        \includegraphics[width=\linewidth]{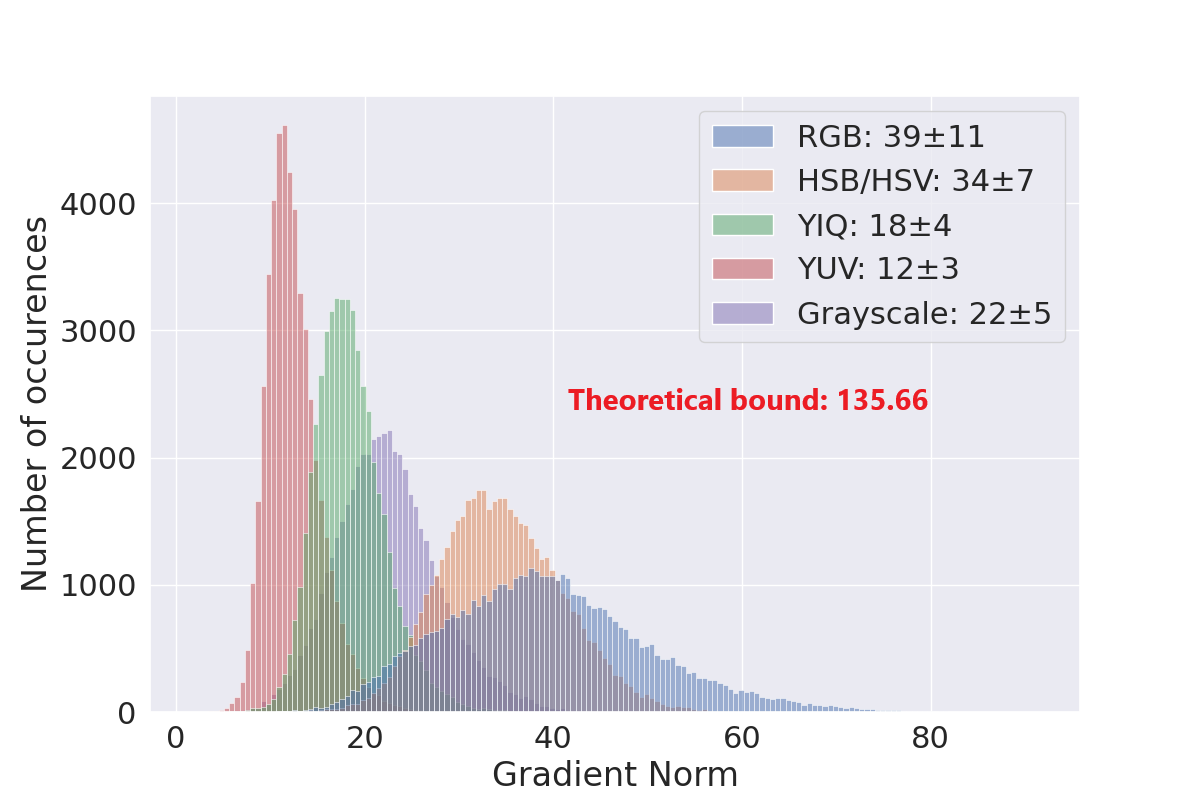}
        \caption{Gradient norms.}
    \label{fig:histcolorspacegrad}
    \end{subfigure}
    \caption{\textbf{Histogram of norms for different image space on CIFAR-10 images}. We see that for GNP networks the distribution of dataset norms have a strong influence on the norm of the individual parameterwise gradient norms of missclassified examples. The global gradient bound is equal to $135.66$ on this architecture: less than two times the maximum empirical bound of 73.}\label{fig:histcolorspace}
    \end{figure}

    \begin{figure}
    \begin{subfigure}{.45
\columnwidth}
    \centering
    \includegraphics[width=\linewidth]{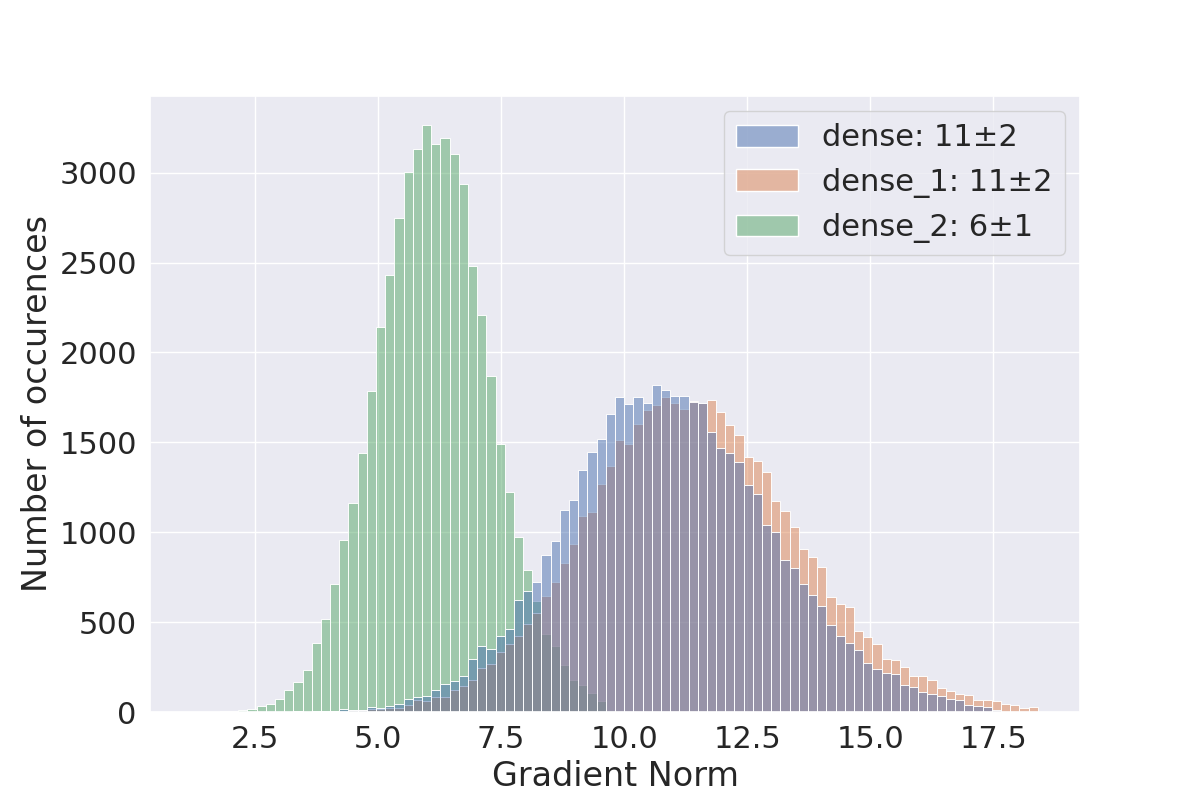}
    \caption{Conventional network.}
    \label{fig:histgradsallnet}
\end{subfigure}
\hfil
\begin{subfigure}{.45\columnwidth}
    \centering
    \includegraphics[width=\linewidth]{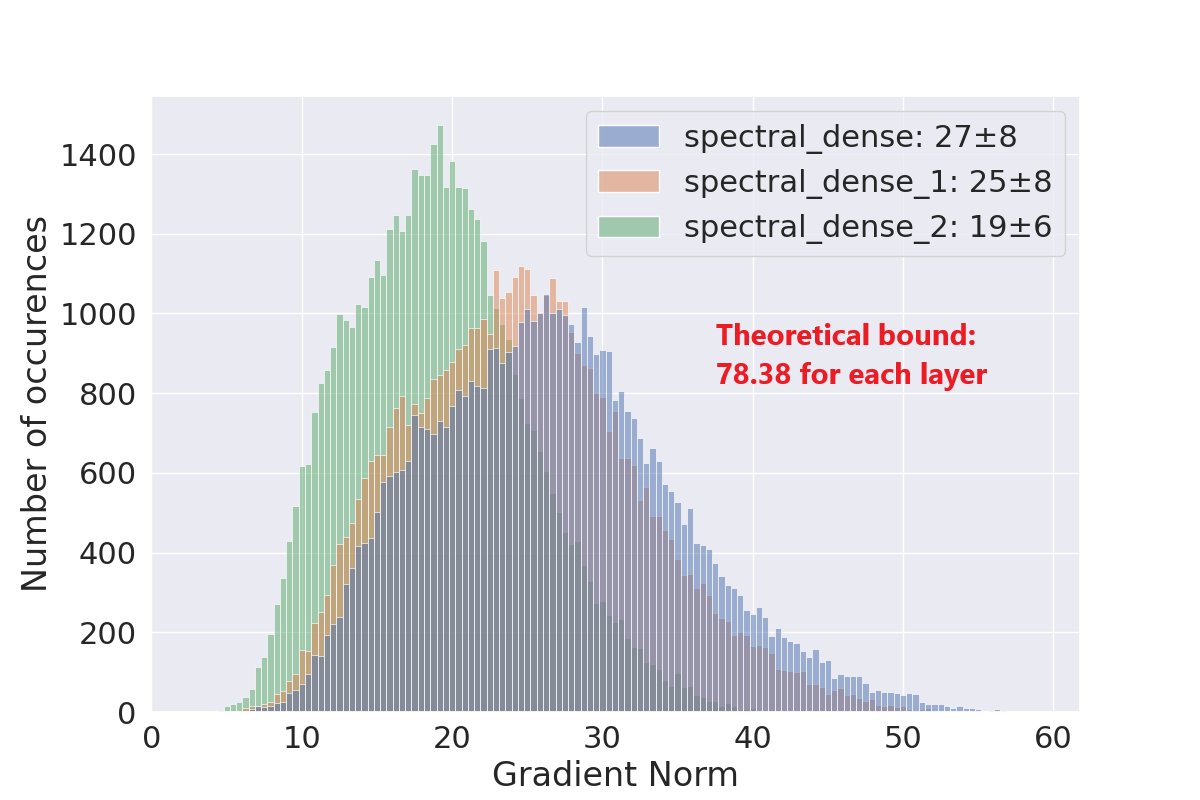}
    \caption{Gradient Norm Preserving network.}
    \label{fig:histgradsgnp}
\end{subfigure}
\caption{\textbf{Gradient norms of fully-connected networks on CIFAR-10}. We see that GNP networks exhibit a qualitatively different profile of gradient norms with respect to parameters, sticking closer to the upper bound we are able to compute for the gradient norm. The GNP network does not use biases, therefore all layers hare the same bound of $78.38$ for their gradient norm. Not too far away from the maximum empirical bound of 56 at initialization.}\label{fig:histgrad}
\end{figure}    

\subsection{Practical implementation of Residual connections}

The implementation of skip connections is made relatively straightforward in our framework by the \lstinline[basicstyle=\ttfamily, backgroundcolor=\color{gray!20}]{make_residuals} method. This function splits the input path in two, and wraps the layers inside the residual connections, as illustrated in figure~\ref{fig:residuals}.

\begin{center}
\begin{lstlisting}[language=Python,
basicstyle=\ttfamily\footnotesize,
stringstyle=\color{red}\ttfamily,
commentstyle=\color{olive}\ttfamily,
showstringspaces=false,
breaklines=true,
frame=lines,
classoffset=1,
morekeywords={compile, fit},
keywordstyle=\color{purple}\bfseries,
classoffset=2,
morekeywords={DP_Sequential, DP_BoundedInput, DP_QuickSpectralConv2D,
DP_GroupSort, DP_ScaledL2NormPooling2D,
DP_LayerCentering, DP_Flatten, DP_QuickSpectralDense,
DP_ClipGradient, MulticlassHinge, Adam, DP_Accountant, DPParameters, make_residuals},
keywordstyle=\color{blue}\bfseries,
classoffset=3,
morekeywords={input_shape, upper_bound, filters, kernel_size, strides, pool_size, loss, optimizer, epochs, batch_size, validation_data, callbacks, kernel_initializer, use_bias, drop_remainder, bound_fct, delta, noise_multiplier, noisify_strategy, metrics, dp_parameters, dataset_metadata},
keywordstyle=\color{gray}\bfseries,]
from lipdp.layers import make_residuals
## Manual implementation of residual connection:
layers = [DP_SplitResidual(),
          DP_WrappedResidual(DP_QuickSpectralConv2D(16, (3, 3)),
          DP_WrappedResidual(DP_GroupSort(2)),
          DP_MergeResidual('1-lip-add')]
## Or equivalently, with helper function:
layers = make_residuals([
    DP_QuickSpectralConv2D(16, (3, 3),
    DP_GroupSort(2)
 ])
\end{lstlisting}
\end{center}

\begin{figure}
        \centering
        \includegraphics[width=\linewidth, height=5cm]{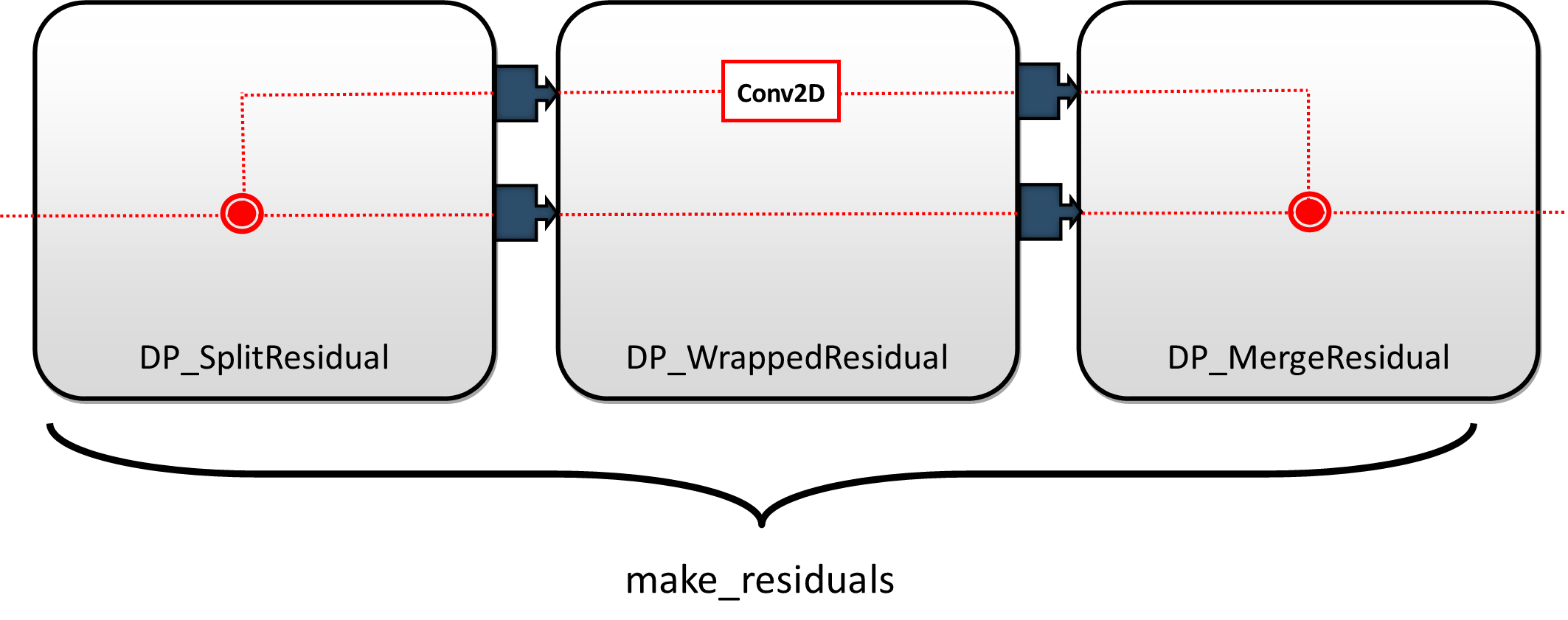}
        \caption{Implementation of a skip connection in the \textbf{lip-dp} framework. The meta block \textbf{DP\_WrappedResidual} handle the forward propagation and the backward propagation of pairs of bounds (one for each computation path) by leveraging the forward and the backward of sub-blocks. \textbf{DP\_SplitResidual} handle the creation of a tuple of input bounds at the forward, and collapse the tuple of gradient bounds into a scalar at the backward, while \textbf{DP\_MergeResidual} does the opposite. All those operations are wrapped under the convenience function \textbf{make\_residuals}.}
        \label{fig:residuals}
\end{figure}

By using this implementation, the sensitivity of the gradient computation and the input bounds to each layer are correctly computed when the model's path is split. This allows for fairly easy implementations of models like the MLP-Mixer and ResNets. Since convolutional models may suffer from gradient vanishing and that dense based models are relatively restrictive in terms of architecture, implementing skip connections could be a useful feature for our framework.


\subsection{Loss-logits gradient clipping}

    The advantage of the traditional DP-SGD approach is that through hyperparameter optimization on the global gradient clipping constant, we indirectly optimize the mean signal to noise ratio on missclassified examples. However, this clipping constant is not really explainable and rather just an empirical result of optimization on a given architecture and loss function. 

    Importantly, our framework is compatible with a more efficient and explainable form of clipping. As more and more examples are correctly classified, the magnitude of $\textcolor{backwardcolor}{\|\nabla_f \Loss\|}$ tend to diminish over the course of training, quickly falling below the upper bound and diminishing the gradient norm to noise standard deviation ratio. Gradient clipping strategy is still available in our framework. But instead of clipping the parameter gradient $\nabla_{\theta}\Loss$, any intermediate gradient $\nabla_{f_d}\Loss$ can be clipped during backpropagation. Indeed, by introducing a ``\textit{clipping layer}'' that behave like identity function at the forward pass, and clip the gradient during the backward pass, we are able to clip the gradient of the loss on the final logits for a minimal cost.  
    
    Indeed, in this case the size of the clipped vector is the output dimension $|\hat y|$, which is small for a lot of practical regression and classification tasks. For example in CIFAR-10 the output vector is of length $10$. This scales better with the batch size than the weight matrices that are typically of sizes~${64\times 64}$ or ${128\times 128}$. More precisely, in vanilla DP-SGD the per-weight gradient clipping is applied on the batched gradient $\nabla_{W_d}\Loss$ which is a tensor of size $b\times h^2$ for matrix weight~${W_d\in\Reals^{h\times h}}$. In privacy it is not unusual to have $b$ and $h$ in the $[10^2,10^4]$ range. Clipping this vector can cause memory issues or slowdowns~\cite{lee2021scaling}. However $\nabla_f\Loss$ is of size $b\times h$ which is significantly cheaper to clip. Note that the resulting ``gradient'' vector is not a true gradient anymore in a strictly mathematical sense, but rather a ``descent direction''~\cite{boyd2004convex}.  
    
    Furthermore, if the gradient clipping layer is inserted on the tail of the network (between the logits and the loss) we can characterize its effects on the training, in particular for classification tasks with binary cross-entropy loss.  

    We denote by $\Loss(\text{Clip}^C_{\nabla}(\hat y))$ the loss wrapped under a \textbf{DP\_ClipGradient} layer that behaves like identity $x\mapsto x$ at the forward, and clips the gradient norm to $C>0$ in the backward pass:
    \begin{align}
        \nabla_{\hat y}\left(\Loss(\text{Clip}^C_{\nabla}(\hat y))\right)\defeq& \min{(1,\frac{C}{\|\nabla_{\hat y}\Loss(\hat y)\|})}\nabla_{\hat y}\Loss(\hat y)\\ 
        =&\min{(\|\nabla_{\hat y}\Loss(\hat y)\|,C)}\frac{\nabla_{\hat y}\Loss(\hat y)}{\|\nabla_{\hat y}\Loss(\hat y)\|}.
    \end{align}
    We denote by $\overrightarrow{g(\hat y)}$ the unit norm vector $\frac{\nabla_{\hat y}\Loss(\hat y)}{\|\nabla_{\hat y}\Loss(\hat y)\|}$. Then:
    \[
        \nabla_{\hat y}\left(\Loss(\text{Clip}^C_{\nabla}(\hat y))\right) = \min{(\|\nabla_{\hat y}\Loss(\hat y)\|,C)}\overrightarrow{g(\hat y)}.
    \]

    \begin{restatable}[Clipped binary cross-entropy loss is the Wasserstein dual loss]{proposition}{bceiswass}\label{prop:bceiswass}
    Let $\Loss_{BCE}(\hat y,y)=-\log{(\sigma(\hat yy))}$ be the binary cross-entropy loss, with $\sigma(\hat yy)=\frac{1}{1+\exp{(-\hat yy)}}$ the sigmoid activation, assuming discrete labels~${y\in\{-1,+1\}}$. Assume examples are sampled from the dataset $\Dataset$. Let $\hat y(\theta, x)=f(\theta,x)$ be the predictions at input $x\in\Dataset$. Then for every $C>0$ sufficiently small, a gradient descent step with the clipped gradient $\nabla_{\theta}\Expect_{\Dataset}[\Loss(\text{Clip}^C_{\nabla}(\hat y,y))]$ is identical to the  
    gradient ascent step obtained from Kantorovich-Rubinstein loss $\Loss_{KR}(\hat y, y)=\hat yy$.  
    \end{restatable}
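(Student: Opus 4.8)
The plan is to reduce the statement to an elementary computation of the logit-gradient of the binary cross-entropy and then exploit the structure of the clipping operator to show that the clipped descent direction is proportional to the Kantorovich--Rubinstein ascent direction. First I would compute $\nabla_{\hat y}\Loss_{BCE}(\hat y,y)$: writing $\sigma(z)=1/(1+e^{-z})$ and using $\sigma'(z)=\sigma(z)(1-\sigma(z))$, the chain rule gives
\begin{equation}
\nabla_{\hat y}\Loss_{BCE}(\hat y,y)=-\frac{\sigma'(\hat y y)\,y}{\sigma(\hat y y)}=-y\bigl(1-\sigma(\hat y y)\bigr).
\end{equation}
Since $y\in\{-1,+1\}$ and $0<\sigma(\hat y y)<1$, this scalar has magnitude $\|\nabla_{\hat y}\Loss_{BCE}(\hat y,y)\|=1-\sigma(\hat y y)\in(0,1)$ and unit-norm direction $\overrightarrow{g(\hat y)}=-y$, which crucially does not depend on $\hat y$.

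Next I would handle the clipping. At a fixed parameter vector $\theta$ the dataset $\Dataset$ is finite and the logits $\hat y(\theta,x)=f(\theta,x)$ are finite, so $m(\theta)\defeq\min_{x\in\Dataset}\bigl(1-\sigma(\hat y(\theta,x)y)\bigr)>0$; this is the precise meaning of ``$C$ sufficiently small'' at the current iterate, namely $C<m(\theta)$. For such $C$ the clip is active on every sample, so by the definition of $\text{Clip}^C_{\nabla}$ recalled above the cotangent backpropagated at the logits equals $\min(\|\nabla_{\hat y}\Loss_{BCE}\|,C)\,\overrightarrow{g(\hat y)}=-Cy$ for every example. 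Pushing this cotangent back through the network and averaging over $\Dataset$ gives
\begin{equation}
\nabla_{\theta}\Expect_{\Dataset}\bigl[\Loss(\text{Clip}^C_{\nabla}(\hat y,y))\bigr]=-C\,\Expect_{\Dataset}\bigl[y\,\nabla_{\theta}\hat y(\theta,x)\bigr].
\end{equation}

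Finally I would compare with $\Loss_{KR}$. Since $\Loss_{KR}(\hat y,y)=\hat y y$, the chain rule gives $\nabla_{\theta}\Expect_{\Dataset}[\Loss_{KR}(\hat y,y)]=\Expect_{\Dataset}[y\,\nabla_{\theta}\hat y(\theta,x)]$. Hence a gradient descent step of size $\eta$ on the clipped cross-entropy reads $\theta\mapsto\theta+\eta C\,\Expect_{\Dataset}[y\,\nabla_{\theta}\hat y(\theta,x)]$, which is exactly the gradient ascent step of size $\eta C$ on $\Loss_{KR}$; absorbing the constant $C$ into the learning rate makes the two updates coincide, proving the claim. I expect the only genuinely delicate point to be the reading of ``$C$ sufficiently small'': it must be understood per-iterate and rests on the finiteness of $\Dataset$ and of the logits to produce the uniform lower bound $m(\theta)>0$ that makes the clip active on all samples at once (otherwise the aggregated direction would no longer be proportional to $\Expect_{\Dataset}[y\,\nabla_{\theta}\hat y]$). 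One should also recall, as the paper already notes, that the clipped cotangent is a descent direction rather than a true gradient, which is why the statement compares gradient steps rather than loss functions; the remaining pieces --- the logit-gradient of $\Loss_{BCE}$ and the chain rule through the identity-like clipping layer --- are routine.
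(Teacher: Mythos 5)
Your proposal is correct and follows essentially the same route as the paper's proof: compute the logit gradient of the binary cross-entropy, observe that its direction is determined by the label (so it equals $-y$ up to a positive scalar), argue that for $C$ small enough the clip is active on every example so the backpropagated cotangent has magnitude exactly $C$, and conclude that the resulting step is a length-$C$ (equivalently, rescaled learning-rate) ascent step on the Kantorovich--Rubinstein objective. The only cosmetic difference is that you justify ``$C$ sufficiently small'' via finiteness of the dataset and of the logits at the current iterate, whereas the paper invokes compact support of the data together with piecewise continuity of $\hat y\mapsto|\nabla_{\hat y}\Loss|$ to get the uniform lower bound; both yield the same conclusion.
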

    \begin{proof}
    In the following we use the short notation $\Loss$ in place of $\Loss_{BCE}$. Assume that examples with labels $+1$ (resp. $-1$) are sampled from distribution $P$ (resp. $Q$).
    By definition:
    \begin{equation*}
         \nabla_{\theta}\Expect_{\Dataset}[\Loss(\text{Clip}^C_{\nabla}(\hat y,y))] = \nabla_{\theta}\left(\Expect_{x\sim P}[\Loss(\text{Clip}^C_{\nabla}(f(\theta, x),+1))]+\Expect_{x\sim Q}[\Loss(\text{Clip}^C_{\nabla}(f(\theta, x),-1))]\right).
    \end{equation*}
    Observe that the output of the network is a single scalar, hence $\overrightarrow{g(\hat y)}\in\{-1,+1\}$. We apply the chainrule $\nabla_{\theta}\Loss=\nabla_{\hat y}\Loss\Jac{\hat y}{\theta}=\overrightarrow{g(\hat y)}\min{(\|\nabla_{\hat y}\Loss(\hat y)\|,C)}\nabla_{\theta}f(\theta, x)$. We note $R(\hat y,C)\defeq\min{(\|\nabla_{\hat y}\Loss(\hat y)\|,C)}>0$ and we obtain:
    \begin{equation}
        \nabla_{\theta}\Expect_{\Dataset}[\Loss(\text{Clip}^C_{\nabla}(\hat y,y))] = \nabla_{\theta}\left(\Expect_{x\sim P}[\overrightarrow{g(\hat y)}R(\hat y,C)\nabla_{\theta}f(\theta, x)]+\Expect_{x\sim Q}[\overrightarrow{g(\hat y)}R(\hat y,C)\nabla_{\theta}f(\theta, x)]\right).
    \end{equation}
    Observe that the value of $\overrightarrow{g(\hat y)}$ can actually be deduced from the label $y$, which gives:
    \begin{equation}
        \nabla_{\theta}\Expect_{\Dataset}[\Loss(\hat y,y)] = -\Expect_{x\sim P}[R(\hat y,C)\nabla_{\theta}f(\theta, x)]+\Expect_{x\sim Q}[R(\hat y,C)\nabla_{\theta}f(\theta, x)].
    \end{equation}
    Observe that the function $x\mapsto \nabla_{\hat y}\Loss(\hat y)$ is piecewise-continuous when the loss $\hat y\mapsto \Loss(\hat y, y)$ is piecewise continuous. Observe that $|\nabla_{\hat y}\Loss(\hat y)|$ is non zero, since the loss $\Loss$ does not achieve its minimum over the open set $(-\infty,+\infty)$, since $\sigma(\hat y)\in (0,1)$. Assuming that the data $x\in\Dataset$ lives in a compact space (or equivalently that $P$ and $Q$ have compact support), since $x\mapsto|\nabla_{\hat y}\Loss(\hat y)|$ is piecewise continuous (with finite number of pieces for finite neural networks) it attains its minimum $C'>0$. Choosing any $C<C'$ implies that $R(\hat y, C)=C$, which yields:
    \begin{align*}
        \nabla_{\theta}\Expect_{\Dataset}[\Loss(\text{Clip}^C_{\nabla}(\hat y,y))] &= -\Expect_{x\sim P}[C\nabla_{\theta}f(\theta, x)]+\Expect_{x\sim Q}[C\nabla_{\theta}f(\theta, x)]\\
        &= -C\left(\Expect_{x\sim P}[\nabla_{\theta}f(\theta, x)]-\Expect_{x\sim Q}[\nabla_{\theta}f(\theta, x)]\right).
    \end{align*}
    This corresponds to a gradient \textit{ascent} step of length $C$ on the Kantorovich-Rubinstein (KR) objective $\Loss_{KR}(\hat y, y)=\hat yy$. This loss is named after the Kantorovich Rubinstein duality that arises in optimal transport, than states that the Wasserstein-1 distance is a supremum over 1-Lipschitz functions:
    \begin{equation}
        \mathcal{W}_1(P,Q)\defeq\underset{f\in\text{1-Lip}(\Dataset,\Reals)}{\sup}\Expect_{x\sim P}[f(x)]-\Expect_{x\sim Q}[f(x)].
    \end{equation}
    Hence, with clipping $C$ small enough the gradient steps are actually identical to the ones performed during the estimation of Wasserstein-1 distance.  
    \end{proof}
  
    The adaptive clipping of~\cite{andrew2021differentially} takes the form of a Gaussian mechanism so the implementation of the RDP accountant is straightforward. 

    In future works, other multi-class losses can be studied through the lens of per-example clipping. Our framework permits the use of gradient clipping while at the same time faciliting the theoretical anaysis.  

\subsubsection{Possible improvements}

Our framework is compatible with possible improvements in methods of data pre-processing. For instance some works suggest that feature engineering is the key to achieve correct utility/privacy trade-off \cite{tramer2021differentially} some other work rely on heavily over-parametrized networks, coupled with batch size \cite{de2022unlocking}. While we focused on providing competitive and reproducible baselines (involving minimal pre-processing and affordable compute budget) our work is fully compatible with those improvements.
Secondly the field of GNP networks (also called orthogonal networks) is still an active field, and new methods to build better GNP networks will improve the efficiency of our framework ( for instance orthogonal convolutions \cite{achour2022existence},\cite{li2019orthogonal},\cite{li2019preventing},\cite{trockman2021orthogonalizing},\cite{singla2021skew} are still an active topic).
Finally some optimizations specific to our framework can also be developed: a scheduling of loss-logits clipping might allow for better utility scores by following the declining value of the gradient of the loss, therefore allowing for a better mean signal to noise ratio across a diminishing number of miss-classified examples. Observe that our framework is also compatible with other ideas, such as the progressing freezing of deeper layers proposed by~\cite{tang2022dp}.

\section{Computing sensitivity bounds}
    \subsection{Lipschitz constants of common loss functions}\label{sec:lossboundsproofs}

    \begin{table}
        \hspace{-0.7cm}
        \begin{tabular}{|c|ccc|}
        \hline
            Loss & Hyper-parameters &  $\Loss(\yhat,y)$ & Lipschitz bound $L$\\
            \hline
            \hline
            \makecell{Softmax Cross-entropy} & temperature $\tau>0$ & $y^T\log{\text{softmax}(\yhat/\tau)}$ & $\sqrt{2}/\tau$\\
            \hline
            \makecell{Cosine Similarity} & bound $X_\text{min}>0$ & $\frac{y^T\yhat}{\max(\|\yhat\|, X_\text{min}}$ & $1/X_\text{min}$\\
            \hline
            \makecell{Multiclass Hinge} & margin $m>0$ & $\{\text{max}(0,\frac{m}{2} - \yhat_i . y_i)\}_{1\leq i \leq K}$ & 1\\
            \hline
            \makecell{Kantorovich-Rubenstein} & N/A & $\{\yhat,y\}$ & 1\\
            \hline
            \makecell{Hinge Kantorovich-Rubenstein} & \makecell{margin $m>0$\\ regularization $\alpha > 0$} & $\alpha . \Loss_{MH}(\yhat,y) + \Loss_{MKR}(\yhat,y)$ & $1+\alpha$\\
            \hline
        \end{tabular}
        \caption{Lipschitz constant of common supervised classification losses used for the training of Lipschitz neural networks with $k$ classes. Proofs in Section~\ref{sec:lossboundsproofs}.}
        \label{tab:cst_lip_loss}
    \end{table}
    
This section contains the proofs related to the content of Table~\ref{tab:cst_lip_loss}. Our framework wraps over some losses found in deel-lip library, that are wrapped by our framework to provide Lipschitz constant automatically during backpropagation for bounds.

\paragraph{Multiclass Hinge} This loss, with min margin $m$ is computed in the following manner for a one-hot encoded ground truth vector $y$ and a logit prediction $\hat{y}$ : 
 $$\mathcal{L}_{MH} (\hat{y},y) = \{\max(0,\frac{m}{2} - \hat{y}_1 . y_1) , ... , \max(0,\frac{m}{2} - \hat{y}_k . y_k) \}.$$

\noindent And $\|\frac{\partial}{\partial y}\mathcal{L}_{MH} (\hat{y},y)\|_2 \leq \|\hat{y}\|_2$. Therefore $L_{H} = 1$.

\paragraph{Multiclass Kantorovich Rubenstein} This loss, is computed in a one-versus all manner, for a one-hot encoded ground truth vector $y$ and a logit prediction $\hat{y}$ :
 
 $$\mathcal{L}_{MKR} (\hat{y},y) = \{\hat{y}_1 - y_1 , \ldots , \hat{y}_k - y_k) \}.$$

\noindent Therefore, by differentiating, we also get $L_{KR} = 1$.

 \paragraph{Multiclass Hinge - Kantorovitch Rubenstein} This loss, is computed in the following manner for a one-hot encoded ground truth vector $y$ and a logit prediction $\hat{y}$ :

$$\mathcal{L}_{MHKR} (\hat{y},y) = \alpha\mathcal{L}_{MH} (\hat{y},y) + \mathcal{L}_{MKR} (\hat{y},y). $$

\noindent By linearity we get $L_{HKR} = \alpha + 1$.

\paragraph{Cosine Similarity} Cosine Similarity is defined in the following manner element-wise : 

$$\mathcal{L}_{CS} (\hat{y},y) = \frac{\hat{y}^Ty}{\|\hat{y}\|_2\|y\|_2}. $$

\noindent And $y$ is one-hot encoded, therefore $\mathcal{L}_{CS} (\hat{y},y) = \frac{\hat{y_i}}{\|\hat{y}\|_2}$. Therefore, the Lipschitz constant of this loss is dependant on the minimum value of $\hat{y}$. A reasonable assumption would be $\forall x \in \mathcal{D}$ :  $X_{min} \leq \|x\|_2 \leq X_{max}$. Furthermore, if the networks are Norm Preserving with factor K, we ensure that:
$$KX_{min} \leq \|\hat{y}\|_2 \leq KX_{max}.$$

\noindent Which yields: $L_{CS} = \frac{1}{KX_{min}}$. The issue is that the exact value of $K$ is never known in advance since Lipschitz networks are rarely purely Norm Preserving in practice due to various effects (lack of tightness in convolutions, or rectangular matrices that can not be perfectly orthogonal). 

\noindent Realistically, we propose the following loss function in replacement:

$$\mathcal{L}_{K-CS} (\hat{y},y) = \frac{\hat{y_i}}{\max(KX_{min},\|\hat{y}\|_2)}.$$

\noindent Where $K$ is an input given by the user, therefore enforcing $L_{K-CS} = \frac{1}{KX_{min}}$.

\paragraph{Categorical Cross-entropy from logits} The logits are mapped into the probability simplex with the \textit{Softmax} function $\Reals^K\rightarrow (0,1)^K$. We also introduce a temperature parameter $\tau>0$, which hold signifance importance in the accuracy/robustness tradeoff for Lipschitz networks as observed by~\cite{bethune2022pay}.  
We assume the labels are discrete, or one-hot encoded: we do not cover the case of label smoothing.  
\begin{equation}
S_{j} = \frac{\exp(\tau \hat y_{j})}{\underset{i}{\sum} \exp(\tau \hat y_{i})}.
\end{equation}
We denote the prediction associated to the true label $j^{+}$ as $S_{j^{+}}$. The loss is written as: 
\begin{equation}
\Loss(\hat y)  =  -\log(S_{j^{+}}).
\end{equation}
Its gradient with respect to the logits is:
\begin{equation}
\nabla_{\hat y}\Loss = \left\{\begin{array}{ll}
\tau(S_{j^{+}}-1) & \mbox{if }  j = j^{+}, \\
\tau S_{j} & \mbox{otherwise } 
\end{array}\right.
\end{equation}

The temperature factor $\tau$ is a multiplication factor than can be included in the loss itself, by using $\frac{1}{\tau}\Loss$ instead of $\Loss$. This formulation has the advantage of facilitating the tuning of the learning rate: this is the default implementation found in deel-lip library. The gradient can be written in vectorized form:
$$\nabla_{\hat y} \Loss=S-1_{\{j=j^{+}\}}.$$
 By definition of Softmax we have $\sum_{j\neq j^{+}}S_j^2\leq 1$. Now, observe that $S_j\in (0,1)$, and as a consequence $(S_{j^{+}}-1)^2\leq 1$. Therefore $\|\nabla_{\hat y} \Loss\|^2_2= \sum_{j\neq j^{+}}S_j^2+(S_{j^{+}}-1)^2\leq 2$.
Finally $\|\nabla_{\hat y} \Loss\|_2=\sqrt{2}$ and $L_{CCE}=\sqrt{2}$.

\subsection{Layer bounds}\label{sec:convnet}

    The Lipschitz constant (with respect to input) of each layer of interest is summarized in table~\ref{tab:cst_lip_backprop}, while the Lipschitz constant with respect to parameters is given in table~\ref{tab:cst_lip_layer}.

    \begin{table}
        \centering
        \begin{tabular}{|ccc|}
        \hline
            Layer & \makecell{Hyper\\ parameters} & $\|\Jac{f_t(\theta_t,x)}{\theta_t}\|_2$\\
            \hline
            \hline
            1-Lipschitz dense & none & 1\\
            \hline
            Convolution & window $s$ & $\sqrt{s}$\\
            \hline
            \makecell{RKO convolution} & \makecell{window $s$\\ image size $H\times W$} & $\sqrt{1/((1-\frac{(h-1)}{2H})(1-\frac{(w-1)}{2W}))}$ \\
            \hline
        \end{tabular}
        \caption{Lipschitz constant with respect to parameters in common Lipschitz layers. We report only the multiplicative factor that appears in front of the input norm $\|x\|_2$.}
        \label{tab:cst_lip_layer}
    \end{table}

    \begin{table}
        \centering
        \begin{tabular}{|ccc|}
        \hline
            Layer & \makecell{Hyper\\ parameters} & $\|\Jac{f_t(\theta_t,x)}{x}\|_2$\\
            \hline
            \hline
            Add bias & none & 1\\
            \hline
            1-Lipschitz dense & none & 1\\
            \hline
            \makecell{RKO convolution} & none & $1$\\
            \hline
            \makecell{Layer centering} & none & $1$\\
            \hline
            Residual block & none & $2$\\
            \hline
            \makecell{ReLU, GroupSort\\ softplus, sigmoid, tanh} & none & $1$\\
            \hline
        \end{tabular}
        \caption{Lipschitz constant with respect to intermediate activations.}
        \label{tab:cst_lip_backprop}
    \end{table}

    \subsubsection{Dense layers}

    Below, we illustrate the basic properties of Lipschitz constraints and their consequences for gradient bounds computations. While for dense layers the proof is straightforward, the main ideas can be re-used for all linear operations which includes the convolutions and the layer centering.
    
    \begin{restatable}{property}{densegradient}{\normalfont\textbf{Gradients for dense Lipschitz networks.}}
            Let $x\in\Reals^{C}$ be a data-point in space of dimensions $C\in\Natural$.  
            Let $W\in\Reals^{C \times F}$ be the weights of a dense layer with $F$ features outputs.
            We bound the spectral norm of the Jacobian as 
            \begin{equation}
                \left\|\Jac{(W^{T}x)}{W}\right\|_2\leq \|x\|_2.
            \end{equation}
        \end{restatable}
    \begin{proof}
        Since $W\mapsto W^Tx$ is a linear operator, its Lipschitz constant is exactly the spectral radius:
        $$\frac{\|W^Tx-W'^Tx\|_2}{\|W-W'\|_2}=\frac{\|(W-W')^Tx\|_2}{\|W-W'\|_2}\leq \frac{\|W-W'\|_2\|x\|_2}{\|W-W'\|_2}= \|x\|_2.$$
        Finally, observe that the linear operation $x\mapsto W^Tx$ is differentiable, hence the spectral norm of its Jacobian is equal to its Lipschitz constant with respect to $l_2$ norm.   
    \end{proof}

    \subsubsection{Convolutions}
    
    \begin{restatable}{property}{convgradient}{\normalfont\textbf{Gradients for convolutional Lipschitz networks.}}
        Let $x\in\Reals^{S\times C}$ be an data-point with channels $C\in\Natural$ and spatial dimensions $S\in\Natural$. In the case of a time serie $S$ is the length of the sequence, for an image $S=HW$ is the number of pixels, and for a video $S=HWN$ is the number of pixels times the number of frames.  
        Let $\Psi\in\Reals^{s\times C\times F}$ be the weights of a convolution with:
        \begin{itemize}
            \item window size $s\in\Natural$ (e.g $s=hw$ in 2D or $s=hwn$ in 3D),
            \item with $C$ input channels,
            \item with $F\in\Natural$ output channels.
            \item we don't assume anything about the value of strides. Our bound is typically tighter for strides=1, and looser for larger strides.
        \end{itemize}
        We denote the convolution operation as $(\Psi*\cdot):\Reals^{S\times C}\rightarrow\Reals^{S\times F}$ with either zero padding, either circular padding, such that the spatial dimensions are preserved.  
        Then the Jacobian of convolution operation with respect to parameters is bounded:
        \begin{equation}
            \|\Jac{(\Psi*x)}{\Psi}\|_2\leq \sqrt{s}\|x\|_2.
        \end{equation}
    \end{restatable}
    \begin{proof}
        Let $y=\Psi*x\in\Reals^{S\times F}$ be the output of the convolution operator.  
        Note that $y$ can be uniquely decomposed as sum of output feature maps $y = \sum_{f=1}^F y^f$
        where $y^f\in\Reals^{S\times F}$ is defined as:
            \[
            \begin{cases}
                (y^f)_{if} = y_{if} & \text{ for all } 1\leq i\leq S,\\
                (y^f)_{ij} = 0 & \text{ if } j\neq f.
            \end{cases}
            \]
        Observe that $(y^f)^Ty^{f'}=0$ whenever $f\neq f'$. As a consequence Pythagorean theorem yields $\|y\|^2_2 = \sum_{f=1}^F \|y^f\|_2^2$.
        Similarly we can decompose each output feature map as a sum of pixels $y^f = \sum_{p=1}^S y^{pf}$.
        where $y^{pf}\in\Reals^{S\times F}$ fulfill:
            \[
            \begin{cases}
                (y^{pf})_{ij} = 0&\text{ if } i\neq p, j\neq f,\\
                (y^{pf})_{pf} = y_{pf}&\text{ otherwise.}
            \end{cases}
            \]
        Once again Pythagorean theorem yields $\|y^f\|^2_2 = \sum_{p=1}^S \|y^{pf}\|_2^2$.
        It remains to bound $y^{pf}$ appropriately. Observe that by definition:
            $$y^{pf}=(\Psi*x)_{pf}=(\Psi^f)^T x^p[s].$$
        where $\Psi^f\in\Reals^{s\times C}$ is a slice of $\Psi$ corresponding to output feature map $f$, and $x^p[s]\in\Reals^{s\times C}$ denotes the patch of size $s$ centered around input element $p$. For example, in the case of images with $s=3\times 3$, $p$ are the coordinates of a pixel, and $x^p[s]$ are the input feature maps of $3\times 3$ pixels around it.  
        We apply Cauchy-Schwartz:
            $$\|y^{pf}\|_2^2 \leq \|\Psi^f\|^2_2\times \|x^p[s]\|^2_2.$$
        By summing over pixels we obtain:
            \begin{align}
                \|y^f\|^2_2 &\leq \|\Psi^f\|^2_2\sum_{p=1}^S \|x^p[s]\|^2_2,\\
                \implies\|y\|^2_2 &\leq (\sum_{f=1}^F\|\Psi^f\|^2_2)(\sum_{p=1}^S \|x^p[s]\|^2_2),\\
                \implies\|y\|^2_2 &\leq \|\Psi\|^2_2\times \left(\sum_{p=1}^S \|x^p[s]\|^2_2\right).
            \end{align}
        The quantity of interest is $\sum_{p=1}^S \|x^p[s]\|^2_2$ whose squared norm is the squared norm of all the patches used in the computation. With zero or circular padding, the norm of the patches cannot exceed those of input image. Note that each pixel belongs to atmost $s$ patches, and even exactly $s$ patches when circular padding is used:
            $$\sum_{p=1}^S \|x^p[s]\|^2_2 \leq s\sum_{p=1}^S \|x_p\|_2^2=s\|x\|_2^2.$$
        Note that when strides>1 the leading multiplicative constant is typically smaller than $s$, so this analysis can be improved in future work to take into account strided convolutions.  
        Since $\Psi$ is a linear operator, its Lipschitz constant is exactly its spectral radius:
        $$\frac{\|(\Psi*x)-(\Psi'*x)\|_2}{\|\Psi-\Psi'\|_2}=\frac{\|(\Psi-\Psi')*x\|_2}{\|\Psi-\Psi'\|_2}\leq \frac{\sqrt{s}\|\Psi-\Psi'\|_2\|x\|_2}{\|\Psi-\Psi'\|_2}= \sqrt{s}\|x\|_2.$$
        Finally, observe that the convolution operation $\Psi*x$ is differentiable, hence the spectral norm of its Jacobian is equal to its Lipschitz constant with respect to $l_2$ norm:
            $$\|\Jac{(\Psi*x)}{\Psi}\|_2\leq \sqrt{s}\|x\|_2.$$
    \end{proof}
    
    An important case of interest are the convolutions based on Reshaped Kernel Orthogonalization (RKO) method introduced by~\cite{li2019preventing}. The kernel $\Psi$ is reshaped into 2D matrix of dimensions $(sC\times F)$ and this matrix is orthogonalised. This is not sufficient to ensure that the operation $x\mapsto \Psi*x$ is orthogonal - however it is 1-Lipschitz and only \textit{approximately} orthogonal under suitable re-scaling by $\mathcal{N}>0$.  
    
    \begin{corollary}[Loss gradient for RKO convolutions.]
        For RKO methods in 2D used in~\cite{serrurier2021achieving}, the convolution kernel is given by $\Phi = \mathcal{N}\Psi$ where $\Psi$ is an orthogonal matrix (under RKO) and $\mathcal{N}>0$ a factor ensuring that $x\mapsto \Phi*x$ is a 1-Lipschitz operation. Then, for RKO convolutions without strides we have:
        \begin{equation}
            \|\Jac{(\Psi*x)}{\Psi}\|_2\leq \sqrt{\frac{1}{(1-\frac{(h-1)}{2H})(1-\frac{(w-1)}{2W})}}\|x\|_2.
        \end{equation}
        where $(H,W)$ are image dimensions and $(h,w)$ the window dimensions. For large images with small receptive field (as it is often the case), the Taylor expansion in $h\ll H$ and $w\ll W$ yields a factor of magnitude $1+\frac{(h-1)}{4H}+\frac{(w-1)}{4W}+\BigO(\frac{(w-1)(h-1)}{8HW})\approx 1$.
    \end{corollary}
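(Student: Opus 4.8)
The plan is to obtain the corollary as a direct consequence of the general convolution bound $\|\Jac{(\Psi*x)}{\Psi}\|_2\le\sqrt{s}\,\|x\|_2$ proved in the Property above (with window size $s=hw$), once the RKO rescaling is accounted for. Reading the left-hand side as the Jacobian of the RKO \emph{layer map} $x\mapsto\Phi*x$ with respect to its orthogonal kernel $\Psi$, the first step is to note that this map depends on $\Psi$ only through the scalar multiple $\Phi*x=\mathcal{N}(\Psi*x)$; hence its Jacobian is exactly $\mathcal{N}$ times the Jacobian of the un-rescaled convolution, and the Property yields
\[
\left\|\Jac{(\Phi*x)}{\Psi}\right\|_2=\mathcal{N}\left\|\Jac{(\Psi*x)}{\Psi}\right\|_2\le \mathcal{N}\sqrt{s}\,\|x\|_2 .
\]

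The second step is purely computational: substitute the RKO normalization constant for stride-$1$, spatial-size-preserving convolutions, $\mathcal{N}=\bigl(s\,(1-\tfrac{h-1}{2H})(1-\tfrac{w-1}{2W})\bigr)^{-1/2}$, as used by~\cite{li2019preventing,serrurier2021achieving}. The factor $\sqrt{s}$ cancels, leaving $\mathcal{N}\sqrt{s}\,\|x\|_2=\bigl((1-\tfrac{h-1}{2H})(1-\tfrac{w-1}{2W})\bigr)^{-1/2}\|x\|_2$, which is the claimed bound. The trailing remark is then just the first-order Taylor expansion $(1-a)^{-1/2}(1-b)^{-1/2}=1+\tfrac{a}{2}+\tfrac{b}{2}+\BigO(ab)$ specialized to $a=\tfrac{h-1}{2H}$ and $b=\tfrac{w-1}{2W}$, which is $\approx 1$ whenever the receptive field is small relative to the feature map — so that, for sensitivity purposes, an RKO convolution behaves essentially like a dense layer (multiplicative factor $1$).

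The only non-routine ingredient is the value of $\mathcal{N}$ itself, i.e. why rescaling a reshaped-orthogonal kernel by $\bigl(s\,(1-\tfrac{h-1}{2H})(1-\tfrac{w-1}{2W})\bigr)^{-1/2}$ makes $x\mapsto\Phi*x$ (approximately) $1$-Lipschitz under zero/``same'' padding. I would either cite~\cite{li2019preventing,serrurier2021achieving} for it, or re-derive it using the patch decomposition from the proof of the Property: $x\mapsto\Psi*x$ is a superposition of $s$ shifted copies of the reshaped operator, and a counting argument over output locations shows that ``same''-padding truncates windows near the four edges so that, on average, a window position lands on a genuine (non-padded) input pixel in a fraction $(1-\tfrac{h-1}{2H})(1-\tfrac{w-1}{2W})$ of locations, i.e. $\mathcal{N}^{-2}=s(1-\tfrac{h-1}{2H})(1-\tfrac{w-1}{2W})$. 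I expect this 2D boundary bookkeeping — checking that the height-edge and width-edge truncation counts multiply out to exactly the stated product, and being explicit that this is the average-case normalization used in practice rather than a worst-case guarantee (the worst case remaining $\sqrt{s}$, attained by a spatially constant input supported away from the boundary) — to be the part requiring care; everything else is an immediate substitution into the already-proven general bound.
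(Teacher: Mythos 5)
Your proof is correct and follows essentially the same route as the paper's (implicit) derivation: the corollary is obtained by plugging the RKO rescaling factor $\mathcal{N}$ used in deel-lip~\cite{serrurier2021achieving} into the bound $\sqrt{s}\,\|x\|_2$ of the preceding Property, exactly as you do, with the concluding Taylor expansion handled identically. One peripheral caution: the paper, like you, takes the value of $\mathcal{N}$ from the cited RKO implementation rather than re-deriving it, but your side remark that this normalization is only an ``average-case'' factor is off --- it must be a worst-case bound for $x\mapsto\Phi*x$ to be guaranteed 1-Lipschitz, which is precisely the hypothesis the corollary relies on.
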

    
    \subsubsection{Layer normalizations}\label{ap:layernorm}
    
    \begin{restatable}{property}{layernormgrad}{\normalfont\textbf{Bounded loss gradient for layer centering.}}
        Layer centering is defined as $f(x)=x-(\frac{1}{n}\sum_{i=1}^n x_i)\mathbf{1}$ where $\mathbf{1}$ is a vector full of ones, and acts as a ``centering'' operation along some channels (or all channels). Then the singular values of this linear operation are:
        \begin{equation}
            \sigma_1 = 0,\quad\text{and}\quad\sigma_2=\sigma_3=\ldots =\sigma_n = 1.
        \end{equation}
        In particular $\|\Jac{f}{x}\|_2\leq 1$.
    \end{restatable}
    \begin{proof}
        It is clear that layer normalization is an affine layer. Hence the spectral norm of its Jacobian coincides with its Lipschitz constant with respect to the input, which itself coincides with the spectral norm of $f$.
        The matrice $M$ associated to $f$ is symmetric and diagonally dominant since $|\frac{n-1}{n}|\geq \sum_{i=1}^{n-1} |\frac{-1}{n}|$. It follows that $M$ is semi-definite positive. In particular all its eigenvalues $\lambda_1\leq \ldots \leq \lambda_n$ are non negative. Furthermore they coincide with its singular values: $\sigma_i=\lambda_i$.  
        Observe that for all $r\in\Reals$ we have $f(r\mathbf{1})=\mathbf{0}$, i.e the operation is null on constant vectors. Hence $\lambda_1=0$. Consider the matrix $M-I$: its kernel is the eigenspace associated to eigenvalue 1. But the matrix $M-I=\frac{-1}{n}\mathbf{1}\mathbf{1}^T$ is a rank-one matrix. Hence its kernel is of dimension $n-1$, from which it follows that $\lambda_2=\ldots =\lambda_n=\sigma_2\ldots =\sigma_n=1$.
    \end{proof}

    \subsubsection{MLP Mixer architecture}
    
    The MLP-mixer architecture introduced in~\cite{tolstikhin2021mlp} consists of operations named \textit{Token mixing} and \textit{Channel mixing} respectively. For token mixing, the input feature is split in disjoint patches on which the same linear opration is applied. It corresponds to a convolution with a stride equal to the kernel size. convolutions on a reshaped input, where patches of pixels are ``collapsed'' in channel dimensions.
    Since the same linear transformation is applied on each patch, this can be interpreted as a block diagonal matrix whose diagonal consists of $W$ repeated multiple times.
    More formally the output of Token mixing takes the form of $f(x)\defeq[W^Tx_1,W^Tx_2,\ldots W^Tx_n]$ where $x=[x_1,x_2,\ldots ,x_n]$ is the input, and the $x_i$'s are the patches (composed of multiple pixels). Note that $\|f(x)\|_2^2\leq \sum_{i=1}^n \|W\|_2^2\|x_i\|_2^2=\|W\|_2^2\sum_{i=1}^n \|x_i\|_2^2=\|W\|_2^2\|x\|_2^2$. If $\|W\|_2=1$ then the layer is 1-Lipschitz - it is even norm preserving. Same reasoning apply for Channel mixing. Therefore the MLP\_Mixer architecture is 1-Lipchitz and the weight sensitivity is proportional to~$\|x\|$.

    \paragraph{Lipschitz MLP mixer:} We adapted the original architecture in order to have an efficient 1-Lipschitz version with the following changes:

    \begin{itemize}
        \item Relu activations were replaced with GroupSort, allowing a better gradient norm preservation,
        \item Dense layers were replaced with their GNP equivalent,
        \item Skip connections are available (adding a 0.5 factor to the output in order to ensure 1-lipschitz condition) but architecture perform as well without these.
    \end{itemize}

    Finally the architecture parameters were selected as following:
    \begin{enumerate}
        \item The number of layer is reduced to a small value (between 1 and 4) to take advantage of the theoretical sensitivity bound.
        \item The patch size and hidden dimension are selected to achieve a sufficiently expressive network (a patch size between 2 and 4 achieves sufficient accuracy without over-fitting, and a hidden dimension of 128-512 unlocks allows batch size).
        \item The channel dim and token dimensions are chosen such that weight matrices are square matrices (exact gradient norm preservation property requires square matrices). 
    \end{enumerate}

\subsection{Tightness in practice}

In our experiments on Cifar-10 the best performing architecture was a MLP Mixer with a single block of mixing. We measure the theoretical bounds with our framework at the start of training in Fig.~\ref{fig:firstepochhist}, and at the end of training in Fig. ~\ref{fig:lastepochhist}.  

\begin{figure}
    \centering
    \includegraphics[width=\linewidth]{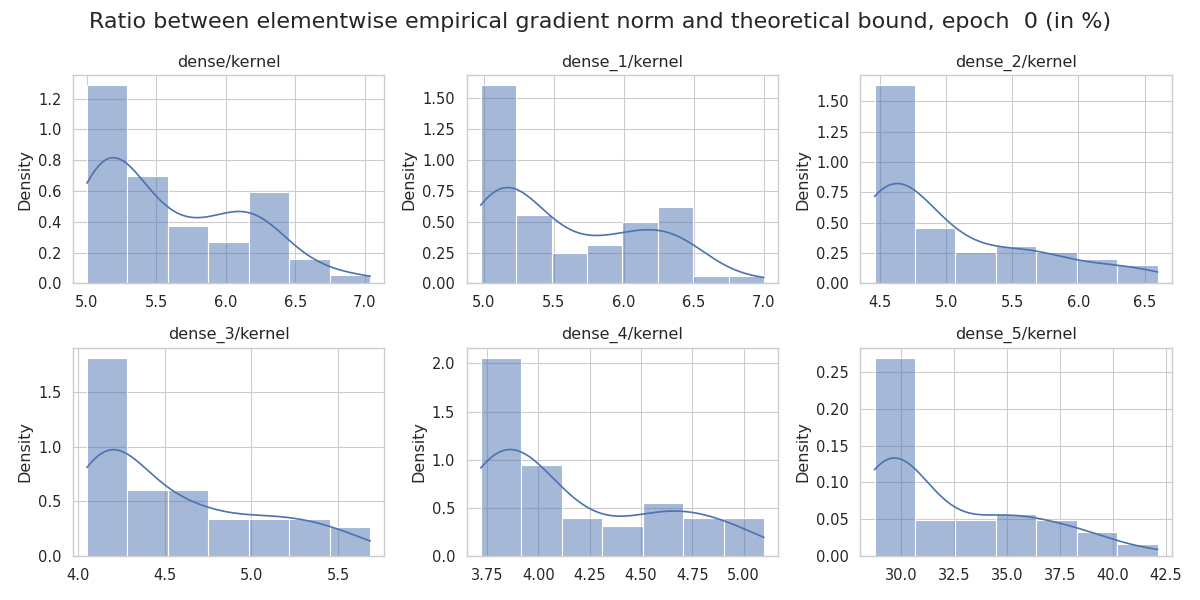}
    \caption{\textbf{MLP Mixer on Cifar-10, first epoch.}}
    \label{fig:firstepochhist}
\end{figure}

\begin{figure}
    \centering
    \includegraphics[width=\linewidth]{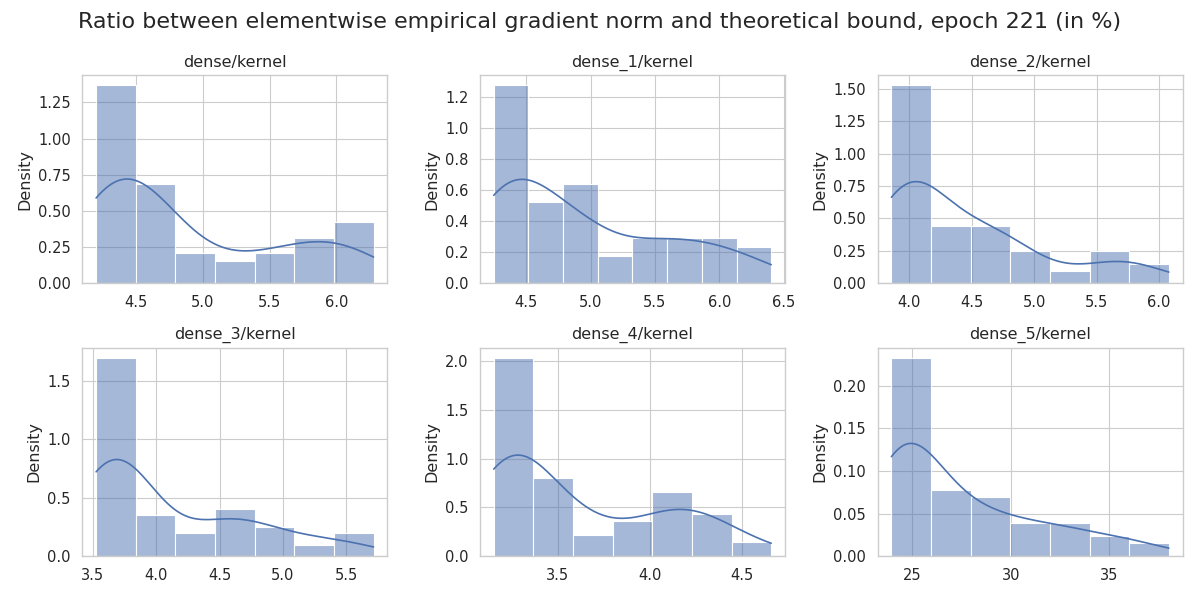}
    \caption{\textbf{MLP Mixer on Cifar-10, last epoch.}}
    \label{fig:lastepochhist}
\end{figure}

We see that the last layer benefits from bounds that are quite tight (around 30\% to 50\%) whereas the tightness drops for the deeper  layers in MLP blocks (less than 10\%).  The explanation is that the bound 
\begin{equation} 
    \textcolor{paramcolor}{\left\|\Jac{f_d(\theta_d,x)}{\theta_d}\right\|_2}\leq K(f_d,\theta_d)\textcolor{forwardcolor}{X_{d-1}},
\end{equation}
might be overly pessimistic, because of the \textcolor{forwardcolor}{$X_{d-1}$} term, especially since Cauchy-Schwartz inequality is a ``best-case bound'' (i.e assuming that all vector involved in the inner product are co-linear), and do not account for the possible orthogonality between $x_d$ and the cotangent vector. Nonetheless, with a ratio of 4\% we see that the noise and the gradient norm are of similar amplitude as soon as the batch size exceed $1/0.04=25$ examples, which is clearly our case with batch sizes that easily exceed $2,500$ on Cifar-10.

\section{Experimental setup}\label{app:experiments}
    \subsection{Tabular data}

Instead of monitoring the median result, we can also look for the best one in each sweep, along with the average runtime. The results are also report graphically in Fig.~\ref{fig:apptabular}. The hyper-parameters are reported in Table~\ref{tab:tabularhyperparams}.      

\begin{table*}
    \centering
    \small
    \begin{tabular}{|p{1cm}>{\raggedleft\arraybackslash}p{1cm}>{\raggedleft\arraybackslash}p{0.8cm}p{0.6cm}|cc|cc|}
        \hline
         \multirow{2}{*}{Dataset} & \multirow{2}{*}{Samples} & \multirow{2}{*}{Features} & \multirow{2}{*}{$\delta$} & \multicolumn{2}{c|}{Validation AUROC $\uparrow$} & \multicolumn{2}{c|}{Wallclock Runtime (s) $\downarrow$}\\
                 &            &             &          & \makecell{DP-SGD} & \makecell{Clipless\\ DP-SGD} &  \makecell{DP-SGD} & \makecell{Clipless\\ DP-SGD}\\
        \hline
        \hline
ALOI & 39,627 & 27 & $10^{-5}$ & $\textbf{56.5}$ & $56.2$ & $159.1$ & $\textbf{11.3}$\\
campaign & 32,950 & 62 & $10^{-5}$ & $\textbf{90.0}$ & $82.2$ & $155.6$ & $\textbf{11.8}$\\
celeba & 162,079 & 39 & $10^{-6}$ & $\textbf{96.6}$ & $96.5$ & $41.1$ & $\textbf{34.6}$\\
census & 239,428 & 500 & $10^{-6}$ & $\textbf{93.3}$ & $92.5$ & $820.0$ & $\textbf{79.8}$\\
donors & 495,460 & 10 & $10^{-6}$ & $100.0$ & $\textbf{100.0}$ & $257.9$ & $\textbf{90.2}$\\
magic & 15,216 & 10 & $10^{-5}$ & $\textbf{90.7}$ & $89.7$ & $89.9$ & $\textbf{56.0}$\\
shuttle & 39,277 & 9 & $10^{-5}$ & $98.3$ & $\textbf{99.4}$ & $11.1$ & $\textbf{6.8}$\\
skin & 196,045 & 3 & $10^{-6}$ & $\textbf{100.0}$ & $99.8$ & $54.2$ & $\textbf{40.2}$\\
yeast & 1,187 & 8 & $10^{-4}$ & $66.8$ & $\textbf{75.1}$ & $22.1$ & $\textbf{5.6}$\\
        \hline
    \end{tabular}
    \caption{Best validation AUROC values (in \%) for models trained under \DP privacy with $\epsilon=1$, with DP-SGD and Clipless DP-SGD, on binary classification tasks of tabular data from Adbench datasets~\cite{han2022adbench}. We use a random 80/20\% stratified split into train/val.}
    \label{app:tabularutilityruntime}
\end{table*}

\begin{figure}
    \centering
    \includegraphics[width=1\linewidth]{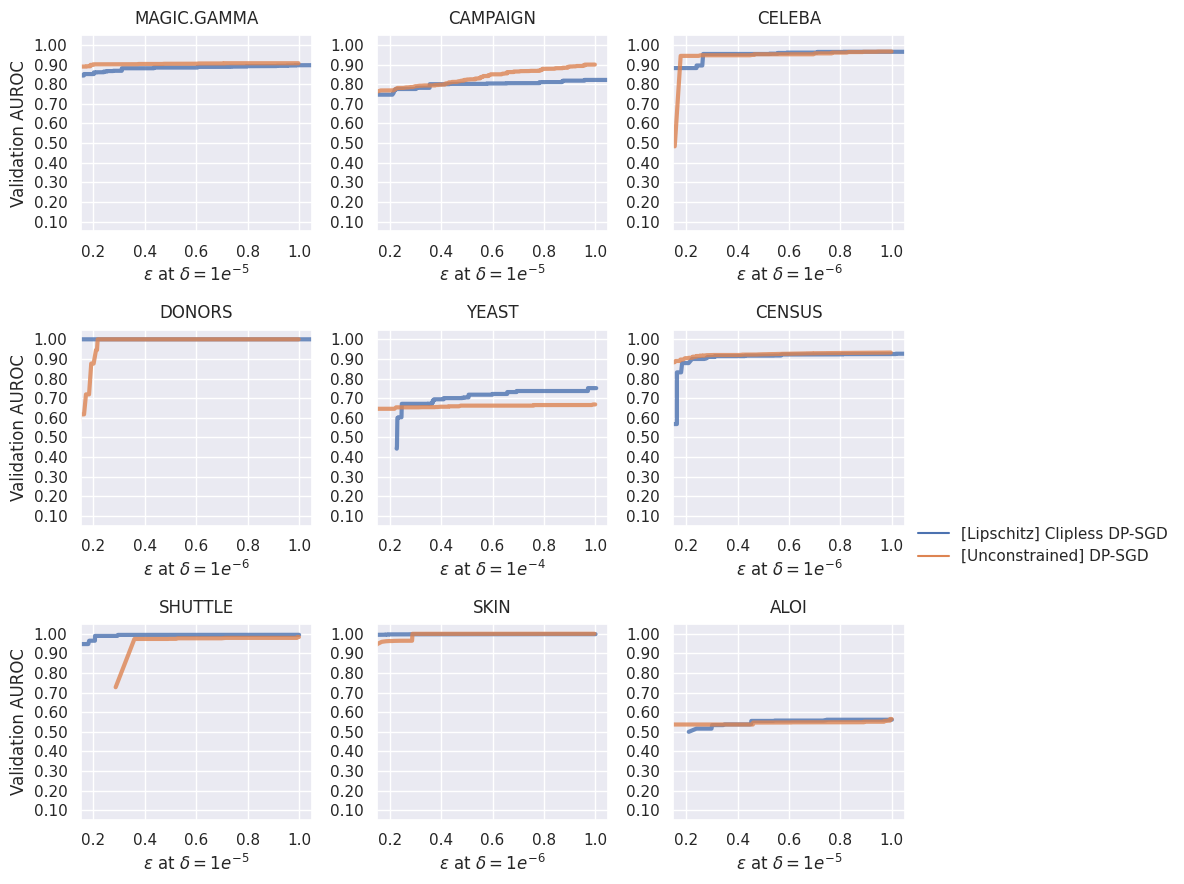}
    \caption{\textbf{Pareto front of AUROC values} on tabular data.}
    \label{fig:apptabular}
\end{figure}

\begin{table}
\centering
\begin{tabular}{|c c c|} 
 \hline
 Hyper-Parameter & Minimum Value & Maximum Value \\ [0.5ex] 
 \hline\hline
 Input Clipping & \multicolumn{2}{c|}{None} \\ 
 \hline
 Batch Size & $512$ & $10^4$ \\
 \hline
 Loss Gradient Clipping & \multicolumn{2}{c|}{automatic (90\%-th percentile)} \\
 \hline
 $\tau$ (BCE) & $10^{-2}$ & $100$\\
 \hline
 Learning Rate & 0.00001 & 1.0\\
 \hline
\end{tabular}
\caption{\textbf{Hyper-parameters of Clipless DP-SGD for the Tabular experiment.}}
\label{tab:tabularhyperparams}
\end{table}

\subsection{Pareto fronts}

\begin{figure}
\begin{subfigure}{.33\columnwidth}
        \centering
        \includegraphics[width=\linewidth]{figures/mnist.png}
        \caption{\textbf{MNIST}.}
        \label{fig:paretofrontmnist}
    \end{subfigure}
    \begin{subfigure}{.33\columnwidth}
        \centering
        \includegraphics[width=\linewidth]{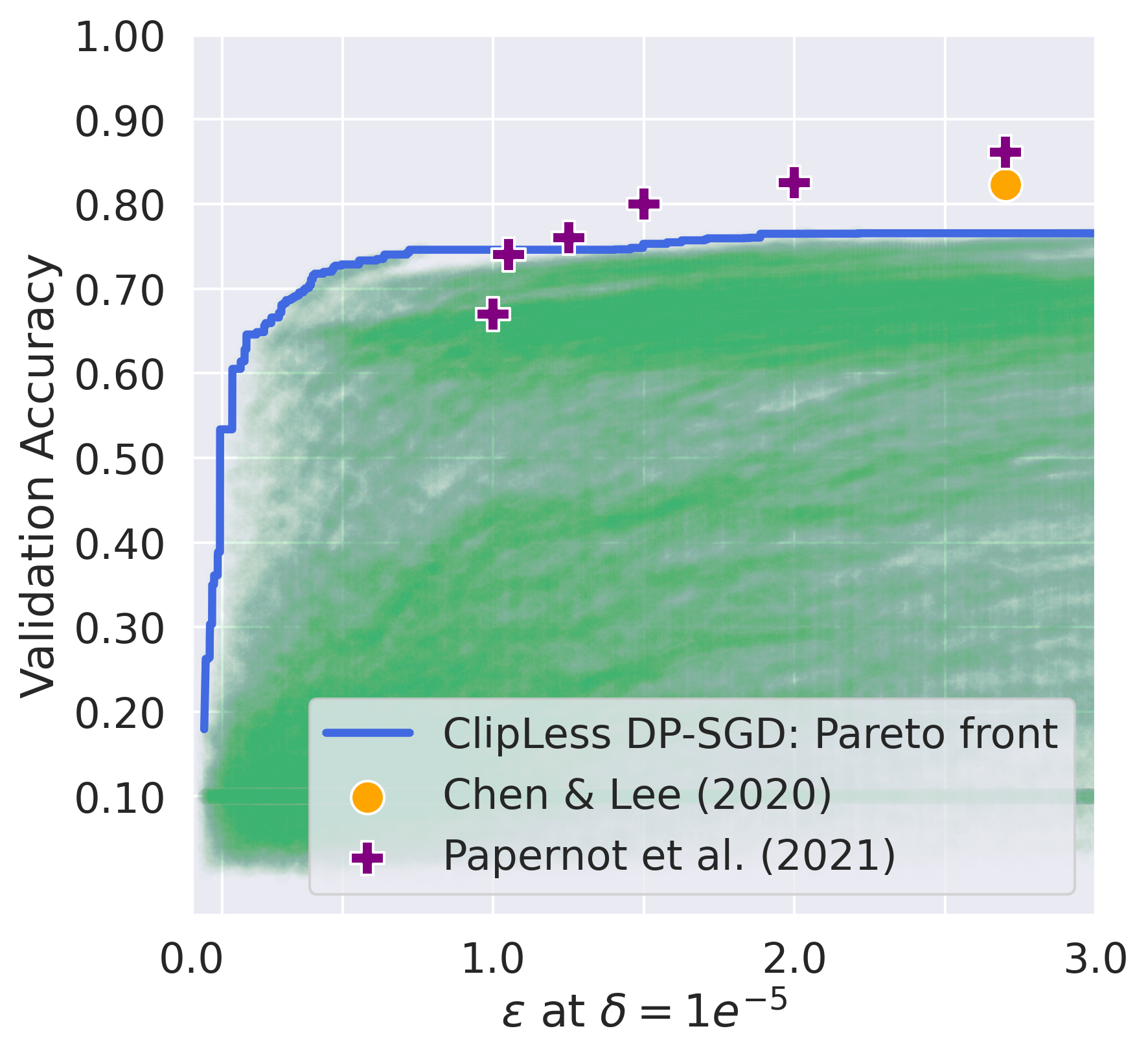}
        \caption{\textbf{F-MNIST}.}
        \label{fig:paretofrontfashion}
    \end{subfigure}
    \begin{subfigure}{.33\columnwidth}
        \centering
        \includegraphics[width=\linewidth]{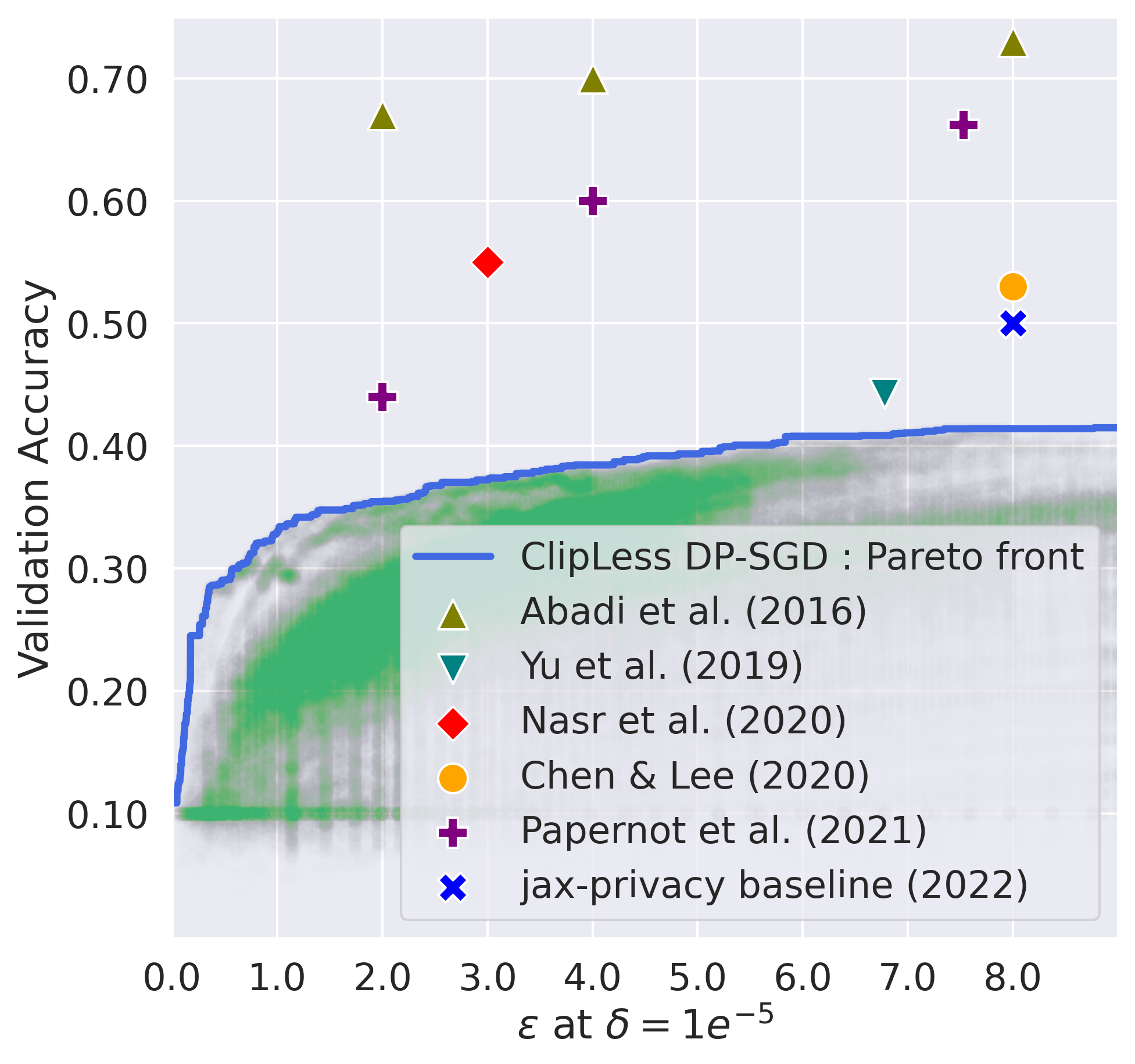}
        \caption{\textbf{CIFAR-10}.}
        \label{fig:paretofrontcifar}
    \end{subfigure}
    \caption{\textbf{Our framework paints a clearer picture of the privacy/utility trade-off.} We trained models in an "out of the box setting" (no pre-training, no data augmentation and no handcrafted features) on multiple tasks. Each green dot corresponds to an (\textbf{accuracy}, $\epsilon$) pair from an epoch of one of the runs, while the blue line is the Pareto front (convex hull of all the dots).  
    While our results align with the baselines presented in other frameworks, we recognize the importance of domain-specific engineering. In this regard, we find the innovations introduced in ~\cite{papernot2021tempered,tramer2021differentially,de2022unlocking} and references therein highly relevant. These advancements demonstrate compatibility with our framework and hold potential for future integration.
    }\label{fig:paretofront}
\end{figure}

We rely on Bayesian optimization~\cite{snoek2012practical} with Hyper-band~\cite{li2017hyperband} heuristic for early stopping. The influence of some hyperparameters has to be highlighted to facilitate training with our framework, therefore we provide a table that provides insights into the effects of principal hyperparameters in Figure~\ref{fig:hyperparams}. Most hyper-parameters extend over different scales (such as the learning rate), so they are sampled according to log-uniform distribution, to ensure fair covering of the search space. Additionnaly, the importance of the softmax cross-entropy temperature $\tau$ has been demonstrated in previous work~\cite{bethune2022pay}.

\begin{figure}
    \centering
        \begin{tabular}{|p{3.5cm}|p{5.5cm}p{5cm}|}
        \hline
        \textbf{Hyperparameter \newline Tuning} & \textbf{Influence on utility} & \textbf{Influence on privacy \newline leakage per step} \\
        \hline
        \hline
        Increasing Batch Size & Beneficial: decreases the sensitivity. & Detrimental: reduces the privacy amplification by subsampling. \\
        \hline
        Loss Gradient Clipping & Beneficial: tighter sensitivity bounds. \newline Detrimental: biases the direction of the gradient. & No influence \\
        \hline
        Clipping Input Norms & Detrimental: destroy information, but may increases generalization & No influence \\
        \hline
        \end{tabular}

    \caption{\textbf{Hyperparameter table:} Here, we give insights on the influence of some hyper-parameters on utility and privacy.}
    \label{fig:hyperparams}
\end{figure}

\subsubsection{Hyperparameters configuration for MNIST}

 Experiments are run on NVIDIA GeForce RTX 3080 GPUs. The losses we optimize are either the Multiclass Hinge Kantorovich Rubinstein loss or the $\tau-\text{CCE}$. 

\begin{center}
\begin{tabular}{|c c c|} 
 \hline
 Hyper-Parameter & Minimum Value & Maximum Value \\ [0.5ex] 
 \hline\hline
 Input Clipping & $10^{-1}$ & 1 \\ 
 \hline
 Batch Size & 512 & $10^4$ \\
 \hline
 Loss Gradient Clipping & \multicolumn{2}{c|}{automatic (90\%-th percentile)} \\
 \hline
 $\alpha$ (HKR) & $10^{-2}$ & $2,0 \times 10^3$ \\
 \hline
 $\tau$ (CCE) & $10^{-2}$ & $1,8 \times 10^{1}$ \\
 \hline
\end{tabular}
\end{center}

The sweeps were run with MLP or ConvNet architectures yielding the results presented in Figure~\ref{fig:paretofrontmnist}. 

\subsubsection{Hyperparameters configuration for FASHION-MNIST}

 Experiments are run on NVIDIA GeForce RTX 3080 GPUs. The losses we optimize are the Multiclass Hinge Kantorovich Rubinstein loss, the $\tau-\text{CCE}$ or the K-CosineSimilarity custom loss function. 
 
\begin{center}
\begin{tabular}{|c c c|} 
 \hline
 Hyper-Parameter & Minimum Value & Maximum Value \\ [0.5ex] 
 \hline\hline
 Input Clipping & $10^{-1}$ & 1 \\ 
 \hline
 Batch Size & $5.0 \times 10^3$ & $10^4$ \\
 \hline
 Loss Gradient Clipping & \multicolumn{2}{c|}{automatic (90\%-th percentile)} \\
 \hline
 $\alpha$ (HKR) & $10^{-2}$ & $2.0 \times 10^3$ \\
 \hline
 $\tau$ (CCE) & $10^{-2}$ & $4.0 \times 10^{1}$ \\
 \hline
 $K$ (K-CS) & $10^{-2}$ & $1.0$ \\
 \hline
\end{tabular}
\end{center}

A simple ConvNet architecture, in the spirit of VGG (but with Lipschitz constraints), was chosen to run all sweeps yielding the results we present in Figure~\ref{fig:paretofrontfashion}.

\subsubsection{Hyperparameters configuration for CIFAR-10}

 Experiments are run on NVIDIA GeForce RTX 3080 or 3090 GPUs. The losses we optimize are the Multiclass Hinge Kantorovich Rubinstein loss, the $\tau-\text{CCE}$ or the K-CosineSimilarity custom loss function. They yield the results of Figure~\ref{fig:paretofrontcifar}.

\begin{center}
\begin{tabular}{|c c c|} 
 \hline
 Hyper-Parameter & Minimum Value & Maximum Value \\ [0.5ex] 
 \hline\hline
 Input Clipping & $10^{-2}$ & 1 \\ 
 \hline
 Batch Size & $512$ & $10^4$ \\
 \hline
 Loss Gradient Clipping & \multicolumn{2}{c|}{automatic (90\%-th percentile)} \\
 \hline
 $\alpha$ (HKR) & $10^{-2}$ & $2.0 \times 10^3$ \\
 \hline
 $\tau$ (CCE) & $10^{-3}$ & $3.2 \times 10^{1}$ \\
 \hline
 $K$ (K-CS) & $10^{-2}$ & $1.0$ \\
 \hline
\end{tabular}
\end{center}

\begin{figure}
    \center
    \includegraphics[width=0.45\textwidth]{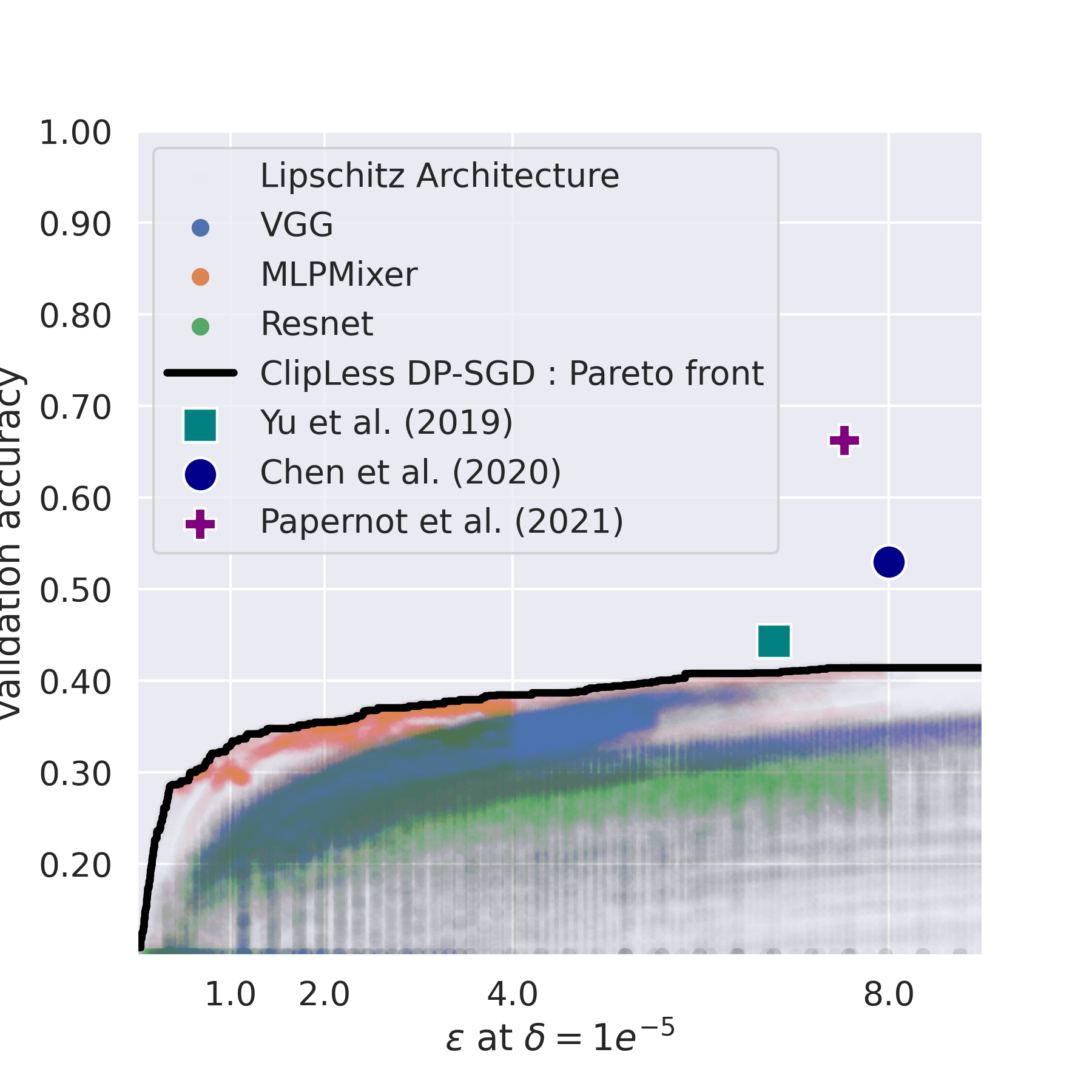}
    \caption{Accuracy/Privacy tradeoff on Cifar-10, split down per architecture used. While some architectures seems to perform better than others, we don't advocate for the use of one over another. The results may not translate to all datasets, and may be highly dependant on the range chosen for hyper-parameters. While this figure provides valuable insights, identifying the best architecture is left for future works.}
    \label{fig:cifar10_allarch}
\end{figure}

The sweeps have been done on various architectures such as Lipschitz VGGs, Lipschitz ResNets and Lipschitz MLP\_Mixer. We can also break down the results per architecture, in figure~\ref{fig:cifar10_allarch}. The MLP\_Mixer architecture seems to yield the best results.  This architecture is exactly GNP since the orthogonal linear transformations are applied on disjoint patches. To the contrary, VGG and Resnets are based on RKO convolutions which are not exactly GNP. Hence those preliminary results are compatible with our hypothesis that GNP layers should improve performance. Note that these results are expected to change as the architectures are further improved. It is also dependant of the range chosen for hyper-parameters. We do not advocate for the use of an architecture over another, and we believe many other innovations found in literature should be included before settling the question definitively.  

For the vanilla implementation of DP-SGD we rely on Opacus library. We use the default configuration from the official tutorial on Cifar-10, and we define the hyper-parameters for the grid search below:

\begin{center}
\begin{tabular}{|c c c|} 
 \hline
 Hyper-Parameter & Minimum Value & Maximum Value \\ [0.5ex] 
 \hline\hline
 Input & \multicolumn{2}{c|}{RGB standardized} \\ 
 \hline
 Batch Size & $500$ & $5000$ \\
 \hline
 Noise Multiplier & \multicolumn{2}{c|}{Automatic} \\
 \hline
 Epochs & 1 & 400\\
 \hline
 \makecell{Maximum\\ Gradient norm} & 0.01 & 100\\
 \hline
 \makecell{Learning rate} & $0.00001$ & 1\\
 \hline
\end{tabular}
\end{center}

\subsection{Configuration of the ``speed'' experiment}\label{sec:speedexperiment}

We detail below the environment version of each experiment, together with Cuda and Cudnn versions. We rely on a machine with 32GB RAM and a NVIDIA Quadro GTX 8000 graphic card with 48GB memory.
The GPU uses driver version 495.29.05, cuda 11.5 (October 2021) and cudnn 8.2 (June 7, 2021). We use Python 3.8 environment.

\begin{itemize}
    \item For Jax, we used jax 0.3.17 (Aug 31, 2022) with jaxlib 0.3.15 (July 23, 2022), flax 0.6.0 (Aug 17, 2022) and optax 1.4.0 (Nov 21, 2022).
    \item For Tensorflow, we used tensorflow 2.12 (March 22, 2023) with tensorflow\_privacy 0.7.3 (September 1, 2021).
    \item For Pytorch, we used Opacus 1.4.0 (March 24, 2023) with Pytorch (March 15, 2023).
    \item For lip-dp we used deel-lip 1.4.0 (January 10, 2023) on Tensorflow 2.8 (May 23, 2022).
\end{itemize}

For this benchmark, we used among the most recent packages on pypi. However the latest version of tensorflow privacy could not be forced with pip due to broken dependencies. This issue arise in clean environments such as the one available in google colaboratory.


\subsection{Compatibility with literature improvements}

Many method from the state of the art are based on improvement over the DP-SGD baseline. In this section we will review these improvements and check the compatibility with our approach.

\paragraph{Adaptive clipping.} As discussed in \ref{sec:gnp}, or approach can benefit from adaptive clipping. Similarly this can be done in a more efficient way.

\paragraph{Multiple augmentations} We tested adding ``multiple augmentation'' introduced by~\cite{de2022unlocking} on our approach but it did not yield improvements (see Appendix~\ref{fig:multipleaug}). This is probably due to the fact that we train our architecture from scratch: doing so require an architecture that requires a minimum amount of steps to converge. Such architecture are too small and not able to learn an augmented dataset as those are more prone to under-fitting.  Another reason is that in vanilla DP-SGD the elementwise gradients of each augmentation can be averaged \textit{before} the clipping operation, which has no consequences on the overall sensitivity of the gradient step. Whereas in our case, the average is computed \textit{after} the loss gradient clipping.

\begin{figure}
    \centering
    \includegraphics[width=\textwidth]{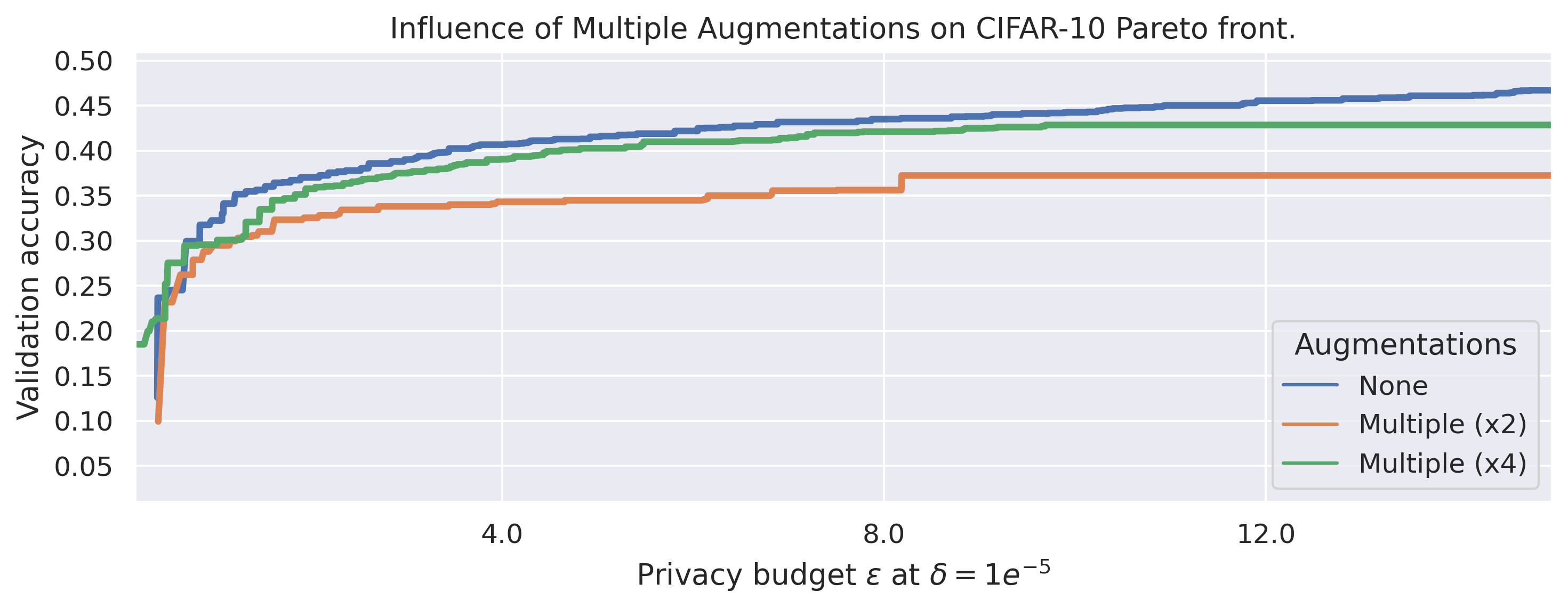}
    \caption{\textbf{Pareto front on Cifar-10 with the multiple augmentations trick of~\cite{de2022unlocking}}.}
    \label{fig:multipleaug}
\end{figure}

\paragraph{Fine tuning a pretrained backbone:} Clipless DP-SGD can work with a pre-trained backbone, however, the fine-tuned layers must be 1-lipschitz. This can yield 2 approaches:
\begin{itemize}
    \item \textbf{Using a 1-lipschitz backbone:} We can then fine tune the whole network and benefits from pre-training. Unfortunately there is no 1-lipschitz backbone available for the moment.
    \item \textbf{Using an unconstrained backbone:} Our approach can work with an unconstrained feature extractor using the following protocol: 1. the $n$ last layers are dropped and replaced with a 1-Lipschitz classification network. 2. An input clipping layer is added at the beginning of the classification head to ensure that the inputs are bounded. This approach was tested using a MobilenetV2 backbone. Results are reported in Figure~\ref{fig:finetuning}.
\end{itemize}

\begin{figure}
    \centering
    \includegraphics[width=\textwidth]{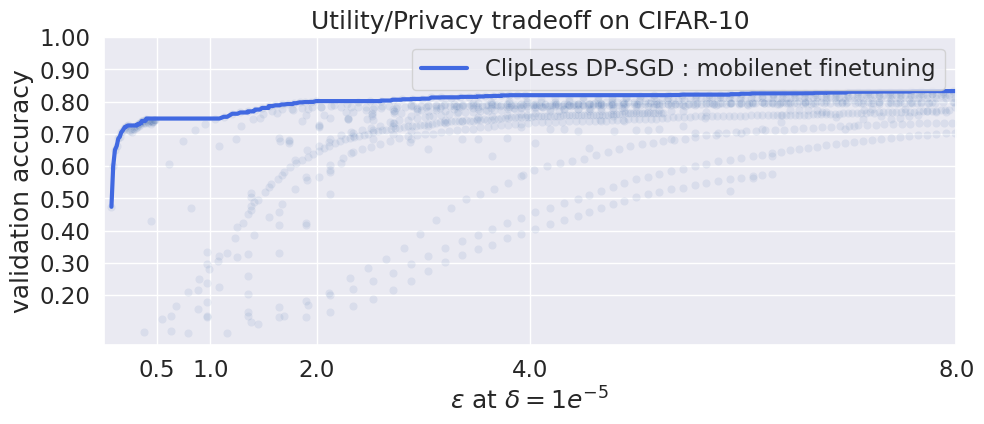}
    \caption{\textbf{Finetuning of a MobilenetV2 with Clipless DP-SGD.} A Lipschitz MLP (1 or 2 layers, and GroupSort activation) is trained using the backbone as a feature extractor. The backbone is not fine-tuned as it is not Lipschitz. Therefore a plateau exists, depending on the quality of the feature extractor.}
    \label{fig:finetuning}
\end{figure}

\subsection{Drop-in replacement with Lipschitz networks in vanilla DPSGD}\label{app:dropinreplacement}

     To highlight the importance of the tight sensitivity bounds $\Delta_d$ obtained by our framework, we perform an ablation study by optimizing GNP networks using ``vanilla'' DP-SGD (with clipping), in Figure~\ref{fig:vanillagnpmnist} and~\ref{fig:vanillagnpcifar}.

    Thanks to the gradient clipping of DP-SGD (see Algorithm~\ref{alg:dpsgd}), Lipschitz networks can be readily integrated in traditional DP-SGD algorithm with gradient clipping. The PGD algorithm is not mandatory: the back-propagation can be performed within the computation graph through iterations of Bj\"orck algorithm (used in RKO convolutions). This does not benefit from any particular speed-up over conventional networks - quite to the contrary there is an additional cost incurred by enforcing Lipschitz constraints in the graph. Some layers of deel-lip library have been recoded in Jax/Flax, and the experiment was run in Jax, since Tensorflow was too slow. 

    We use use the Total Amount of Noise (TAN) heuristic introduced in~\cite{sander2022tan} to heuristically tune hyper-parameters jointly. This ensures fair covering of the Pareto front.
    
    \begin{algorithm}
    \caption{Differentially Private Stochastic Gradient Descent : \textbf{DP-SGD}}
    \label{alg:dpsgd}
    \textbf{Input}: Neural network architecture $f(\cdot,\cdot)$\\
    \textbf{Input}: Initial weights $\theta_0$, learning rate scheduling $\eta_t$, number of steps $N$, noise multiplier $\sigma$, L2 clipping value $C$ .\\
    \begin{algorithmic}[1] 
    \Repeat
        \For{\textbf{all} $1\leq t\leq N-1$}
        \State Sample a batch 
        $$\mathcal{B}_t = {(x_1, y_1), (x_2, y_2), \ldots , (x_b, y_b)}.$$
        \State Create microbatches, compute and clip the per-sample gradient of cost function:
            $$\tilde g_{t,i}\defeq \min(C,\|\nabla_{\theta_t} \Loss (\yhat_i,y_i)\|) \frac{\nabla_{\theta_t}\Loss (\yhat_i,y_i)}{\|\nabla_{\theta_t} \Loss (\yhat_i,y_i)\|}.$$
        \State Perturb each microbatch with carefully chosen noise distribution $b \sim \mathcal{N}(0,\sigma C)$ :
            $$\hat g_{t,i}\gets \tilde g_{t,i} + b_i.$$
        \State Perform projected gradient step:
            $$\theta_{t+1}\gets \Pi(\theta_t-\eta_t \hat g_{t,i}).$$
        \EndFor
    \Until{privacy budget $(\epsilon,\delta)$ has been reached.}
    \end{algorithmic}
    \end{algorithm}

    \begin{figure}
    \begin{subfigure}{.48\columnwidth}
        \centering
        \includegraphics[width=\linewidth]{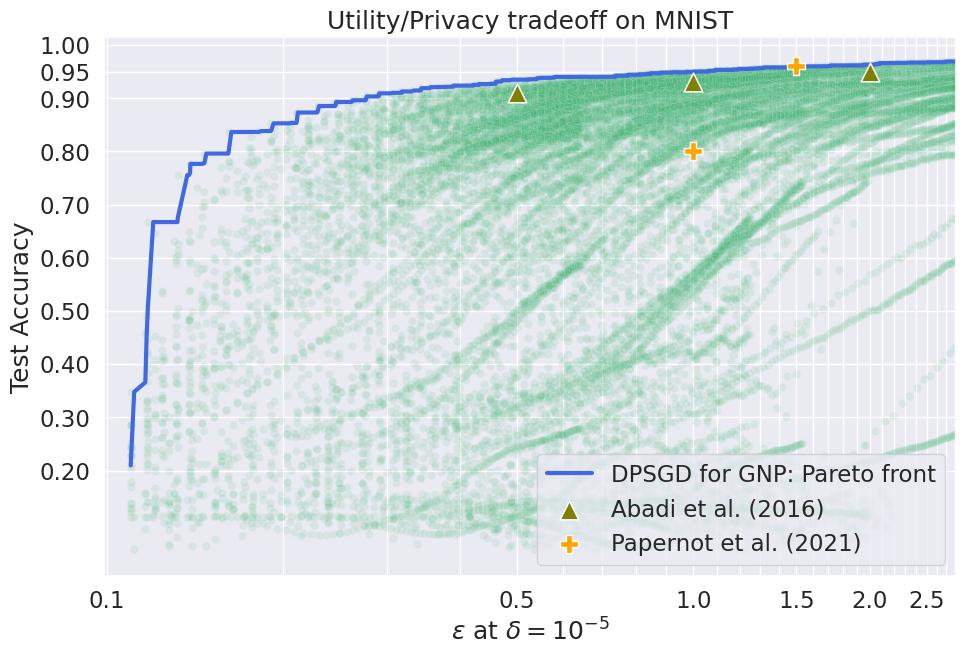}
        \caption{}
        \label{fig:vanillagnpmnist}
    \end{subfigure}
    \hfil
    \begin{subfigure}{.48\columnwidth}
        \centering
        \includegraphics[width=\linewidth]{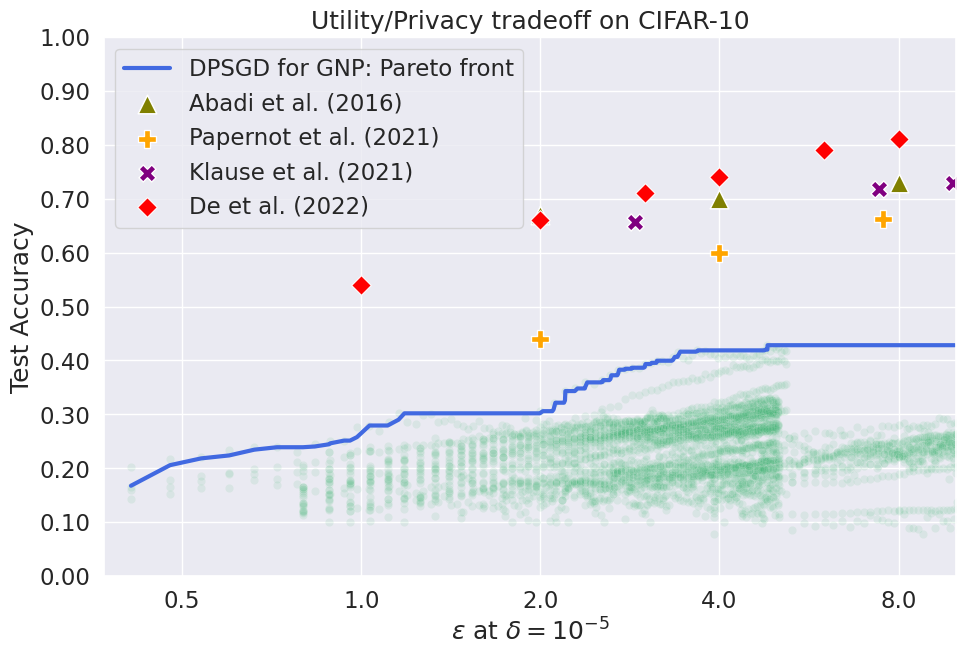}
        \caption{}
        \label{fig:vanillagnpcifar}
    \end{subfigure}
    \caption{\textbf{Privacy/utility trade-off for Gradient Norm Preserving networks} trained under ``vanilla'' DP-SGD (\textbf{with} gradient clipping). Each green dot corresponds to a single epoch of one of the runs. Trajectories that end abruptly are due to the automatic early stopping of unpromising runs. Note that clipping + orthogonalization have a high runtime cost, which severely limits the number of epochs reported.}\label{fig:vaniallgnp}
    \end{figure}

\subsection{Extended limitations}

The main weakness of our approach is that it crucially rely on accurate computation of the sensitivity~$\sntv$. This task faces many challenges in the context of differential privacy: floating point arithmetic is not associative, and summation order can a have dramatic consequences regarding numerical stability~\cite{goldberg1991every}. This is further amplified on the GPUs, where some operations are intrinsically non deterministic~\cite{jooybar2013gpudet}. This well known issue is already present in vanilla DP-SGD algorithm. Our framework adds an additional point of failure: the upper bound of spectral Jacobian must be computed accurately. Hence Power Iteration must be run with sufficiently high number of iterations to ensure that the projection operator $\Pi$ works properly. The $(\epsilon,\delta)$-DP certificates only hold under the hypothesis that all computations are correct, as numerical errors can induce privacy leakages. Hence we check empirically the effective norm of the gradient in the training loop at the end of each epoch. No certificate violations were reported during ours experiments, which suggests that the numerical errors can be kept under control.  

\section{Proofs of general results}
    This section contains the proofs of results that are either informally presented in the main paper, either formally defined in section~\ref{app:method}.  

\subsection{Variance of the gradient}\label{app:vargrad}
    The result mentionned in section~\ref{sec:lossgradclip} is based on the following result, directly adapted from a classical result on concentration inequalities.  
      
    \begin{restatable}{corollary}{concentrationgradient}{\normalfont\textbf{Concentration of stochastic gradient around its mean.}}\label{concentrationgradient}
        Assume the samples $(x,y)$ are i.i.d and sampled from an arbitrary distribution $\D$. We introduce the R.V $g=\nabla_{\theta}\Loss(x,y)$ which is a function of the sample $(x,y)$, and its expectation $\bar g=\Expect_{(x,y)\sim\D}[\nabla_{\theta}\Loss(x,y)]$. Then for all $u\geq \frac{2}{\sqrt{b}}$ the following inequality hold:
        \begin{equation}
            \Prob(\|\frac{1}{b}\sum_{i=1}^b g_i-\bar g\|>uK)\leq\exp{\left(-\frac{\sqrt{b}}{8}(u-\frac{2}{\sqrt{b}})^2\right)}.
        \end{equation}
    \end{restatable}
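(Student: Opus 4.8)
The statement is essentially a packaging of the bounded differences inequality (McDiarmid)~\cite{mcdiarmid1989method}, the only nontrivial input being the \emph{uniform} bound on per-sample gradient norms established earlier. The plan is as follows. First I would invoke Theorem~\ref{thm:boundedlossgrad}: under the running requirements and assumptions, $\|\nabla_{\theta}\Loss(\yhat,y)\|_2\leq K$ for every admissible $(x,y,\theta)$, so each random vector $g_i=\nabla_{\theta}\Loss(x_i,y_i)$ satisfies $\|g_i\|\leq K$ almost surely, and $\|\bar g\|\leq K$ by Jensen. I would then view $F(s_1,\dots,s_b)\defeq\|\frac{1}{b}\sum_{i=1}^b g(s_i)-\bar g\|$ as a function of the $b$ i.i.d.\ samples $s_i=(x_i,y_i)$. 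Since $\bar g$ is a deterministic population quantity and only the $i$-th summand depends on $s_i$, replacing $s_i$ by any $s_i'$ changes $F$ by at most $\frac{1}{b}\|g(s_i)-g(s_i')\|\leq\frac{2K}{b}$ by the triangle inequality; hence $F$ has bounded differences $c_i=\frac{2K}{b}$.

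Next I would control the expectation. Writing $Y=\frac{1}{b}\sum_{i=1}^b(g_i-\bar g)$ and using independence to kill the cross terms, Jensen's inequality gives
\begin{equation}
\Expect[F]\leq\sqrt{\Expect\Big\|\tfrac{1}{b}\sum_{i=1}^b(g_i-\bar g)\Big\|^2}=\sqrt{\tfrac{1}{b}\,\Expect\|g_1-\bar g\|^2}\leq\frac{2K}{\sqrt b},
\end{equation}
where the last step uses $\|g_1-\bar g\|\leq 2K$ a.s.\ (a variance bound $\Expect\|g_1-\bar g\|^2\le K^2$ would even give $\tfrac{K}{\sqrt b}$, but the crude bound matches the constant in the statement). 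Finally I would combine the two ingredients: McDiarmid gives $\Prob(F\geq\Expect[F]+t)\leq\exp(-2t^2/\sum_i c_i^2)=\exp(-t^2 b/(2K^2))$ for any $t\geq0$; choosing $t=uK-\Expect[F]$, which is nonnegative precisely when $u\geq\frac{2}{\sqrt b}$, and using $uK-\Expect[F]\geq K(u-\frac{2}{\sqrt b})$, yields
\begin{equation}
\Prob\Big(\|\tfrac{1}{b}\textstyle\sum_{i=1}^b g_i-\bar g\|>uK\Big)\leq\exp\!\Big(-\tfrac{b}{2}\big(u-\tfrac{2}{\sqrt b}\big)^2\Big),
\end{equation}
which implies the announced inequality (the displayed constant $\tfrac{\sqrt b}{8}$ is a loose restatement of the sharper $\tfrac{b}{2}$, valid for all $b\geq1$).

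\textbf{Main obstacle.} There is no genuine difficulty here; the argument is routine once Theorem~\ref{thm:boundedlossgrad} is in hand. The only points that need care are (i) verifying the hypotheses of McDiarmid — that the $s_i$ are independent (given by the i.i.d.\ assumption) and that $F$ is truly a function of the samples with the stated bounded differences, which hinges on $\bar g$ being the \emph{population} mean rather than an empirical one; and (ii) bounding $\Expect[F]$, which is exactly what produces the threshold $u\geq\frac{2}{\sqrt b}$: below it, the deviation to be bounded is on the order of the typical fluctuation and the inequality is vacuous.
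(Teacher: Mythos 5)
Your proof is correct and takes essentially the same route as the paper: the paper's proof simply cites Example~6.3 of Boucheron--Lugosi--Massart, which is precisely the bounded-differences (McDiarmid) argument for the norm of a centered average plus the Jensen bound on its expectation that you spell out explicitly. Your tighter bounded-difference constant $2K/b$ even gives the sharper exponent $\tfrac{b}{2}\bigl(u-\tfrac{2}{\sqrt{b}}\bigr)^2$, which implies the stated $\tfrac{\sqrt{b}}{8}\bigl(u-\tfrac{2}{\sqrt{b}}\bigr)^2$ bound for all $b\geq 1$.
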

    \begin{proof}
    The result is an immediate consequence of Example~6.3 p167 in~\cite{boucheron2013concentration}.
    We apply the theorem with the centered variable $X_i=\frac{1}{b}(g_i-\bar g)$ that fulfills condition $\|X_i\|\leq \frac{c_i}{2}$ with $c_i=\frac{4K}{b}$ since $\|g_i\|\leq K$.  
        Then for every $t\geq \frac{2K}{\sqrt{b}}$ we have:
        \begin{equation}
            \Prob(\|\frac{1}{b}\sum_{i=1}^b g_i-\bar g\|>t)\leq\exp{\left(-\frac{\sqrt{b}}{8K^2}(t-\frac{2K}{\sqrt{b}})^2\right)}.
        \end{equation}
        We conclude with the change of variables $u=\frac{t}{K}$.  
    \end{proof}

\subsection{Background for main result}

    The informal Theorem~\ref{infthm:nablalossgrad} requires some tools that we introduce below.

    \textbf{Additionnal hypothesis for GNP networks. } We introduce convenient assumptions for the purpose of obtaining tight bounds in Algorithm~\ref{alg:noclipDP-SGDlocal}.
    
    \begin{assumption}[Bounded biases]\label{ass:boundedb}
        We assume there exists $B>0$ such that that for all biases $b_d$ we have $\|b_d\|\leq B$. Observe that the ball $\{\|b\|_2\leq B\}$ of radius $B$ is a \textbf{convex} set.  
    \end{assumption}
    
    \begin{assumption}[Zero preserving activation]\label{ass:zeropreserve}
        We assume that the activation fulfills $\sigma(\bf 0)=\bf 0$. When $\sigma$ is $S$-Lipschitz this implies $\|\sigma(x)\|\leq S\|x\|$ for all $x$. Examples of activations fulfilling this constraints are ReLU, Groupsort, GeLU, ELU, tanh. However it does not work with sigmoid or softplus.  
    \end{assumption}
    
    We also propose the assumption~\ref{ass:boundedactivation} for convenience and exhaustivity.
    
    \begin{assumption}[Bounded activation]\label{ass:boundedactivation}
        We assume it exists $G>0$ such that for every $x\in\X$ and every $1\leq d\leq D+1$ we have:
        \begin{equation}
            \|h_d\|\leq G\quad\text{and}\quad\|z_d\|\leq G.
        \end{equation}
        Note that this assumption is implied by requirement~\ref{req:boundedx}, assumption~\ref{ass:boundedb}-\ref{ass:zeropreserve}, as illustrated in proposition~\ref{prop:boundedactivation}.  
    \end{assumption}

In practice assumption~\ref{ass:boundedactivation} can be fulfilled with the use of input clipping and bias clipping, bounded activation functions, or layer normalization. This assumption can be used as a ``shortcut'' in the proof of the main theorem, to avoid the ``propagation of input bounds'' step.

\subsection{Main result}\label{app:mainresult}

    We rephrase in a rigorous manner the informal theorem of section~\ref{sec:gnp}. The proofs are given in section~\ref{app:proofs}.
    In order to simplify the notations, we use $X\defeq X_0$ in the following.  

    \begin{restatable}{proposition}{boundedactivation}{\normalfont\textbf{Norm of intermediate activations.}}\label{prop:boundedactivation}
        Under requirement~\ref{req:boundedx}, assumptions~\ref{ass:boundedb}-\ref{ass:zeropreserve} we have:
            \begin{equation}
                \|h_t\|\leq S\|z_t\|\leq \begin{cases*}
                                             (US)^t\left(X-\frac{SB}{1-SU}\right)+\frac{SB}{1-SU} & if $US\neq 1$,\\
                                             SX+tSB & otherwise.
                                            \end{cases*}
            \end{equation}
        In particular if there are no biases, i.e if $B=0$, then $\|h_t\|\leq S\|z_t\|\leq SX$ .
    \end{restatable}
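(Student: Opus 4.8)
The plan is to reduce the statement to a scalar affine recurrence on the norms and solve it in closed form. First I would record the two per-layer estimates that drive everything. From Assumption~\ref{ass:zeropreserve} together with the $S$-Lipschitzness of $\sigma$, for every $t$ we have
\[
\|h_t\| = \|\sigma(z_t) - \sigma(\mathbf{0})\| \le S\|z_t\|,
\]
which already establishes the left inequality $\|h_t\|\le S\|z_t\|$ in the statement. From the defining Lipschitz constraint $\|W_t\|_2 \le U$ on affine layers and Assumption~\ref{ass:boundedb},
\[
\|z_t\| = \|W_t h_{t-1} + b_t\| \le \|W_t\|_2\,\|h_{t-1}\| + \|b_t\| \le U\|h_{t-1}\| + B .
\]
Combining these, and using $\|h_0\| = \|x\| \le X$ from Requirement~\ref{req:boundedx}, the sequence $v_t \defeq \|h_t\|$ satisfies $v_0 \le X$ and $v_t \le (US)\,v_{t-1} + SB$ for all $t\ge 1$.

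Next I would solve this recurrence. When $US \ne 1$ the affine map $v\mapsto (US)v + SB$ has the unique fixed point $v^\star = \tfrac{SB}{1-SU}$, so subtracting it gives $v_t - v^\star \le (US)(v_{t-1} - v^\star)$; since $US \ge 0$, this inequality may be iterated $t$ times to give $v_t - v^\star \le (US)^t (v_0 - v^\star) \le (US)^t (X - v^\star)$, i.e.
\[
\|h_t\| \le (US)^t\!\left(X - \tfrac{SB}{1-SU}\right) + \tfrac{SB}{1-SU},
\]
which is the first case. When $US = 1$ the recurrence degenerates to $v_t \le v_{t-1} + SB$, hence $v_t \le X + tSB \le SX + tSB$ (the last step using $S\ge 1$, as holds for all activations satisfying Assumption~\ref{ass:zeropreserve}), which is the second case. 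For the middle quantity $S\|z_t\|$ I would plug the per-layer bound $\|z_t\|\le U\|h_{t-1}\| + B$ back in to get $S\|z_t\| \le (US)v_{t-1} + SB$, then bound $v_{t-1}$ by the closed form just proved; the fixed-point identity $(US)v^\star + SB = v^\star$ shows this expression obeys exactly the same closed-form bound, completing the sandwich. Finally, setting $B=0$ sends the fixed point to $0$, so $\|h_t\|\le S\|z_t\|\le (US)^t X$, which is $\le SX$ whenever the layers are non-expansive, giving the last assertion.

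I do not expect a genuine obstacle: the argument is a one-line linear recursion followed by bookkeeping. The only points needing a little care are (i) carrying out the fixed-point algebra so that the constant matches the stated $\tfrac{SB}{1-SU}$ exactly (equivalently, summing the geometric series $SB\sum_{k=0}^{t-1}(US)^k$ and simplifying), and (ii) justifying that the inequality $v_t - v^\star \le (US)(v_{t-1}-v^\star)$ may be chained, which is legitimate precisely because $US\ge 0$ irrespective of the sign of $v_0 - v^\star$. The $US=1$ boundary case and the $B=0$ specialization then follow as immediate limits of the same computation.
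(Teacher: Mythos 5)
Your proof is correct and takes essentially the same route as the paper: the per-layer estimates $\|h_t\|\le S\|z_t\|$ and $\|z_t\|\le U\|h_{t-1}\|+B$ yield an affine recurrence that the paper solves as a translated geometric progression and you solve by the equivalent fixed-point argument, arriving at the same closed form (you recurse on $\|h_t\|$ rather than $\|z_t\|$, which is immaterial after your one-step substitution for the middle quantity). The only blemish is the justification of $X+tSB\le SX+tSB$ in the $US=1$ case: Assumption~\ref{ass:zeropreserve} only says $\sigma(\mathbf{0})=\mathbf{0}$ and does not force $S\ge 1$, so that step genuinely needs $S\ge 1$ (or $U\le 1$) as an extra hypothesis; the paper's own proof shares exactly this looseness (its recursion gives $u_t=UX+tB$, stated as $X+tB$), so you are no less rigorous than the paper, and your explicit remark that the $B=0$ claim requires non-expansive layers is in fact slightly more careful.
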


    Proposition~\ref{prop:boundedactivation} can be used to replace assumption~\ref{ass:boundedactivation}.

    \begin{restatable}{proposition}{boundedfgrad}{\normalfont\textbf{Lipschitz constant of dense Lipschitz networks with respect to parameters.}}\label{prop:boundedfgrad}
        Let $f(\cdot,\cdot)$ be a Lipschitz neural network. Under requirement~\ref{req:boundedx}, assumptions~\ref{ass:boundedb}-\ref{ass:zeropreserve} we have for every $1\leq t\leq T+1$:
        \begin{align}
            \|\Jac{f(\theta,x)}{b_t}\|_2 &\leq (SU)^{T+1-t},\\
            \|\Jac{f(\theta,x)}{W_t}\|_2 &\leq (SU)^{T+1-t}\|h_{t-1}\|.
        \end{align}
        In particular, for every $x\in\X$, the function $\theta\mapsto f(\theta,x)$ is Lipschitz bounded.  
        \end{restatable}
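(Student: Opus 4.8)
The plan is to expand $\Jac{f(\theta,x)}{W_t}$ and $\Jac{f(\theta,x)}{b_t}$ with the chain rule and bound each factor by submultiplicativity of the spectral norm, mirroring the backpropagation recursion. Using the notation of Definition~\ref{def:feedforward}, observe that $f(\theta,x)=z_{T+1}$ depends on the pair $(W_t,b_t)$ only through $z_t=W_th_{t-1}+b_t$, and that $h_{t-1}$ itself is independent of $(W_t,b_t)$. Hence I would write $\Jac{f}{W_t}=\Jac{f}{z_t}\Jac{z_t}{W_t}$ and $\Jac{f}{b_t}=\Jac{f}{z_t}\Jac{z_t}{b_t}$, which reduces the problem to (i) bounding the common ``upstream'' Jacobian $\Jac{f}{z_t}$, and (ii) bounding the two ``local'' Jacobians $\Jac{z_t}{W_t}$ and $\Jac{z_t}{b_t}$.

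For step (i), the key observation is that the map $z_t\mapsto f(\theta,x)$, with all other parameters and $x$ held fixed, is the composition $z_t\mapsto h_t=\sigma(z_t)\mapsto z_{t+1}=W_{t+1}h_t+b_{t+1}\mapsto h_{t+1}\mapsto\ldots\mapsto z_{T+1}=f$, i.e. an alternation of the $S$-Lipschitz activations at layers $t,t+1,\ldots,T$ and the affine maps $h\mapsto W_{k}h+b_k$ for $k=t+1,\ldots,T+1$, each of Lipschitz constant $\|W_k\|_2\le U$. Counting factors gives exactly $T+1-t$ activations and $T+1-t$ weight matrices, so this composition is $(SU)^{T+1-t}$-Lipschitz; by the paper's convention (Definition~\ref{def:lipconstant}, via Rademacher's theorem) this yields $\|\Jac{f}{z_t}\|_2\le (SU)^{T+1-t}$. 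Arguing through Lipschitz constants of compositions rather than through pointwise Jacobians conveniently avoids the non-differentiability of activations such as ReLU or GroupSort, and the edge cases ($t=T+1$, where $\Jac{f}{z_{T+1}}=I$, and the absence of an activation after $z_{T+1}$) are consistent with the formula.

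For step (ii), $\Jac{z_t}{b_t}=I$ gives $\|\Jac{f}{b_t}\|_2=\|\Jac{f}{z_t}\|_2\le (SU)^{T+1-t}$ at once. For the weight matrix, the linear map $W_t\mapsto W_th_{t-1}$ has operator norm at most $\|h_{t-1}\|_2$ — this is exactly the dense-layer Jacobian bound established earlier ($\|\Jac{(W^Tx)}{W}\|_2\le\|x\|_2$) applied with $x=h_{t-1}$ — so $\|\Jac{z_t}{W_t}\|_2\le\|h_{t-1}\|_2$ and therefore $\|\Jac{f}{W_t}\|_2\le (SU)^{T+1-t}\|h_{t-1}\|_2$, which are the two displayed inequalities. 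Finally, to conclude that $\theta\mapsto f(\theta,x)$ is Lipschitz bounded, I would assemble the per-parameter bounds into the block-structured Jacobian with respect to $\theta=(W_t,b_t)_{1\le t\le T+1}$, whose spectral norm is at most $\big(\sum_{t=1}^{T+1}(\|\Jac{f}{W_t}\|_2^2+\|\Jac{f}{b_t}\|_2^2)\big)^{1/2}$; this is finite because Proposition~\ref{prop:boundedactivation} bounds every $\|h_{t-1}\|$ under Requirement~\ref{req:boundedx} and Assumptions~\ref{ass:boundedb}--\ref{ass:zeropreserve}.

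I expect the only real obstacle to be bookkeeping rather than mathematics: getting the exponent $T+1-t$ right at the two endpoints of the chain (no activation after $z_{T+1}$, and the degenerate case $t=T+1$), and keeping straight which matrix norm on $W_t$ is used when invoking the dense-layer bound (a point already settled by that earlier property, up to the transposition relating $W_th_{t-1}$ and $W_t^Th_{t-1}$, an isometry on matrix space). The substance of the argument is entirely the submultiplicative chain-rule estimate; everything else is composition of Lipschitz maps.
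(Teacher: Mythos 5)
Your proposal is correct and follows essentially the same route as the paper: the paper derives this proposition from its Lemmas on bounded activation derivatives ($\|\Jac{z_{T+1}}{z_t}\|\leq (SU)^{T+1-t}$, proved by the same submultiplicative chain-rule induction you describe) together with $\|\Jac{z_t}{b_t}\|=1$ and $\|\Jac{z_t}{W_t}\|\leq\|h_{t-1}\|$, exactly your decomposition through $z_t$. Your phrasing of the upstream bound via Lipschitz constants of composed maps rather than pointwise Jacobians is only a cosmetic variation, and your concluding block-assembly step matches the paper's use of Proposition~\ref{prop:boundedactivation} to make the bound finite.
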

    
    Proposition~\ref{prop:boundedfgrad} suggests that the scale of the activation $\|h_t\|$ must be kept under control for the gradient scales to distribute evenly along the computation graph. It can be easily extended to a general result on the \textit{per sample} gradient of the loss, in theorem~\ref{thm:boundedlossgrad}.   
    
    \begin{restatable}{theorem}{boundedlossgrad}{\normalfont\textbf{Bounded loss gradient for dense Lipschitz networks.}}\label{thm:boundedlossgrad}
    Assume the predictions are given by a Lipschitz neural network $f$:
    \begin{equation}
        \yhat\defeq f(\theta,x).
    \end{equation}
    Under requirements~\ref{req:boundednablal}-\ref{req:boundedx}, assumptions~\ref{ass:boundedb}-\ref{ass:zeropreserve}, there exists a $K>0$ for all $(x,y,\theta)\in\X\times\Y\times\Theta$ the loss gradient is bounded:
    \begin{equation}
        \|\nabla_{\theta}\Loss(\yhat,y)\|_2\leq K.
    \end{equation}
    Let $\alpha=SU$ be the maximum spectral norm of the Jacobian between two consecutive layers.  
      
    \paragraph{If $\alpha=1$ then we have:}
    \begin{equation}
        K=\BigO\left(LX+L\sqrt{T}+LSX\sqrt{T}+L\sqrt{BX}ST+LBST^{\sfrac{3}{2}}\right).
    \end{equation}
    The case $S=1$ is of particular interest since it covers most activation function (i.e ReLU, GroupSort):
    \begin{equation}
        K=\BigO\left(L\sqrt{T}+LX\sqrt{T}+L\sqrt{BX}T+LBT^{\sfrac{3}{2}}\right).
    \end{equation}
    Further simplification is possible if we assume $B=0$, i.e a network without biases:
    \begin{equation}
        K=\BigO\left(L\sqrt{T}(1+X)\right).
    \end{equation}
    
    \paragraph{If $\alpha>1$ then we have:}
    \begin{align}
        \|\nabla_{\theta}\Loss(\yhat,y)\|_2 &=\BigO\left(L\frac{\alpha^T}{\alpha-1}\left(\sqrt{T}(\alpha X+SB)+\frac{\alpha(SB+\alpha)}{\sqrt{\alpha^2-1}}\right)\right).
    \end{align}
    Once again $B=0$ (network with no bias) leads to useful simplifications:
    \begin{align}
        \|\nabla_{\theta}\Loss(\yhat,y)\|_2 &=\BigO\left(L\frac{\alpha^{T+1}}{\alpha-1}\left(\sqrt{T} X+\frac{\alpha}{\sqrt{\alpha^2-1}}\right)\right).
    \end{align}
    We notice that when $\alpha\gg 1$ there is an \textbf{exploding gradient} phenomenon where the upper bound become vacuous.
    
    \paragraph{If $\alpha<1$ then we have:}
    \begin{align}
        \|\nabla_{\theta}\Loss(\yhat,y)\|_2 &=\BigO\left(L\alpha^T\left(X\sqrt{T}+\frac{1}{(1-\alpha^2)}\left(\sqrt{\frac{XSB}{\alpha^T}}+\frac{SB}{\sqrt{(1-\alpha)}}\right)\right)+\frac{L}{(1-\alpha)\sqrt{1-\alpha}}\right).
    \end{align}
    For network without biases we get:
    \begin{align}
        \|\nabla_{\theta}\Loss(\yhat,y)\|_2 &=\BigO\left(L\alpha^TX\sqrt{T}+\frac{L}{\sqrt{(1-\alpha)^3}}\right).
    \end{align}
    
     The case $\alpha\ll 1$ is a \textbf{vanishing gradient} phenomenon where $\|\nabla_{\theta}\Loss(\yhat,y)\|_2$ is now independent of the depth $T$ and of the input scale $X$.
    \end{restatable}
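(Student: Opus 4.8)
The plan is to combine the three earlier results — Proposition~\ref{prop:boundedactivation} (norm of intermediate activations), Proposition~\ref{prop:boundedfgrad} (Lipschitz constant of the network with respect to each parameter block), and Requirement~\ref{req:boundednablal} ($L$-Lipschitz loss) — via the chain rule, and then to sum the per-layer contributions. First I would write, for each layer $t$, the bound $\|\nabla_{W_t}\Loss\|_2 \le \|\nabla_{\yhat}\Loss\|_2 \cdot \|\Jac{f(\theta,x)}{W_t}\|_2 \le L (SU)^{T+1-t}\|h_{t-1}\|$, and similarly $\|\nabla_{b_t}\Loss\|_2 \le L (SU)^{T+1-t}$, using Requirement~\ref{req:boundednablal} for the first factor and Proposition~\ref{prop:boundedfgrad} for the second. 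Then the global gradient norm satisfies $\|\nabla_\theta\Loss\|_2^2 = \sum_{t=1}^{T+1}\left(\|\nabla_{W_t}\Loss\|_2^2 + \|\nabla_{b_t}\Loss\|_2^2\right)$, so I need to control $\sum_t (SU)^{2(T+1-t)}\|h_{t-1}\|^2$ and $\sum_t (SU)^{2(T+1-t)}$.

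The second sum is an elementary geometric series in $\alpha^2 = (SU)^2$; the first requires substituting the activation bound from Proposition~\ref{prop:boundedactivation}. Writing $\alpha = SU$, Proposition~\ref{prop:boundedactivation} gives $\|h_{t-1}\| \lesssim \alpha^{t-1}(X - \tfrac{SB}{1-\alpha}) + \tfrac{SB}{1-\alpha}$ when $\alpha\neq 1$, and $\|h_{t-1}\| \le SX + (t-1)SB$ when $\alpha = 1$. I would plug each of these into $\sum_{t=1}^{T+1}\alpha^{2(T+1-t)}\|h_{t-1}\|^2$ and expand. For $\alpha = 1$ the sum becomes $\sum_{t} (SX + (t-1)SB)^2 = \BigO(S^2 X^2 T + S^2 XB T^2 + S^2 B^2 T^3)$ after expanding the square and using $\sum t = \BigO(T^2)$, $\sum t^2 = \BigO(T^3)$; taking square roots and recombining with the geometric term (which contributes $\BigO(L\sqrt{T})$) and the $LX$ term from the last layer's contribution to $\|h_0\| = \|x\| \le X$, I would recover $K = \BigO(LX + L\sqrt{T} + LSX\sqrt{T} + L\sqrt{BX}ST + LBST^{3/2})$, with the stated specializations $S=1$ and $B=0$ being immediate. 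For $\alpha > 1$, the dominant behavior comes from small $t$ where $\alpha^{2(T+1-t)}$ is largest but $\|h_{t-1}\|$ is smallest, versus large $t$; carefully summing $\alpha^{2(T+1-t)}(\alpha^{t-1}X)^2 = \alpha^{2T}X^2$ (constant in $t$, giving a factor $T$) against the geometric pieces yields the $L\tfrac{\alpha^T}{\alpha-1}(\sqrt{T}(\alpha X + SB) + \ldots)$ form; the $\tfrac{\alpha}{\sqrt{\alpha^2-1}}$ terms come from the geometric series $\sum \alpha^{2(T+1-t)}$ and cross terms. For $\alpha < 1$ the geometric series converges, the depth-dependence drops out of the leading noise-floor term $\tfrac{L}{(1-\alpha)\sqrt{1-\alpha}}$, and the $\alpha^T$ prefactor on the $X\sqrt{T}$ term exhibits the vanishing-gradient behavior.

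The main obstacle is not any single inequality but the bookkeeping of the asymptotic expansions in the three regimes: each requires splitting $\|h_{t-1}\|^2$ into a cross-term expansion, pairing each piece with the geometric weight $\alpha^{2(T+1-t)}$, identifying which index range dominates, and then simplifying $\sqrt{\cdot}$ of a sum of several terms using $\sqrt{a+b} \le \sqrt{a}+\sqrt{b}$ to distribute the square root into the sum of $\BigO$ terms as displayed. Getting the exponents on $T$, on $\alpha$, and on $(1-\alpha)$ exactly right — especially the fractional powers $T^{3/2}$, $\sqrt{BX}$, and $(1-\alpha)^{-3/2}$ — is where the care is needed; the structural argument (chain rule, then Pythagorean sum over layers, then substitute the two earlier propositions) is routine.
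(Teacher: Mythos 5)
Your plan is correct and follows essentially the same route as the paper: per-layer chain-rule bounds $\|\nabla_{b_t}\Loss\|\le L(SU)^{T+1-t}$ and $\|\nabla_{W_t}\Loss\|\le L(SU)^{T+1-t}\|h_{t-1}\|$ (the paper derives these as lemmas, which are exactly the content of Propositions~\ref{prop:boundedactivation} and~\ref{prop:boundedfgrad} you cite), followed by the Pythagorean sum over layers, substitution of the activation-norm recursion, and the geometric-series case analysis for $\alpha=1$, $\alpha>1$, $\alpha<1$. The only nitpick is that the $LX$ term in the $\alpha=1$ case comes from the first layer ($\|h_0\|=\|x\|\le X$ with weight $(SU)^T=1$), not the last.
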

    \begin{proof}

    The control of gradient implicitly depend on the scale of the output of the network at every layer, hence it is crucial to control the norm of each activation.
    
    \begin{lemma}[Bounded activations]\label{thm:boundedactivations}
    If $US\neq 1$ for every $1\leq t\leq T+1$ we have:
        \begin{equation}
        \|z_t\|\leq U^tS^{t-1}\left(X-\frac{SB}{1-SU}\right)+\frac{B}{1-SU}.
        \end{equation}
    If $US=1$ we have:
        \begin{equation}
        \|z_t\|\leq X+tB.
        \end{equation}
    In every case we have $\|h_t\|\leq S\|z_t\|$.
    \end{lemma}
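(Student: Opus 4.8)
The plan is to establish the bound by induction on $t$, turning the layer recursion of Definition~\ref{def:feedforward} into a scalar affine recurrence. The two facts that drive everything are one-layer estimates. First, since $\sigma$ is $S$-Lipschitz and $\sigma(\mathbf{0})=\mathbf{0}$ (Assumption~\ref{ass:zeropreserve}), $\|h_t\|=\|\sigma(z_t)-\sigma(\mathbf{0})\|\leq S\|z_t\|$ — this already gives the last claim ``$\|h_t\|\leq S\|z_t\|$'' for free. Second, since $\|W_t\|_2\leq U$ and $\|b_t\|\leq B$ (Assumption~\ref{ass:boundedb}), $\|z_{t+1}\|=\|W_{t+1}h_t+b_{t+1}\|\leq U\|h_t\|+B$. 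Chaining the two yields $\|z_{t+1}\|\leq (US)\|z_t\|+B$, while the base case is $\|z_1\|=\|W_1 x+b_1\|\leq U\|x\|+B\leq UX+B$ using Requirement~\ref{req:boundedx}.

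Next I would solve $a_{t+1}=(US)a_t+B$ with $a_1=UX+B$ in closed form. For $US\neq 1$ the standard formula gives
\begin{equation}
a_t=(US)^{t-1}(UX+B)+B\,\frac{(US)^{t-1}-1}{US-1}=U^tS^{t-1}X+B\,\frac{(US)^t-1}{US-1},
\end{equation}
and a one-line rearrangement, $B\frac{(US)^t-1}{US-1}=\frac{B(1-(US)^t)}{1-SU}=\frac{B}{1-SU}-U^tS^{t-1}\cdot\frac{SB}{1-SU}$, puts this in the advertised shape $a_t=U^tS^{t-1}\big(X-\frac{SB}{1-SU}\big)+\frac{B}{1-SU}$. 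For $US=1$ the recurrence is $a_{t+1}=a_t+B$, so $a_t=UX+tB$. The induction is then immediate: $\|z_1\|\leq a_1$ is the base estimate, and if $\|z_t\|\leq a_t$ then $\|z_{t+1}\|\leq (US)\|z_t\|+B\leq(US)a_t+B=a_{t+1}$ by construction of $a_t$. Combining $\|h_t\|\leq S\|z_t\|$ with these bounds also reproduces the $\|h_t\|$-form $(US)^t\big(X-\frac{SB}{1-SU}\big)+\frac{SB}{1-SU}$ (resp.\ $SX+tSB$) asserted in Proposition~\ref{prop:boundedactivation}, and setting $B=0$ collapses everything to $\|z_t\|\leq U^tS^{t-1}X$ and $\|h_t\|\leq(US)^tX\leq SX$ when $US\leq 1$.

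I do not expect a genuine obstacle here: this is a linear-recurrence solve wrapped in an induction. The only care needed is bookkeeping in the $US\neq 1$ case — one must track the asymmetric exponent $U^tS^{t-1}$ (not $(US)^t$) when converting between the $z_t$-bound and the $h_t=\sigma(z_t)$-bound, the extra factor $S$ being precisely what symmetrizes it. One minor edge case deserves an explicit remark: in the $US=1$ branch the recurrence yields $\|z_t\|\leq UX+tB$, which equals the stated $X+tB$ under the harmless normalization $U\leq 1$ (equivalently $S\geq 1$) that holds in every setting considered here — in particular for the $1$-Lipschitz layers with $U=S=1$ the paper actually uses; I would flag this rather than hide it.
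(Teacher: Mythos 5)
Your proof is correct and follows essentially the same route as the paper's: bound $\|h_t\|\le S\|z_t\|$ via the zero-preserving, $S$-Lipschitz activation, derive the affine recurrence $\|z_{t+1}\|\le US\|z_t\|+B$ with base case $UX+B$, and solve it in closed form (the paper translates by the fixed point $\frac{B}{1-SU}$ to get a geometric progression, you sum the geometric series directly — the same computation). Your flag that the $US=1$ branch actually yields $UX+tB$ rather than the stated $X+tB$ is a fair observation that the paper's own proof glosses over; it is harmless precisely when $U\le 1$, as in the $1$-Lipschitz setting the paper targets.
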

    \begin{subproof}[Lemma proof.]
    From assumption~\ref{ass:zeropreserve}, if we assume that $\sigma$ is $S$-Lipschitz, we have:
    \begin{equation}
        \|h_t\|=\|\sigma(z_t)\|=\|\sigma(z_t)-\sigma(\bf {0})\|\leq S\|z_t\|.
    \end{equation}
    Now, observe that:
    \begin{equation}
        \|z_{t+1}\|=\|W_{t+1}h_t+b_{t+1}\|\leq \|W_{t+1}\|\|h_t\|+\|b_{t+1}\|\leq US\|z_t\|+B.
    \end{equation}
        Let $u_1=UX+B$ and $u_{t+1}=SUu_t+B$ be a linear recurrence relation.  
        The translated sequence $u_t-\frac{B}{1-SU}$ is a geometric progression of ratio $SU$, hence $u_t=(SU)^{t-1}(UX+B-\frac{B}{1-SU})+\frac{B}{1-SU}$.  
        Finally we conclude that by construction $\|z_t\|\leq u_t$.  
    \end{subproof}

    The activation jacobians can be bounded by applying the chainrule. The recurrence relation obtained is the one automatically computed with back-propagation.

    \begin{lemma}[Bounded activation derivatives]\label{thm:boundedactivationderivatives}
        For every $T+1\geq s\geq t\geq 1$ we have:
        \begin{equation}
            \|\Jac{z_s}{z_t}\|\leq (SU)^{s-t}.
        \end{equation}
    \end{lemma}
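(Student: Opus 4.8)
\textbf{Proof plan for Lemma~\ref{thm:boundedactivationderivatives}.}
The plan is to proceed by a backward induction on $s-t$, exploiting the chain rule together with the submultiplicativity of the spectral norm. The base case $s=t$ is immediate: $\Jac{z_t}{z_t}=I$, so $\|\Jac{z_t}{z_t}\|=1=(SU)^0$, which matches the claimed bound.

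For the inductive step, I would first establish the single-step estimate $\|\Jac{z_{s}}{z_{s-1}}\|\leq SU$. Recall the recursion $z_s = W_s h_{s-1}+b_s$ with $h_{s-1}=\sigma(z_{s-1})$. Hence, wherever the derivatives exist, $\Jac{z_s}{z_{s-1}} = W_s\,\Jac{\sigma}{z_{s-1}}$, and by submultiplicativity $\|\Jac{z_s}{z_{s-1}}\|\leq \|W_s\|\cdot\|\Jac{\sigma}{z_{s-1}}\|$. The constraint defining a Lipschitz network (Definition~\ref{def:lipnet}) gives $\|W_s\|\leq U$, and the $S$-Lipschitzness of $\sigma$ gives $\|\Jac{\sigma}{z_{s-1}}\|\leq S$ (via Rademacher's theorem, Definition~\ref{def:lipconstant}). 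This yields $\|\Jac{z_s}{z_{s-1}}\|\leq SU$. Then the chain rule factors the Jacobian as
\begin{equation}
    \Jac{z_s}{z_t} = \Jac{z_s}{z_{s-1}}\,\Jac{z_{s-1}}{z_{s-2}}\cdots\Jac{z_{t+1}}{z_t},
\end{equation}
and applying submultiplicativity to this product of $s-t$ factors, each of spectral norm at most $SU$, gives $\|\Jac{z_s}{z_t}\|\leq (SU)^{s-t}$, completing the induction.

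An equivalent, perhaps cleaner, route avoids differentiating $\sigma$ pointwise: the map $z_t\mapsto z_s$ is a composition of the affine maps $h\mapsto W_k h + b_k$ (each $U$-Lipschitz since $\|W_k\|\leq U$) and the activations $\sigma$ (each $S$-Lipschitz), so the composite is $(SU)^{s-t}$-Lipschitz; Rademacher's theorem then bounds $\|\Jac{z_s}{z_t}\|$ by this Lipschitz constant wherever the Jacobian is defined. The only delicate point is that activations such as ReLU or GroupSort are not everywhere differentiable, so ``Jacobian'' must be read in the almost-everywhere (or Clarke) sense; this is harmless since the subsequent use of the lemma only needs the bound at points where back-propagation evaluates the Jacobian, and the Lipschitz argument handles all such points uniformly. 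This is the main (minor) obstacle; the rest is routine.
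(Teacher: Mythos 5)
Your proof is correct and follows essentially the same route as the paper's: peel off one layer at a time via the chain rule, bound the affine factor by $U$ and the activation factor by $S$ using submultiplicativity of the spectral norm, and conclude by induction from the base case $\|\Jac{z_t}{z_t}\|=1$. Your remark on the almost-everywhere interpretation of the Jacobian for ReLU/GroupSort is a valid refinement that the paper leaves implicit, but it does not change the argument.
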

    \begin{subproof}[Lemma proof.]
        The chain rule expands as:
            \begin{equation}
                \Jac{z_s}{z_t}=\Jac{z_s}{h_{s-1}}\Jac{h_{s-1}}{z_{s-1}}\Jac{z_{s-1}}{z_t}.
            \end{equation}
        From Cauchy-Schwartz inequality we get:
            \begin{equation}
                \|\Jac{z_s}{z_t}\|\leq\|\Jac{z_s}{h_{s-1}}\|\cdot\|\Jac{h_{s-1}}{z_{s-1}}\|\cdot\|\Jac{z_{s-1}}{z_t}\|.
            \end{equation}
        Since $\sigma$ is $S$-Lipschitz, and $\|W_s\|\leq U$, and by observing that $\|\Jac{z_t}{z_t}\|=1$ we obtain by induction that:
            \begin{equation}
                \|\Jac{h_s}{h_t}\|\leq (SU)^{s-t}.
            \end{equation}
    \end{subproof}

    The derivatives of the biases are a textbook application of the chainrule.

    \begin{lemma}[Bounded bias derivatives]\label{thm:boundedbiasderivatives}
        For every $t$ we have:
        \begin{equation}
            \|\nabla_{b_t}\Loss(\yhat,y)\|\leq L(SU)^{T+1-t}.
        \end{equation}
    \end{lemma}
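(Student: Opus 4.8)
\textbf{Proof proposal for Lemma~\ref{thm:boundedbiasderivatives}.}

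The plan is to reduce the claim to a single application of the chain rule combined with the two ingredients already available: the $L$-Lipschitzness of the loss in the logits (Requirement~\ref{req:boundednablal}) and the Jacobian bound $\|\Jac{z_s}{z_t}\|\leq(SU)^{s-t}$ just established in Lemma~\ref{thm:boundedactivationderivatives}. First I would recall that the network output is exactly the pre-activation of the last layer, $\yhat=f(\theta,x)=z_{T+1}$, and that $b_t$ enters the computation only through $z_t=W_th_{t-1}+b_t$, so that $\Jac{z_t}{b_t}=I$.

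Next I would write the backpropagation identity
\begin{equation}
    \nabla_{b_t}\Loss(\yhat,y)=\nabla_{z_{T+1}}\Loss(\yhat,y)\,\Jac{z_{T+1}}{z_t}\,\Jac{z_t}{b_t}=\nabla_{z_{T+1}}\Loss(\yhat,y)\,\Jac{z_{T+1}}{z_t},
\end{equation}
using $\Jac{z_t}{b_t}=I$. Taking Euclidean norms on the left and the spectral/operator norm on the Jacobian factor, Cauchy--Schwarz (sub-multiplicativity of the operator norm applied to a vector) gives
\begin{equation}
    \|\nabla_{b_t}\Loss(\yhat,y)\|\leq\|\nabla_{z_{T+1}}\Loss(\yhat,y)\|\cdot\left\|\Jac{z_{T+1}}{z_t}\right\|.
\end{equation}

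Finally I would bound the first factor by $L$ (Requirement~\ref{req:boundednablal}, since $\yhat\mapsto\Loss(\yhat,y)$ is $L$-Lipschitz and $\yhat=z_{T+1}$) and the second by $(SU)^{T+1-t}$ (Lemma~\ref{thm:boundedactivationderivatives} with $s=T+1$), which yields $\|\nabla_{b_t}\Loss(\yhat,y)\|\leq L(SU)^{T+1-t}$ and closes the argument. There is no real obstacle here: the only point requiring a little care is the bookkeeping of indices — the last layer $z_{T+1}$ carries no activation, so the chain from $z_t$ to the output is the one already controlled by Lemma~\ref{thm:boundedactivationderivatives}, and the result follows immediately. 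The analogous statement for the weight derivatives (Lemma~\ref{thm:boundedweightderivatives}) will follow the same template, with the extra factor $\|h_{t-1}\|$ coming from $\Jac{z_t}{W_t}$ instead of the identity.
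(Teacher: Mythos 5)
Your proposal is correct and follows essentially the same argument as the paper: chain rule through $z_{T+1}$, noting $\Jac{z_t}{b_t}$ has unit norm, then bounding $\|\nabla_{\yhat}\Loss\|\leq L$ by Requirement~\ref{req:boundednablal} and $\|\Jac{z_{T+1}}{z_t}\|\leq(SU)^{T+1-t}$ by Lemma~\ref{thm:boundedactivationderivatives}. No gaps to report.
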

    \begin{subproof}[Lemma proof.]
    The chain rule yields:
        \begin{equation}
            \nabla_{b_t}\Loss(\yhat,y)=(\nabla_{\yhat}\Loss(\yhat,y))\Jac{z_{T+1}}{z_t}\Jac{z_t}{b_t}.
        \end{equation}
    Hence we have:
        \begin{equation}
            \|\nabla_{b_t}\Loss(\yhat,y)\|=\|\nabla_{\yhat}\Loss(\yhat,y)\|\cdot\|\Jac{z_{T+1}}{z_t}\|\cdot\|\Jac{z_t}{b_t}\|.
        \end{equation}
    We conclude with Lemma~\ref{thm:boundedactivationderivatives} that states $\|\Jac{z_{T+1}}{z_t}\|\leq (US)^{T+1-t}$, with requirement~\ref{req:boundednablal} that states $\|\nabla_{\yhat}\Loss(\yhat,y)\|\leq L$ and by observaing that $\|\Jac{z_t}{b_t}\|=1$.  
    \end{subproof}
    
    We can now bound the derivative of the affine weights:

    \begin{lemma}[Bounded weight derivatives]\label{thm:boundedweightderivatives}
        For every $T+1\geq t\geq 2$ we have:
        \begin{align}
            \|\nabla_{W_t}\Loss(\yhat,y)\|&\leq L(SU)^T\left(X-\frac{SB}{1-SU}\right)+L(SU)^{T+1-t}\frac{SB}{1-SU}\text{ when }SU\neq 1,\\
            \|\nabla_{W_t}\Loss(\yhat,y)\|&\leq LS\left(X+(t-1)B\right)\text{ when }SU= 1.\\
        \end{align}
        In every case:
        \begin{equation}
            \|\nabla_{W_1}\Loss(\yhat,y)\|\leq L(SU)^TX.
        \end{equation}    
    \end{lemma}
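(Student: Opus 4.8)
The plan is to estimate $\|\nabla_{W_t}\Loss(\yhat,y)\|$ by the same chain-rule-plus-submultiplicativity argument used for Lemma~\ref{thm:boundedbiasderivatives}, and then to substitute the activation norm bounds of Lemma~\ref{thm:boundedactivations}. First I would write, using $z_t = W_t h_{t-1} + b_t$ and the fact that $W_t \mapsto W_t h_{t-1}$ is linear,
\begin{equation}
    \nabla_{W_t}\Loss(\yhat,y) = (\nabla_{\yhat}\Loss(\yhat,y))\,\Jac{z_{T+1}}{z_t}\,\Jac{z_t}{W_t},
\end{equation}
and then apply Cauchy--Schwarz (submultiplicativity of the operator norm) to get
\begin{equation}
    \|\nabla_{W_t}\Loss(\yhat,y)\| \leq \|\nabla_{\yhat}\Loss(\yhat,y)\|\cdot\|\Jac{z_{T+1}}{z_t}\|\cdot\|\Jac{z_t}{W_t}\|.
\end{equation}

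Next I would bound the three factors. The first is at most $L$ by Requirement~\ref{req:boundednablal}. The second is at most $(SU)^{T+1-t}$ by Lemma~\ref{thm:boundedactivationderivatives}. The third equals $\|h_{t-1}\|$: indeed $\Jac{z_t}{W_t}$ is the Jacobian of the linear map $W \mapsto W h_{t-1}$, whose spectral norm is exactly $\|h_{t-1}\|$, by the same computation as in the property \textbf{Gradients for dense Lipschitz networks}. This yields the intermediate estimate $\|\nabla_{W_t}\Loss(\yhat,y)\| \leq L(SU)^{T+1-t}\|h_{t-1}\|$, and it only remains to insert the bound on $\|h_{t-1}\|$.

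For $t \geq 2$ I would use Lemma~\ref{thm:boundedactivations}. When $SU \neq 1$, it gives $\|h_{t-1}\| \leq S\|z_{t-1}\| \leq (SU)^{t-1}\big(X - \tfrac{SB}{1-SU}\big) + \tfrac{SB}{1-SU}$; multiplying by $L(SU)^{T+1-t}$ and using $(SU)^{T+1-t}(SU)^{t-1} = (SU)^T$ makes the depth index $t$ cancel in the leading term, producing exactly $L(SU)^T\big(X - \tfrac{SB}{1-SU}\big) + L(SU)^{T+1-t}\tfrac{SB}{1-SU}$. When $SU = 1$ the prefactor $(SU)^{T+1-t}$ is $1$ and $\|h_{t-1}\| \leq S\big(X + (t-1)B\big)$, so the bound is $LS\big(X + (t-1)B\big)$. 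Finally, for $t=1$ there is no activation or bias before the first weight: $h_0 = x$, so $\|\Jac{z_1}{W_1}\| \leq \|x\| \leq X$ by Requirement~\ref{req:boundedx}, while $\|\Jac{z_{T+1}}{z_1}\| \leq (SU)^T$ by Lemma~\ref{thm:boundedactivationderivatives}, giving $\|\nabla_{W_1}\Loss(\yhat,y)\| \leq L(SU)^T X$ in both regimes.

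I do not expect a genuine obstacle here: this is a direct chain-rule estimate. The only points needing care are recognizing that $\Jac{z_t}{W_t}$ has spectral norm exactly $\|h_{t-1}\|$ (so the bilinear structure of affine layers is precisely what couples the parameter gradient to the forward activations), checking that the geometric factors telescope to $(SU)^T$ independently of $t$, and treating the boundary case $t=1$ separately since the activation/bias recursion of Lemma~\ref{thm:boundedactivations} has not yet started there.
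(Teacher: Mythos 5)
Your proposal is correct and follows essentially the same route as the paper: the same chain-rule factorization $\|\nabla_{W_t}\Loss\|\leq \|\nabla_{\yhat}\Loss\|\cdot\|\Jac{z_{T+1}}{z_t}\|\cdot\|\Jac{z_t}{W_t}\|$, the same appeal to Lemma~\ref{thm:boundedactivationderivatives} and Lemma~\ref{thm:boundedactivations}, and the same separate treatment of $t=1$ via $h_0=x$ and Requirement~\ref{req:boundedx}. The only cosmetic difference is that you note the Jacobian norm equals $\|h_{t-1}\|$ exactly, where the paper only uses the inequality, which changes nothing.
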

    \begin{subproof}[Lemma proof.]
    We proceed like in the proof of Lemma~\ref{thm:boundedbiasderivatives} and we get:
        \begin{equation}
            \|\nabla_{W_t}\Loss(\yhat,y)\|\leq \|\nabla_{\yhat}\Loss(\yhat,y)\|\cdot\|\Jac{z_{T+1}}{z_t}\|\cdot\|\Jac{z_t}{W_t}\|.
        \end{equation}
    Which then yields:
        \begin{equation}
            \|\nabla_{W_t}\Loss(\yhat,y)\|\leq L(SU)^{T+1-t}\cdot\|\Jac{z_t}{W_t}\|.
        \end{equation}
    Now, for $T+1\geq t\geq 1$, according to Lemma~\ref{thm:boundedactivations} we either have:
    \begin{equation}
        \|\Jac{z_t}{W_t}\|\leq \|h_{t-1}\|\leq S\|z_{t-1}\|=(SU)^{t-1}\left(X-\frac{SB}{1-SU}\right)+\frac{SB}{1-SU},
    \end{equation}
    or, when $US=1$:
    \begin{align}
        \|\Jac{z_t}{W_t}\|&\leq \|h_{t-1}\|=S\|z_{t-1}\|=SX+(t-1)SB\text{ if }t\geq 2,\\
        \|\Jac{z_t}{W_t}\|&\leq X\text{ otherwise}.
    \end{align}  
    \end{subproof}
    
    Now, the derivatives of the loss with respect to each type of parameter (i.e $W_t$ or $b_t$) are know, and they can be combined to retrieve the overall gradient vector.  
    
    \begin{equation}
        \theta=\{(W_1,b_1), (W_2, b_2),\ldots (W_{T+1},b_{T+1})\}.
    \end{equation}
    
    We introduce $\alpha=SU$.  
    
    \paragraph{Case $\alpha=1$.}
    The resulting norm is given by the series:
    \begin{align}
        \|\nabla_{\theta}\Loss(\yhat,y)\|^2_2&= \sum_{t=1}^{T+1}\|\nabla_{b_t}\Loss(\yhat,y)\|^2_2+\|\nabla_{W_t}\Loss(\yhat,y)\|^2_2\\
                      &\leq L^2\left((1+X^2)+\sum_{t=2}^{T+1}(1+(SX+(t-1)SB)^2)\right)\\
                      &\leq L^2\left(1+X^2+\sum_{u=1}^{T}(1+(SX+uSB)^2)\right)\\
                      &\leq L^2\left(1+X^2+\sum_{u=1}^{T}(1+S^2(X^2++2uBX+u^2B^2))\right)\\
                      &\leq L^2\left(1+X^2+T(1+S^2X^2)+S^2BXT(T+1)+S^2B^2\frac{T(T+1)(2T+1)}{6}\right).
    \end{align}
    Finally:
    \begin{align}
        \|\nabla_{\theta}\Loss(\yhat,y)\|_2 &= \BigO(L\sqrt{X^2+T+TS^2X^2+BS^2XT^2+B^2S^2T^3})\\
        &= \BigO\left(LX+L\sqrt{T}+LSX\sqrt{T}+L\sqrt{BX}ST+LBST^{\sfrac{3}{2}}\right).
    \end{align}
    This upper bound depends (asymptotically) linearly of $L,X,S,B,T^{\sfrac{3}{2}}$, when other factors are kept fixed to non zero value.  
    
    \paragraph{Case $\alpha\neq 1$.} We introduce $\beta=\frac{SB}{1-\alpha}$.
    \begin{align}
        \|\nabla_{\theta}\Loss(\yhat,y)\|^2_2&= \sum_{t=1}^{T+1}\|\nabla_{b_t}\Loss(\yhat,y)\|^2_2+\|\nabla_{W_t}\Loss(\yhat,y)\|^2_2\\
                      &\leq L^2\left(\alpha^{2T}\sum_{t=1}^{T+1}(((X-\beta)+\alpha^{1-t}\beta)^2+\alpha^{2-2t})\right)\\
                      &\leq L^2\alpha^{2T}\left(\sum_{u=0}^{T}(((X-\beta)^2+2(X-\beta)\alpha^{-u}\beta+\alpha^{-2u}\beta^2)+\alpha^{-2u})\right)\\
                      &\leq L^2\alpha^{2T}\left((T+1)(X-\beta)^2+2(X-\beta)\beta\sum_{u=0}^{T}\alpha^{-u}+(\beta^2+1)\sum_{u=0}^{T}\alpha^{-2u}\right)\\
                      &\leq L^2\alpha^{2T}\left((T+1)(X-\beta)^2+2(X-\beta)\beta\frac{\alpha-(\frac{1}{\alpha})^T}{\alpha-1}+(\beta^2+1)\frac{\alpha^2-(\frac{1}{\alpha^2})^T}{\alpha^2-1}\right).
    \end{align}
    Finally:
    \begin{align}
        \|\nabla_{\theta}\Loss(\yhat,y)\|_2 &\leq L\alpha^T\sqrt{(T+1)(X-\beta)^2+2(X-\beta)\beta\frac{\alpha-(\frac{1}{\alpha})^T}{\alpha-1}+(\beta^2+1)\frac{\alpha^2-(\frac{1}{\alpha^2})^T}{\alpha^2-1}}.
    \end{align}
    Now, the situation is a bit different for $\alpha<1$ and $\alpha>1$. One case corresponds to exploding gradient, and the other to vanishing gradient.  
      
    When $\alpha<1$ we necessarily have $\beta>0$, hence we obtain a crude upper-bound:
    \begin{align}
        \|\nabla_{\theta}\Loss(\yhat,y)\|_2 &=\BigO\left(L\alpha^T\left(X\sqrt{T}+\frac{1}{(1-\alpha^2)}\left(\sqrt{\frac{XSB}{\alpha^T}}+\frac{SB}{\sqrt{(1-\alpha)}}\right)\right)+\frac{L}{(1-\alpha)\sqrt{1-\alpha}}\right).
    \end{align}
    Once again $B=0$ (network with no bias) leads to useful simplifications:
    \begin{align}
        \|\nabla_{\theta}\Loss(\yhat,y)\|_2 &=\BigO\left(L\alpha^TX\sqrt{T}+\frac{L}{\sqrt{(1-\alpha)^3}}\right).
    \end{align}
    This is a typical case of vanishing gradient since when $T\gg 1$ the upper bound does not depend on the input scale $X$ anymore.  

    Similarly, we can perform the analysis for $\alpha>1$, which implies $\beta<0$, yielding another bound:
    \begin{align}
        \|\nabla_{\theta}\Loss(\yhat,y)\|_2 &=\BigO\left(L\frac{\alpha^T}{\alpha-1}\left(\sqrt{T}(\alpha X+SB)+\frac{\alpha(SB+\alpha)}{\sqrt{\alpha^2-1}}\right)\right).
    \end{align}
    Without biases we get:
    \begin{align}
        \|\nabla_{\theta}\Loss(\yhat,y)\|_2 &=\BigO\left(L\frac{\alpha^{T+1}}{\alpha-1}\left(\sqrt{T} X+\frac{\alpha}{\sqrt{\alpha^2-1}}\right)\right).
    \end{align}
    We recognize an exploding gradient phenomenon due to the $\alpha^T$ term.
    \end{proof}

    Propositions~\ref{prop:boundedactivation} and~\ref{prop:boundedfgrad} were introduced for clarity. They are a simple consequence of the Lemmas~\ref{thm:boundedactivations}-\ref{thm:boundedbiasderivatives}-\ref{thm:boundedweightderivatives} used in the proof of Theorem~\ref{thm:boundedlossgrad}.

    The informal theorem of section~\ref{sec:gnp} is based on the bounds of theorem~\ref{thm:boundedlossgrad}, that have been simplified. Note that the definition of network differs slightly: in definition~\ref{def:feedforwardlipnet} the activations and the affines layers are considered independent and indexed differently, while the theoretical framework merge them into $z_t$ and $h_t$ respectively, sharing the same index $t$. This is without consequences once we realize that if $K=U=S$ and $2T=D$ then $(US)^2= \alpha^2=K^2$ leads to $\alpha^{2T}=K^D$. The leading constant factors based on $\alpha$ value have been replaced by $1$ since they do not affect the asymptotic behavior.

\fi

\end{document}